\newtheorem{theorem}{Theorem}
\newtheorem{lemma}{Lemma}
\newtheorem {problem}{Problem}
\newtheorem{assumption}{Assumption}
\theoremstyle{remark}
\newtheorem {remark}{Remark}
\renewenvironment{abstract}
  {\small\noindent\textbf{Abstract.}\normalsize\ignorespaces}
  {\par\noindent\ignorespacesafterend}
\renewcommand{\maketitle}{%
    \noindent\fcolorbox{red}{white}{
        \parbox{\textwidth}{%
            \color{red}\copyright2025 Elsevier. Accepted for \textit{Journal of The Franklin Institute}. Personal use of this material is permitted. Permission from Elsevier must be obtained for all other uses, in any current or future media, including reprinting/republishing this material for advertising or promotional purposes, creating new collective works, for resale or redistribution to servers or lists, or reuse of any copyrighted component of this work in other works.%
        }%
    }
    \vspace{1em} 
    \begin{center}%
        {\LARGE \@title \par}
        \vspace{0.5em}
        {\large \@author \par}
        \vspace{0.5em}
        {\large \@date}
    \end{center}%
    \vspace{1em} 
}
\begin{document}

\title{Flatness-based Finite-Horizon Multi-UAV Formation Trajectory Planning and Directionally Aware Collision Avoidance Tracking} 

\author[1]{Hossein B. Jond}
\author[2]{Logan Beaver}
\author[1]{Martin Jiroušek}
\author[3]{Naiemeh Ahmadlou}
\author[1,4]{Veli Bakırcıoğlu}
\author[1]{Martin Saska}
\affil[1]{Department of Cybernetics, Czech Technical University in Prague, Prague, Czechia}
\affil[2]{Mechanical and Aerospace Engineering, Old Dominion University, Norfolk, VA, USA}
\affil[3]{Mechanical Engineering Faculty, Sahand University of Technology, New Sahand Town, Tabriz, Iran}
\affil[4]{Aksaray University, Aksaray, Türkiye}

\date{}

\maketitle

\begin{abstract}
Optimal collision-free formation control of the unmanned aerial vehicle (UAV) is a challenge. The state-of-the-art optimal control approaches often rely on numerical methods sensitive to initial guesses. This paper presents an innovative collision-free finite-time formation control scheme for multiple UAVs leveraging the differential flatness of the UAV dynamics, eliminating the need for numerical methods. We formulate a finite-time optimal control problem to plan a formation trajectory for feasible initial states. This optimal control problem in formation trajectory planning involves a collective performance index to meet the formation requirements to achieve relative positions and velocity consensus. It is solved by applying Pontryagin’s principle. Subsequently, a collision-constrained regulating problem is addressed to ensure collision-free tracking of the planned formation trajectory. The tracking problem incorporates a directionally aware collision avoidance strategy that prioritizes avoiding UAVs in the forward path and relative approach. It assigns lower priority to those on the sides with an oblique relative approach, disregarding UAVs behind and not in the relative approach. The high-fidelity simulation results validate the effectiveness of the proposed control scheme. 
 
\textbf{Keywords:} Differential flatness; Formation control; Pontryagin’s principle 
\end{abstract}

\section{Introduction}\label{sec1}

Unmanned aerial vehicle (UAV) formation control has gained significant attention due to its broad applications, such as exploring hazardous environments~\cite{9492802}, transporting loads~\cite{app13020822}, conducting critical inspections~\cite{9213967}, and surveying historical structures~\cite{Saska2020}. A fundamental aspect of UAV formation control involves trajectory planning and trajectory tracking.  In formation trajectory planning, UAVs plan optimal or sub-optimal collision-free trajectories between their initial and terminal states~\cite{ALQUDSI2023104532,9359893}. Formation trajectory tracking ensures that UAVs converge to these pre-planned formation trajectories~\cite{4392487,HUANG20204034,10433443}. Formation trajectory planning and tracking present significant challenges due to the nonlinear dynamics of UAVs, inter-UAV collision avoidance, and dynamic environmental constraints. For quadrotor UAVs, their underactuated nature further complicates control and trajectory planning. These challenges become more pronounced in time-sensitive operations, where UAV groups must rapidly form formations, execute tasks, and ensure collision-free motion within finite time horizons~\cite{10171158}. 

A quadrotor UAV, with six degrees of freedom and only four control inputs, is inherently underactuated. Differential flatness resolves underactuation by representing states and inputs as functions of flat outputs (e.g., position and yaw) and their derivatives. This simplifies trajectory planning and control~\cite{Mellinger,9121690,10845852}. The application of differential flatness theory to quadrotor formation control has garnered research attention. The study in~\cite{AI201920} has tackled the leader-follower formation control for quadrotors using differential flatness theory and proposed an observer-based controller to reconstruct the leader's states and reject disturbances. In~\cite{10590881}, a leader-follower formation control strategy is presented for a team of quadrotors, integrating trajectory planning (using the flatness-based trajectory generation algorithm in~\cite{Mellinger}), distributed consensus, and trajectory tracking. In~\cite{9483176}, a leader-follower control strategy is developed for UAV formation in GPS-denied environments, incorporating feedback linearization, differential flatness, and linear-quadratic regulator (LQR) techniques. In~\cite{8429106}, a virtual structure-based approach is introduced to generate formation trajectories in a quadrotor swarm, where quadrotors utilize differential flatness for control. The virtual structure is integrated with multiple layered potential fields to ensure collision avoidance.

Networked UAVs can exchange information with neighbors, enabling consensus algorithms to facilitate collaboration. These algorithms can be integrated with leader-follower~\cite{8214966}, behavioral, or virtual structure methods for trajectory tracking or employed independently for trajectory planning and tracking in team formation control~\cite{WU2020106332}. Finite-horizon formation control unifies consensus-based coordination with an optimal control framework by optimizing a collective performance index of UAV states and control inputs within a constrained time frame~\cite{10171158}. Some research applies discrete-time optimal control with fixed-time performance guarantees, such as the method for the seeker stabilized platform (SSP) in~\cite{10731581}, which uses iterative updates and dynamic programming. In~\cite{10339271}, a framework is proposed that uses fixed-time performance functions and adaptive neural approximations to efficiently handle unknown nonlinearities and directions, avoiding the reliance on Nussbaum-type functions commonly used in existing strategies for prescribed performance control of discrete-time non-affine systems. In~\cite{10747736}, a discrete-time fuzzy-neural intelligent control approach with fixed-time prescribed performance is introduced, addressing the limitations of existing methods that struggle with time-varying sampling intervals. However, these approaches can be computationally intensive due to time-stepping algorithms~\cite{10731581}, adaptive neural approximations~\cite{10339271}, and fuzzy-neural design~\cite{10747736}. In contrast, our method uses continuous-time finite-horizon optimal control and leverages the differential flatness of UAV dynamics to enable efficient real-time coordination via linear-quadratic optimization.

Existing finite-horizon methods exhibit limitations due to their dependence on numerical optimization, leader-follower hierarchies, or pre-planned trajectory interpolations, each of which introduces practical constraints. Numerical methods for trajectory optimization require well-chosen initial guesses to converge to feasible solutions~\cite{8187728}. In the leader-follower approach~\cite{8214966}, the leader serves as a single point of failure, the behavioral approach lacks a formalized control framework, and the virtual structure approach suffers from limited adaptability~\cite{AI201920,10590881,8429106}. Trajectory interpolations using Bézier curves, high-order piecewise polynomials, or recursive constructions~\cite{s22051855, ALQUDSI2023104532, 9359893} require predefined initial and terminal states, and often waypoints. Existing collision-avoidance strategies often consider all neighboring UAVs equally without prioritizing evasive actions based on their relative motion~\cite{HUANG20204034,LEE201765}. This can lead to inefficient maneuvers and unnecessary trajectory deviations. Although infinite-horizon approaches provide stability guarantees~\cite{SHI20187626}, they lack responsiveness to time-sensitive formation control tasks.

This paper addresses the above challenges by leveraging the differential flatness property of quadrotor UAV dynamics to develop a finite-horizon formation trajectory planning and tracking control scheme based solely on the initial states of the UAVs. Specifically, the proposed approach involves the following features:
\begin{enumerate}
    \item Consensus-Based Formation Control: Unlike most prior works that rely on a virtual or actual leader as an external reference for formation control~\cite{tra_gen2023,AI201920,8854340,SHI20187626}, the proposed approach is entirely consensus-based. This improves robustness and scalability by eliminating potential single-point failures associated with leader-based strategies. Formation direction can be adjusted by incorporating the coordinates of the target area as a deviation term in the performance index, allowing the UAV group to maintain formation while approaching the target without relying on a designated leader.  
    
    \item Optimal Trajectory Planning Using Only Initial State Knowledge: The proposed framework optimally plans UAV trajectories using only their initial states by applying the necessary and sufficient conditions from the Pontryagin principle. This significantly reduces computational complexity, as trajectory planning and generation are performed only once. 
    \item Finite Horizon Real-Time Adaptability: Unlike infinite-horizon formulations~\cite{SHI20187626}, our approach employs a finite-horizon strategy, optimizing UAV motion within a limited time window to ensure timely responsiveness, adaptability to dynamic tasks and environments, and guaranteed finite-time convergence. 
    \item Directionally Aware Collision Avoidance: The proposed method prioritizes evasive actions based on relative motion, emphasizing avoidance maneuvers on the forward path and along the approach trajectory, thus reducing unnecessary deviations and ensuring efficient collision-free multi-UAV operations.
\end{enumerate}

The main contribution of this paper is a unified framework for UAV formation control that integrates the features discussed above. It ensures collision-free trajectory planning and tracking while maintaining scalability, computational efficiency, and real-time adaptability. Furthermore, the approach leverages the linear-quadratic paradigm from optimal control, which is known for its analytic tractability and computational efficiency~\cite{Engwerda.oca.2290}. 

The rest of this paper is organized as follows. Section~\ref{sec:pre} provides preliminaries. The collision-free finite-time formation control is defined in Section~\ref{sec:def}. Section~\ref{sec:main} recasts the problem definition as optimal control problems, where their solutions are investigated. The directionally aware collision avoidance strategy is introduced in Section~\ref{sec:collision}. Section~\ref{sec:sim} presents illustrative examples and simulation results for the (re)formation of a four-UAV and seven-UAV team. Section~\ref{sec:con} concludes the paper.

\section{Preliminaries}\label{sec:pre}

\subsection{graph theory}
A directed graph is a pair $\mathcal{G}(\mathcal{V},\mathcal{E})$ where $\mathcal{V}=\{1,\cdots,N\}$ is a finite set of vertices/nodes and $\mathcal{E}\subseteq \{(i,j):i,j \in \mathcal{V}\}$ is a set of directed edges/arcs. Each edge $ (i,j) \in \mathcal{E}$ represents a directional information flow and is assigned a positive scalar weight $\mu_{ij}>0$. A graph \( \mathcal{G} \) is connected if, for every pair of distinct vertices \( (i, j) \in \mathcal{V} \times \mathcal{V} \) with \( i \neq j \), there exists a path of undirected edges in \( \mathcal{E} \) that connects \( i \) to \( j \).

The graph Laplacian matrix is defined as  
\begin{equation*}  
    \mathbf{L} = \mathbf{D} \mathbf{W} \mathbf{D}^\top \in \mathbb{R}^{N},  
\end{equation*}  
where \( \mathbf{D} \in \mathbb{R}^{|\mathcal{V}| \times |\mathcal{E}|} \) is the incidence matrix of \( \mathcal{G} \), with its \( (i,j) \)th element equal to \( 1 \) if node \( i \) is the tail of edge \( j \), \( -1 \) if node \( i \) is the head, and \( 0 \) otherwise. Additionally, \( \mathbf{W} = \mathrm{diag}(\dots, \mu_{ij}, \dots) \) for all \( (i,j) \in \mathcal{E} \), where \( \mathbf{W} \in \mathbb{R}^{|\mathcal{E}|} \). Matrix $\mathbf{L}$ is symmetric ($\mathbf{L}=\mathbf{L}^\top$), positive semidefinite (PSD) ($\mathbf{L}\succeq \mathbf{0}$), and satisfies the sum-of-squares property
\begin{equation}\label{eq:sum-of-square}
    \sum_{(i,j)\in\mathcal{E}}\mu_{ij}\|\mathbf{x}_i-\mathbf{x}_j\|^2=\mathbf{x}^\top (\mathbf{L}\otimes\mathbf{I}_m) \mathbf{x},
\end{equation}
where $\mathbf{x}=[\mathbf{x}^\top_1,\cdots,\mathbf{x}_N^\top]^\top\in \mathit{\mathbb{R}}^{mN}$, $\mathbf{x}_i\in \mathit{\mathbb{R}}^{m}$,  $\mathbf{I}_m\in \mathit{\mathbb{R}}^{m}$ is the identity matrix of dimension $m$, $\otimes$ demonstrates the Kronecker product operator, and $\lVert .\rVert$ is the Euclidean norm. 

Let \( \mathbf{Y} = [y_{ij}] \in \mathbb{R}^{p \times q} \). Then, the Kronecker product of \( \mathbf{Y} \) and the identity matrix \( \mathbf{I}_m \) is given by  

\begin{equation*}  
\mathbf{Y} \otimes \mathbf{I}_m =  
\begin{bmatrix}  
y_{11} \mathbf{I}_m & \cdots & y_{1q} \mathbf{I}_m \\  
\vdots & \ddots & \vdots \\  
y_{p1} \mathbf{I}_m & \cdots & y_{pq} \mathbf{I}_m  
\end{bmatrix}  
\in \mathbb{R}^{mp \times mq}.  
\end{equation*}  

\subsection{UAV model}

The UAV model is a quadrotor consisting of a rigid cross frame with four rotors, as depicted in Fig.~\ref{fig:UAV}. The equations of motion governing the acceleration of the center of mass for a quadrotor UAV indexed by $i$ are given as follows~\cite{Mellinger}
\begin{align}
m\ddot{\mathbf{p}}_i(t) =& -mg\mathbf{z}_\mathcal{W} + u_{i1}(t) \mathbf{z}_\mathcal{B}, \label{eq:dynamics-linear} \\
\dot{\boldsymbol{\omega}}_{\mathcal{BW}} (t)=& \boldsymbol{\mathcal{I}}^{-1} \big\{-\boldsymbol{\omega}_{\mathcal{BW}}(t) \times \boldsymbol{\mathcal{I}}\boldsymbol{\omega}_{\mathcal{BW}}(t) + \begin{bmatrix}
    u_{i2}(t) & u_{i3}(t) & u_{i4}(t)
\end{bmatrix}^\top\big\}, \label{eq:dynamics-angular}
\end{align}
where $\mathbf{p}_i(t)=[x_i(t),y_i(t),z_i(t)]^\top\in\mathbb{R}^{3}$ is the position vector of the center of mass in the world frame $\mathcal{W}$, $\boldsymbol{\omega}_{\mathcal{BW}}(t)$ is the angular velocity of the body frame $\mathcal{B}$ relative to the world frame $\mathcal{W}$, $\boldsymbol{\mathcal{I}}$ is the moment of inertia matrix referenced to the center of mass along the body axes $\mathbf{x}_\mathcal{B}$ ,$\mathbf{y}_\mathcal{B}$, $\mathbf{z}_\mathcal{B}$, $m$ is the mass of the quadrotor, and $g$ is the gravitational acceleration. The control input to the quadrotor is $\mathbf{u}_i(t)\in\mathbb{R}^{4}$ where $u_{i1}(t)$ is the net body force and $u_{i2}(t)$, $u_{i3}(t)$, $u_{i4}(t)$ are the body moments which can be expressed according to the rotor speeds.

\begin{figure}
\centerline{\includegraphics[width=0.5\columnwidth]{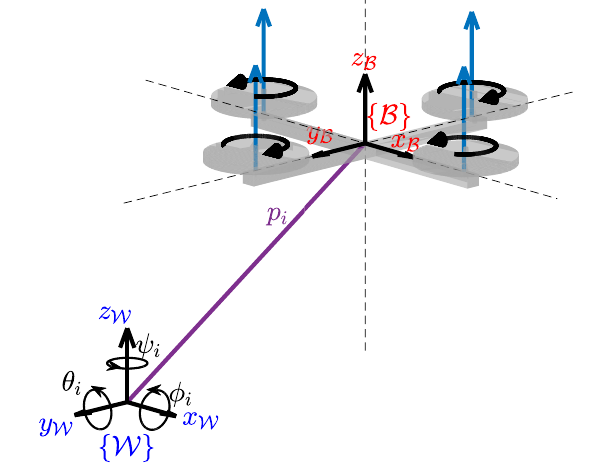}}
\caption{Quadrotor configuration. $\mathcal{W}$ and $\mathcal{B}$ denote the world frame and body frame, respectively. The roll, pitch, and yaw angles are denoted by $\phi_i$, $\theta_i$, and $\psi_i$, respectively.}
\label{fig:UAV}
\end{figure}

\section{Problem definition}\label{sec:def}

Consider a multi-UAV team consisting of \( N \) quadrotors, indexed by \( \mathcal{V} = \{1, \dots, N\} \). The dynamics of each UAV \( i \in \mathcal{V} \) is governed by (\ref{eq:dynamics-linear})–(\ref{eq:dynamics-angular}). Two time-invariant, connected, directed graphs are defined: the formation graph \( \mathcal{G}_f(\mathcal{V}, \mathcal{E}_f) \) and the communication or sensing graph \( \mathcal{G}_c(\mathcal{V}, \mathcal{E}_c) \). The formation graph encodes the desired spatial relationships between UAVs, whereas the communication graph represents all potential sensing and communication links needed for collision avoidance.

\begin{assumption}\label{assump:con}
(Connectivity) The formation graph is directed, with its underlying undirected graph being connected and containing at least one globally reachable node, i.e., a node \(i\) such that there is a directed path from \(i\) to node \(j\) for all \(j\in\mathcal{V}\) with \(j\neq i\)~\cite{LI20141626}.
\end{assumption}

Assumption~\ref{assump:con} implies the existence of a directed spanning tree with a root node \(i\).

\begin{assumption}\label{assump:complete}
    The communication graph is a complete directed graph to account for all inter-UAV collisions (thus, $\mathcal{G}_f\subseteq\mathcal{G}_c$), where every pair of UAVs is connected by directed edges in both directions. 
\end{assumption}

The formation control problem involves steering all UAVs from their initial state to a desired formation. Additionally, collisions between UAVs must be avoided. The formation shape is determined by the offset vectors $\mathbf{d}_{ij}\in\mathbb{R}^3$ for all UAV pairs $(i,j)\in\mathcal{E}_f$. Each UAV $i\in\mathcal{V}$ has an avoidance region radius $r_i$, meaning the minimum safe distance between two UAVs $i$ and $j$ is $r_i + r_j$. Additionally, let $R_i$ denote the radius of the collision reaction region of each UAV $i\in\mathcal{V}$ and $R_i>r_i$ (see Fig.~\ref{fig:radius}). 

Define the UAV team position vector $\mathbf{p}(t)=[\mathbf{p}_1^\top(t),\cdots,\mathbf{p}_N^\top(t)]^\top\in\mathbb{R}^{3N}$. The following regions are defined accordingly
\begin{enumerate}
    \item safety region: $\Psi(\mathbf{p})\triangleq\{\mathbf{p}(t)|\mathbf{p}_i(t)\in\mathbb{R}^3, \|\mathbf{p}_i(t)-\mathbf{p}_j(t)\|\geq R_i+R_j,\forall (i,j)\in\mathcal{E}_c\}$,
     \item collision reaction region: $\Gamma(\mathbf{p})\triangleq\{\mathbf{p}(t)|\mathbf{p}_i(t)\in\mathbb{R}^3, r_i+r_j<\|\mathbf{p}_i(t)-\mathbf{p}_j(t)\|< R_i+R_j,\exists (i,j)\in\mathcal{E}_c\}$, and
    \item collision region: $\Lambda(\mathbf{p})\triangleq\{\mathbf{p}(t)|\mathbf{p}_i(t)\in\mathbb{R}^3, \|\mathbf{p}_i(t)-\mathbf{p}_j(t)\|< r_i+r_j,\exists (i,j)\in\mathcal{E}_c\}$,
    \item singularity: $\Phi(\mathbf{p})\triangleq\{\mathbf{p}(t)|\mathbf{p}_i(t)\in\mathbb{R}^3, \|\mathbf{p}_i(t)-\mathbf{p}_j(t)\|= r_i+r_j,\exists (i,j)\in\mathcal{E}_c\}$.
\end{enumerate}

\begin{figure}[b]
      \centerline{\includegraphics[width=0.5\textwidth]{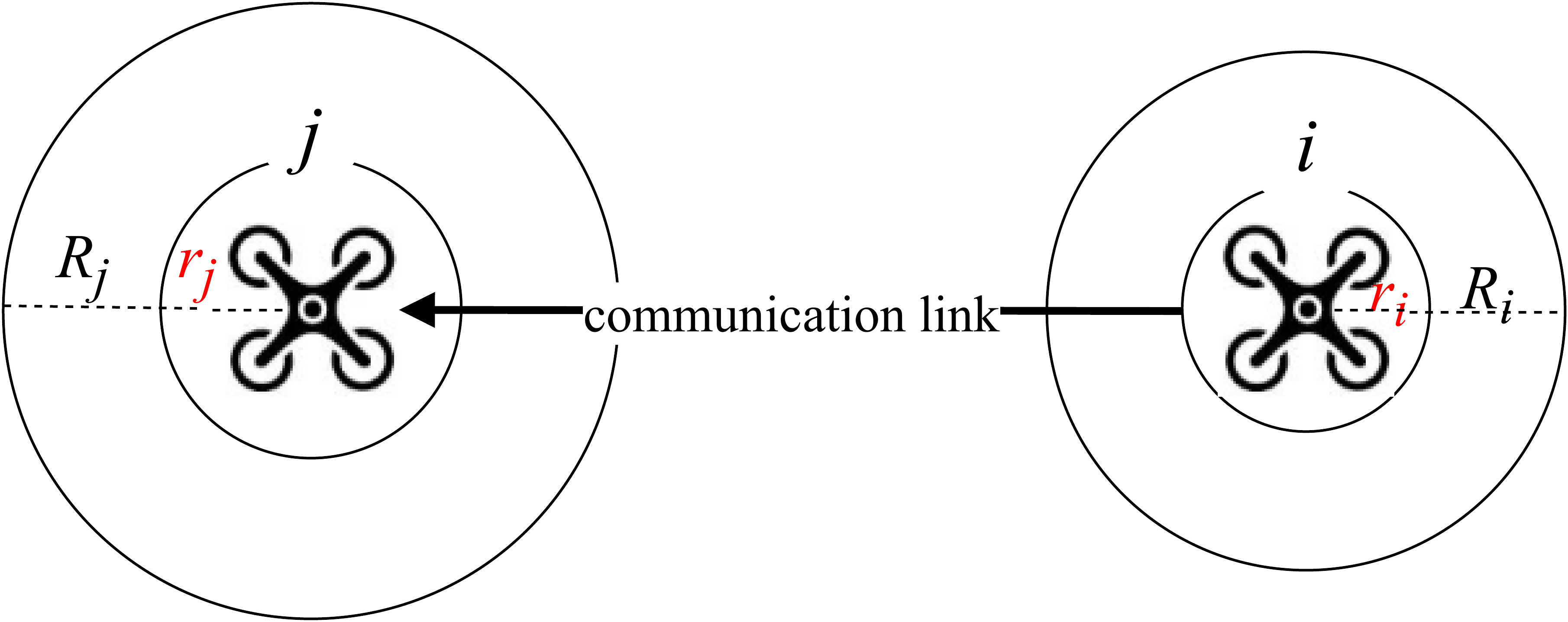}}
\caption{UAV workspace configuration showing the safe distance $r_i$ and collision reaction distance $R_i$.}
\label{fig:radius}
\end{figure}

We make the following assumption.
\begin{assumption}\label{assump:collision}
$\Lambda(\mathbf{p}(0))=\Phi(\mathbf{p}(0))=\Lambda(\mathbf{p}(t_f))=\Phi(\mathbf{p})(t_f)= \emptyset$.
\end{assumption}
Assumption~\ref{assump:collision} ensures no collisions occur at the initial and final UAV positions. This guarantees the feasibility and safety of the formation configuration. By enforcing this condition, collision avoidance is required only during the transition. The multi-UAV formation control problem is defined as follows.

\begin{problem}\label{prob:orginal} (Multi-UAV Formation Control in State Space with Collision Avoidance) (a) Consider a multi-UAV system on the directed connected graph $\mathcal{G}_f(\mathcal{V},\mathcal{E}_f)$ with UAV dynamics (\ref{eq:dynamics-linear})-(\ref{eq:dynamics-angular}). The multi-UAV formation control problem is to determine control strategies that drive all UAVs from their initial state to a desired formation specified by $\mathbf{d}_{ij},\forall(i,j)\in\mathcal{E}_f$ in the state space in a finite time horizon $t_f>0$. (b) All UAVs must avoid mutual collisions, i.e., $\Lambda(\mathbf{p})$ and $\Phi(\mathbf{p})$ must be $\emptyset$ for all $t\in(0,t_f)$.
\end{problem}

The consensus-based formation control objective is to ensure that
\begin{align}\label{eq:cont-obj}
   \sum_{(i,j)\in\mathcal{E}_f}\big\{\|\mathbf{p}_i(t)&-\mathbf{p}_j(t)-\mathbf{d}_{ij}\|^2+\|\dot{\mathbf{p}}_i(t)-\dot{\mathbf{p}}_j(t)\|^2\big\}\rightarrow 0,~ \text{as}~t\rightarrow t_f,
\end{align}
for any initial position $\mathbf{p}_i(0)$ and velocity $\dot{\mathbf{p}}_i(0)$ with $i\in\mathcal{V}$ and while satisfying the collision avoidance constraint
\begin{align}\label{eq:const-col}
&\|\mathbf{p}_i(t)-\mathbf{p}_j(t)\|>r_i+r_j,\quad \forall (i,j)\in\mathcal{E}_c,\forall t\in(0,t_f).
\end{align}

The UAV dynamics, formation control objective, and collision avoidance constraint described by (\ref{eq:dynamics-linear})-(\ref{eq:const-col}) can be recast as a constrained optimal control problem. This allows for investigating optimal formation control strategies for Problem~\ref{prob:orginal}. However, finding a solution is problematic due to the nonlinear UAV dynamics. Numerical methods can be sensitive to initialization~\cite{8187728} and unstable, often relegating them to a lower priority~\cite{bryson1996optimal,betts2010practical}. Moreover, the quadrotor system is inherently underactuated, with six degrees of freedom but only four control inputs. This characteristic poses an additional challenge in finding a solution that adheres to the system's underactuation dynamics.

In the next section, we present a collision-free formation trajectory planning and tracking scheme leveraging the differential flatness property of the UAV dynamics.

\section{Collision-Free Trajectory Planning and Tracking}\label{sec:main}

Leveraging the differential flatness property of the UAV dynamics provides a fully actuated system with four flat outputs~\cite{Mellinger,AI201958,AI201920}. By applying a diffeomorphism (see Definition~1 from~\cite{BEAVER2024111404}), Problem~\ref{prob:orginal} is transformed into a flat space where it becomes easier to handle due to the linear UAV dynamics. The simplified problem, denoted as Problem~\ref{prob:flat}, is recast and addressed as optimal control trajectory planning and tracking problems in flat space. The flat space control strategies are then transformed back to the original coordinate system using the inverse of the diffeomorphism. This approach is summarized in the control scheme diagram in Fig.~\ref{fig:control-scheme}.

\begin{figure}
\centerline{\includegraphics[width=0.7\textwidth]{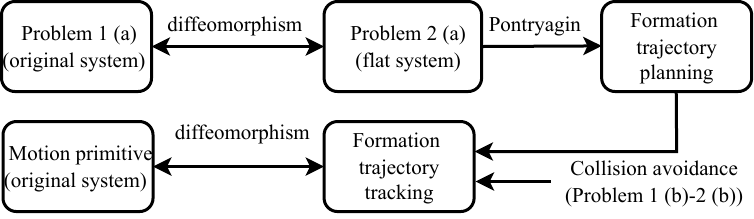}}
\caption{Outline of the control scheme for generating collision-free motion primitives for UAV formation. The formation trajectory planning signal is optimal. Subsequently, this trajectory is tracked within the safety region using an optimal tracking control law. The tracking law becomes sub-optimal in the collision reaction region, ensuring collision-free motion by sacrificing optimality.}
\label{fig:control-scheme}
\end{figure}

\subsection{optimal trajectories in flat coordinates}
The flat output vector is defined as $\mathbf{z}_i(t)=[\mathbf{p}_i^\top(t),\psi_i(t)]^\top\in\mathbb{R}^{4}$ where $\mathbf{p}_i(t)$ is the coordinate of the center of mass of the $i$th UAV in the world coordinate system and $\psi_i(t)$ is its yaw angle. The flat input vector is $\dddot{\mathbf{z}}_i(t)\in\mathbb{R}^{3}$. The collective thrust $u_{i1}(t)$ exerted by the propellers of the $i$th quadrotor is directly determined by the flat outputs and their derivatives. Since angular velocity and acceleration are dependent on these outputs and their derivatives, (\ref{eq:dynamics-angular}) is employed to calculate the inputs $u_{i2}(t)$, $u_{i3}(t)$, and $u_{i4}(t)$ (see~\cite{Mellinger}).

Define $\boldsymbol{\psi}(t)=[\psi_1(t),\cdots,\psi_N(t)]^\top\in\mathbb{R}^{N}$. The UAV team flat state and control input vectors are defined as $\mathbf{r}(t)=[\mathbf{p}^\top(t),\boldsymbol{\psi}^\top(t),\dot{\mathbf{p}}^\top(t),\dot{\boldsymbol{\psi}}^\top(t),\ddot{\mathbf{p}}^\top(t), \ddot{\boldsymbol{\psi}}^\top(t),1]^\top\in\mathbb{R}^{12N+1}$ and $\mathbf{u}_{\mathbf{r}}(t)=[\dddot{\mathbf{p}}(t),\dddot{\boldsymbol{\psi}}(t)]^\top\in\mathbb{R}^{4N}$, respectively. The multi-UAV team dynamics in flat space is given by 
\begin{equation}\label{eq:dynamics}
    \dot{\mathbf{r}}(t)=\mathbf{A}\mathbf{r}(t)+\mathbf{B}\mathbf{u}_{\mathbf{r}}(t), \quad \mathbf{r}(0)=\mathbf{r}_0,
\end{equation}
where $\mathbf{A}=\begin{bmatrix}
       \mathbf{A}_0 & \mathbf{0}\\ \mathbf{0} & 0
    \end{bmatrix}\in\mathbb{R}^{12N+1}$, $\mathbf{A}_0=\begin{bmatrix}
        0 & 1 & 0 \\
        0 & 0 & 1 \\  0 & 0 & 0
    \end{bmatrix}\otimes\mathbf{I}_{4N}$,  $\mathbf{B}=\begin{bmatrix}
        \mathbf{B}_0\\ \mathbf{0}
    \end{bmatrix}\in\mathbb{R}^{(12N+1)\times 4N}$, and $\mathbf{B}_0=\begin{bmatrix}
        0& 0 &  1 
    \end{bmatrix}^\top\otimes\mathbf{I}_{4N}$.

\begin{problem}\label{prob:flat} (Multi-UAV Formation Control in Flat Space with Collision Avoidance) (a) Consider a multi-UAV system on the directed connected graph $\mathcal{G}_f(\mathcal{V},\mathcal{E}_f)$ in flat space with dynamics (\ref{eq:dynamics}). The multi-UAV formation control problem is to determine control strategies for the UAV team that drive all UAVs from their initial state to a desired formation specified by $\mathbf{d}_{ij},\forall(i,j)\in\mathcal{E}_f$ in the flat state space in a finite time horizon $t_f$. (b) Additionally,  $\Lambda(\mathbf{p})$ and $\Phi(\mathbf{p})$ must be $\emptyset$ for all $t\in(0,t_f)$.
\end{problem}

The formation control objective (\ref{eq:cont-obj}) and collision avoidance constraint (\ref{eq:const-col}) in the original coordinates are transferred unchanged to flat coordinates. \textit{Problem}~\ref{prob:flat} (a) pertains to an unconstrained control problem. In the following, we will first address \textit{Problem}~\ref{prob:flat} (a) for formation trajectory planning and then use the resulting trajectory in a collision-free tracking controller to solve \textit{Problem}~\ref{prob:flat} in its entirety (see Fig.~\ref{fig:control-scheme}). 

Using the finite-time linear quadratic paradigm, \textit{Problem}~\ref{prob:flat} (a) is recast as the following optimal control problem 
\begin{align}\label{eq:PI}
\min_{\mathbf{u}_{\mathbf{r}}}~C(\mathbf{r}(t_f))+\int_{0}^{t_f}\big(L(\mathbf{r}(t))+\mathbf{u}_{\mathbf{r}}^\top(t) \mathbf{R} \mathbf{u}_{\mathbf{r}}(t)\big)\,dt,
\end{align}
subject to state dynamics (\ref{eq:dynamics}) where
\begin{align*}
    &C(\mathbf{r}(t_f))=\sum_{(i,j)\in\mathcal{E}_f}\omega_{ij}(\|\mathbf{p}_i(t_f)-\mathbf{p}_j(t_f)-\mathbf{d}_{ij}\|^2+\|\dot{\mathbf{p}}_i(t_f)-\dot{\mathbf{p}}_j(t_f)\|^2),\\
    &L(\mathbf{r}(t))=\sum_{(i,j)\in\mathcal{E}_f}\mu_{ij}(\|\mathbf{p}_i(t)-\mathbf{p}_j(t)-\mathbf{d}_{ij}\|^2+\|\dot{\mathbf{p}}_i(t)-\dot{\mathbf{p}}_j(t)\|^2),
\end{align*}
$\omega_{ij}>0$, $\mathbf{R}=\mathrm{diag}(\gamma_1,\cdots,\gamma_N)\otimes \mathbf{I}_4$, and $\gamma_i>0$. The collective performance index (\ref{eq:PI}) quadratically penalizes the UAV team’s deviations from cooperative formation behavior (i.e., achieving relative distances and velocity consensus) while expending the least control effort.

Using the sum-of-squares property of graph Laplacian matrix in (\ref{eq:sum-of-square}), the terminal and running formation error components in (\ref{eq:PI}), respectively, are compacted as~\cite{jond2019,jond2018} 
\begin{align*} 
   &C(\mathbf{r}(t_f))=\mathbf{r}^\top(t_f) \mathbf{Q}_f \mathbf{r}(t_f), \quad L(\mathbf{z}(t))=\mathbf{r}^\top(t) \mathbf{Q}\mathbf{r}(t),
\end{align*}
where $\mathbf{Q}_f=\begin{bmatrix}
        \hat{\mathbf{L}}_f& \mathbf{0}_{4N} & \mathbf{0}_{4N} &-\boldsymbol{\Theta}_f\\ \mathbf{0}_{4N} & \hat{\mathbf{L}}_f& \mathbf{0}_{4N} &\mathbf{0}_{4N\times 1} \\ \mathbf{0}_{4N} & \mathbf{0}_{4N}& \mathbf{0}_{4N} &\mathbf{0}_{4N\times 1}\\-\boldsymbol{\Theta}_f^\top &\mathbf{0}_{1\times 4N}&\mathbf{0}_{1\times 4N}& \boldsymbol{\Upsilon}_f 
   \end{bmatrix}\in\mathbb{R}^{12N+1}$, $\mathbf{Q}=\begin{bmatrix}
        \hat{\mathbf{L}}& \mathbf{0}_{4N} & \mathbf{0}_{4N} &-\boldsymbol{\Theta} \\ \mathbf{0}_{4N} & \hat{\mathbf{L}}& \mathbf{0}_{4N} &\mathbf{0}_{4N\times 1} \\ \mathbf{0}_{4N} & \mathbf{0}_{4N}& \mathbf{0}_{4N} &\mathbf{0}_{4N\times 1}\\-\boldsymbol{\Theta}^\top &\mathbf{0}_{1\times 4N}&\mathbf{0}_{1\times 4N}& \boldsymbol{\Upsilon} 
   \end{bmatrix}\in\mathbb{R}^{12N+1}$, $\hat{\mathbf{L}}_f=\begin{bmatrix}
       \mathbf{L}_f\otimes\mathbf{I}_3& \mathbf{0}_{3N\times N}\\ \mathbf{0}_{N\times 3N}& \mathbf{0}_{N}
   \end{bmatrix}\in\mathbb{R}^{4N}$, $\hat{\mathbf{L}}=\begin{bmatrix}
       \mathbf{L}\otimes\mathbf{I}_3& \mathbf{0}_{3N\times N}\\ \mathbf{0}_{N\times 3N}& \mathbf{0}_{N}
   \end{bmatrix}\in\mathbb{R}^{4N}$, $\boldsymbol{\Theta}_f=\begin{bmatrix}
      (\mathbf{D}\mathbf{W}_f\otimes\mathbf{I}_3)\mathbf{d} \\ \mathbf{0}_{N\times 1}
   \end{bmatrix}\in\mathbb{R}^{4N\times 1}$, $\boldsymbol{\Theta}=\begin{bmatrix}
      (\mathbf{D}\mathbf{W}\otimes\mathbf{I}_3)\mathbf{d} \\ \mathbf{0}_{N\times 1}
   \end{bmatrix}\in\mathbb{R}^{4N\times 1}$, $\boldsymbol{\Upsilon}_f=\mathbf{d}^\top(\mathbf{W}_f\otimes\mathbf{I}_3)\mathbf{d}\in\mathbb{R}$, $\boldsymbol{\Upsilon}=\mathbf{d}^\top(\mathbf{W}\otimes\mathbf{I}_3)\mathbf{d}\in\mathbb{R}$, $\mathbf{L}_f=\mathbf{D}\mathbf{W}_f\mathbf{D}^\top\in\mathbb{R}^{N}$,  $\mathbf{W}_f=\mathrm{diag}(\cdots,\omega_{ij},\cdots)~\forall (i,j)\in\mathcal{E}\in \mathit{\mathbb{R}}^{|\mathcal{E}|}$, and $\mathbf{d}=[\cdots,\mathbf{d}_{ij}^\top,\cdots]^\top\in \mathit{\mathbb{R}}^{3|\mathcal{E}|}$. The matrices $\mathbf{Q}_f$ and $\mathbf{Q}$ are symmetric and PSD (note that $\mathbf{r}^\top(t_f) \mathbf{Q}_f \mathbf{r}(t_f)\geq 0$, $\mathbf{r}^\top(t) \mathbf{Q} \mathbf{r}(t)\geq 0$).

The objective functional (\ref{eq:PI}) and the system dynamics (\ref{eq:dynamics}) are both convex. Thus, Pontryagin’s principle is both a necessary and sufficient condition for optimality. Applying these conditions, the optimal formation trajectory planning signal for any initial state of the UAV team is presented as follows.

\begin{theorem}\label{theorem:sol-open}
Consider the multi-UAV formation control problem in flat coordinates defined by (\ref{eq:dynamics}) and (\ref{eq:PI}). The unique globally optimal control input and corresponding optimal formation trajectory for any initial state $\mathbf{r}_0$ are given by
\begin{align}
&\mathbf{u}_{\mathbf{r}}(t)=-\mathbf{R}^{-1}\mathbf{B}^\top\mathbf{G}(t_f-t)\mathbf{H}^{-1}(t_f)\mathbf{r}_0,  \label{eq:open-con}\\&\mathbf{r}(t)=\mathbf{H}(t_f-t)\mathbf{H}^{-1}(t_f)\mathbf{r}_0, \label{eq:open-traj} 
 \end{align}
 where 
 \begin{align}
    &\mathbf{H}(t)=\begin{bmatrix}
     \mathbf{I} & \mathbf{0} 
 \end{bmatrix}\exp{(-t\mathbf{M})}\begin{bmatrix}
     \mathbf{I} \\ \mathbf{Q}_f 
 \end{bmatrix},\label{eq:Ht}\\
 &\mathbf{G}(t)=\begin{bmatrix}
     \mathbf{0} & \mathbf{I} 
 \end{bmatrix}\exp{(-t\mathbf{M})}\begin{bmatrix}
     \mathbf{I} \\ \mathbf{Q}_f 
 \end{bmatrix},\label{eq:Gt}\\
 &\mathbf{M}=\begin{bmatrix}
      \mathbf{A} &  -\mathbf{B}\mathbf{R}^{-1} \mathbf{B}^\top  \\
     -\mathbf{Q} &  -\mathbf{A}^\top 
 \end{bmatrix}.\label{eq:M}
 \end{align}
\end{theorem}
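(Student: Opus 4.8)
The plan is to apply Pontryagin's principle directly to the linear-quadratic problem \eqref{eq:PI}--\eqref{eq:dynamics}, exploit convexity to promote the necessary conditions to sufficient ones, and then solve the resulting two-point boundary value problem (TPBVP) in closed form via the matrix $\mathbf{M}$ of \eqref{eq:M}. First I would form the Hamiltonian $\mathcal{H}(\mathbf{r},\mathbf{u}_{\mathbf{r}},\boldsymbol{\lambda},t)=\mathbf{r}^\top\mathbf{Q}\mathbf{r}+\mathbf{u}_{\mathbf{r}}^\top\mathbf{R}\mathbf{u}_{\mathbf{r}}+\boldsymbol{\lambda}^\top(\mathbf{A}\mathbf{r}+\mathbf{B}\mathbf{u}_{\mathbf{r}})$ with costate $\boldsymbol{\lambda}(t)\in\mathbb{R}^{12N+1}$. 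Stationarity $\partial\mathcal{H}/\partial\mathbf{u}_{\mathbf{r}}=\mathbf{0}$ gives $\mathbf{u}_{\mathbf{r}}(t)=-\tfrac{1}{2}\mathbf{R}^{-1}\mathbf{B}^\top\boldsymbol{\lambda}(t)$ (with a factor of $2$ absorbed by rescaling $\boldsymbol{\lambda}$, which I will do so the statement matches \eqref{eq:open-con}); the costate equation is $\dot{\boldsymbol{\lambda}}=-\partial\mathcal{H}/\partial\mathbf{r}=-2\mathbf{Q}\mathbf{r}-\mathbf{A}^\top\boldsymbol{\lambda}$; and the transversality condition at the free endpoint is $\boldsymbol{\lambda}(t_f)=2\mathbf{Q}_f\mathbf{r}(t_f)$. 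After the same rescaling, substituting the control law into \eqref{eq:dynamics} yields the Hamiltonian system $\tfrac{d}{dt}[\mathbf{r}^\top,\boldsymbol{\lambda}^\top]^\top=\mathbf{M}'[\mathbf{r}^\top,\boldsymbol{\lambda}^\top]^\top$ for a matrix $\mathbf{M}'$ whose blocks are exactly those in \eqref{eq:M} up to sign conventions on the time direction; the appearance of $-\mathbf{M}$ (rather than $\mathbf{M}$) in \eqref{eq:Ht}--\eqref{eq:Gt} is because $\mathbf{H}$ and $\mathbf{G}$ are written as functions of the backward time $\tau=t_f-t$.

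The core computation is then solving this linear TPBVP. I would integrate the Hamiltonian system from the terminal time: writing $\tau=t_f-t$ and using $\boldsymbol{\lambda}(t_f)=\mathbf{Q}_f\mathbf{r}(t_f)$ (post-rescaling), the pair $[\mathbf{r}(t_f-\tau)^\top,\boldsymbol{\lambda}(t_f-\tau)^\top]^\top=\exp(-\tau\mathbf{M})\,[\mathbf{I};\mathbf{Q}_f]\,\mathbf{r}(t_f)$. Reading off the top block gives $\mathbf{r}(t_f-\tau)=\mathbf{H}(\tau)\mathbf{r}(t_f)$ and the bottom block gives $\boldsymbol{\lambda}(t_f-\tau)=\mathbf{G}(\tau)\mathbf{r}(t_f)$ with $\mathbf{H},\mathbf{G}$ as in \eqref{eq:Ht}--\eqref{eq:Gt}. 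Setting $\tau=t_f$ gives $\mathbf{r}_0=\mathbf{H}(t_f)\mathbf{r}(t_f)$, hence $\mathbf{r}(t_f)=\mathbf{H}^{-1}(t_f)\mathbf{r}_0$; back-substituting (with $\tau=t_f-t$) produces \eqref{eq:open-traj}, and feeding $\boldsymbol{\lambda}(t)=\mathbf{G}(t_f-t)\mathbf{H}^{-1}(t_f)\mathbf{r}_0$ into $\mathbf{u}_{\mathbf{r}}=-\mathbf{R}^{-1}\mathbf{B}^\top\boldsymbol{\lambda}$ produces \eqref{eq:open-con}.

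Two points need care, and I expect the invertibility of $\mathbf{H}(t_f)$ to be the main obstacle. For existence/uniqueness of the closed-form solution I must argue $\mathbf{H}(t_f)$ is nonsingular; the clean way is to note that the LQ problem with $\mathbf{R}\succ\mathbf{0}$, $\mathbf{Q}\succeq\mathbf{0}$, $\mathbf{Q}_f\succeq\mathbf{0}$ has a unique optimal state trajectory for every $\mathbf{r}_0$ (standard finite-horizon LQ theory — the Riccati differential equation with terminal value $\mathbf{Q}_f$ has a solution on all of $[0,t_f]$ since the cost is bounded below by $0$ and the dynamics are linear hence complete), and that $\mathbf{r}_0\mapsto\mathbf{r}(t_f)$ is the linear map $\mathbf{H}(t_f)$; a unique solution for every right-hand side forces $\mathbf{H}(t_f)$ to be invertible. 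Alternatively one can connect $\mathbf{H},\mathbf{G}$ to the Riccati solution via $\mathbf{P}(t)=\mathbf{G}(t_f-t)\mathbf{H}^{-1}(t_f-t)$ and invoke its well-posedness. For the ``unique globally optimal'' claim, I would invoke convexity: the cost functional is convex in $(\mathbf{r},\mathbf{u}_{\mathbf{r}})$ (each $\|\cdot\|^2$ term is convex, $\mathbf{R}\succ 0$) and the dynamics are affine, so Pontryagin's conditions are sufficient, and strict convexity in $\mathbf{u}_{\mathbf{r}}$ (since $\mathbf{R}\succ\mathbf{0}$) together with the linear state-to-control dependence gives uniqueness of the minimizer. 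The remaining steps — verifying the transversality condition reproduces the terminal block $[\mathbf{I};\mathbf{Q}_f]$ and checking the block structure of $\mathbf{M}$ — are routine bookkeeping.
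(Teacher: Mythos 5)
Your proposal is correct and follows essentially the same route as the paper's Appendix~\ref{app:proof}: Pontryagin's principle, convexity for sufficiency, the Hamiltonian system with matrix $\mathbf{M}$, and a matrix-exponential solution of the two-point boundary value problem (your backward integration from the terminal condition $[\mathbf{I};\mathbf{Q}_f]\mathbf{r}(t_f)$ reaches \eqref{eq:Ht}--\eqref{eq:Gt} a little more directly than the paper's forward propagation followed by block elimination of $\exp(-t_f\mathbf{M})$). Your explicit argument for the nonsingularity of $\mathbf{H}(t_f)$ via well-posedness of the finite-horizon LQ problem is a point the paper's proof silently assumes, so that addition is a genuine improvement rather than a deviation.
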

\begin{proof}
   See Appendix~\ref{app:proof}.
\end{proof}

\subsection{Formation trajectory tracking}

In the control scheme shown in Fig.~\ref{fig:control-scheme}, the formation trajectory planning signal, as given by (\ref{eq:open-traj}), is followed by the formation trajectory tracker. Meanwhile, the tracker must ensure that each UAV avoids collisions with its neighboring UAVs along its specified trajectory whenever \( \Gamma(\mathbf{p}) \) becomes non-empty.

\subsubsection{safety region control strategy} \label{sec:safety}
When \( \Psi(\mathbf{p}) \neq \emptyset \), the UAVs are within the safety region, and as a result, the inter-UAV collision avoidance constraint is inactive. The control strategy for the UAVs to follow the formation trajectory planning signal within the safety region is derived as follows.

Using the finite-time LQR paradigm, formation trajectory tracking within the safety region is recast as the following optimal control problem
\begin{align}\label{eq:PI-track-comp}
&\min_{\mathbf{u}_{\mathbf{z}}}~(\mathbf{z}(t_f)-\mathbf{r}(t_f))^\top \mathbf{K}_f (\mathbf{z}(t_f)-\mathbf{r}(t_f))+\nonumber\\&\int_{0}^{t_f}\Big\{(\mathbf{z}(t)-\mathbf{r}(t))^\top \mathbf{K} (\mathbf{z}(t)-\mathbf{r}(t))+\mathbf{u}_{\mathbf{z}}^\top(t) \mathbf{R}_{\mathbf{z}}\mathbf{u}_{\mathbf{z}}(t)\Big\}\,dt,
\end{align}
subject to
\begin{equation}\label{eq:dynamics-}
    \dot{\mathbf{z}}(t)=\mathbf{A}\mathbf{z}(t)+\mathbf{B}\mathbf{u}_{\mathbf{z}}(t), \quad \mathbf{z}(0)=\mathbf{r}_0,
\end{equation}
where $\mathbf{K}_f=\mathrm{diag}(\mathbf{K}_{0f},0)$, $\mathbf{K}_{0f}=\mathbf{I}_{3}\otimes\mathrm{diag}(\delta_1,\cdots,\delta_N)\otimes  \mathbf{I}_4$, $\mathbf{K}=\mathrm{diag}(\mathbf{K}_0,0)$, $\mathbf{K}_0=\mathbf{I}_{3}\otimes\mathrm{diag}(\zeta_1,\cdots,\zeta_N)\otimes  \mathbf{I}_4$, $\mathbf{R}_{\mathbf{z}}=\mathrm{diag}(\eta_1,\cdots,\eta_N)\otimes\mathbf{I}_{4}$, $\delta_i>0$, $\zeta_i>0$, and $\eta_i>0$.

The optimal feedback control for (\ref{eq:PI-track-comp}) and (\ref{eq:dynamics-}) is given by
\begin{equation}\label{eq:open-loop2}
   \mathbf{u}_{\mathbf{z}}(t)=-\mathbf{R}_{\mathbf{z}}^{-1}\mathbf{B}^\top\mathbf{P}(t)(\mathbf{z}(t)-\mathbf{r}(t)), 
\end{equation}
where $\mathbf{P}(t)$ is the solution to the Riccati differential equation
\begin{align}\label{eq:riccati}
&\dot{\mathbf{P}}(t)+\mathbf{P}(t)\mathbf{A}-\mathbf{P}(t)\mathbf{B}\mathbf{R}_{\mathbf{z}}^{-1}\mathbf{B}^\top\mathbf{P}(t)+\mathbf{K}+\mathbf{A}^\top\mathbf{P}(t)=\mathbf{0},\quad\mathbf{P}(t_f)=\mathbf{K}_f.
\end{align}
The pair $(\mathbf{A},\mathbf{B})$ is controllable, $\mathbf{R}_{\mathbf{z}}\succ 0$, and $\mathbf{K}\succ 0$, thus, there exists a unique solution $\mathbf{P}(t)=\mathbf{P}^\top(t)\succeq 0$ to (\ref{eq:riccati}).

The optimal state feedback is given by
\begin{equation}\label{eq:dynamics2}
    \dot{\mathbf{z}}(t)=\left(\mathbf{A}-\mathbf{B}\mathbf{R}_{\mathbf{z}}^{-1}\mathbf{B}^\top\mathbf{P}(t)\right) (\mathbf{z}(t)-\mathbf{r}(t)).
\end{equation}
The minimal cost attained at any $t\in[0,t_f]$ and state $\mathbf{z}(t)$ is yielded by the value function
\begin{equation}
    n(\mathbf{z})=(\mathbf{z}(t)-\mathbf{r}(t))^\top \mathbf{P}(t) (\mathbf{z}(t)-\mathbf{r}(t)).
\end{equation}
The Hamilton–Jacobi–Bellman (HJB) equation is given by
\begin{align} \label{eq:A}
    \frac{d n}{d t}+&\min_{\mathbf{u}_{\mathbf{z}}}\big\{(\mathbf{z}(t)-\mathbf{r}(t))^\top \mathbf{K} (\mathbf{z}(t)-\mathbf{r}(t))+\mathbf{u}_{\mathbf{z}}^\top(t) \mathbf{R}_{\mathbf{z}} \mathbf{u}_{\mathbf{z}}(t)+\frac{\partial n}{\partial \mathbf{z}}\big(\mathbf{A}\mathbf{z}(t)+\mathbf{B}\mathbf{u}_{\mathbf{z}}(t) \big) \big\}=0.
\end{align}
The optimal input (\ref{eq:open-loop2}), equivalently is derived from the HJB equation (\ref{eq:A}) as
\begin{align*} 
    \mathbf{u}_{\mathbf{z}}(t)=-\mathbf{R}_{\mathbf{z}}^{-1}\mathbf{B}^\top\frac{\partial n}{\partial \mathbf{z}}.
\end{align*}

\subsubsection{collision avoidance control strategy}\label{sec:collision-avoidance}
When $\Gamma(\mathbf{p})\neq \emptyset$, the formation trajectory tracker applies the collision avoidance control strategy derived below for the collision reaction region. First, we define a collision penalty function~\cite{10.1115/1.2764510,HUANG20204034} as
\begin{align}\label{eq:penalty}
    v_{ij}=&\big(\min\big\{0,\frac{\|\mathbf{p}_j(t)-\mathbf{p}_i(t)\|^2-(R_i+R_j)^2}{\|\mathbf{p}_j(t)-\mathbf{p}_i(t)\|^2-(r_i+r_j)^2}\big\}\big)^2=\big(\min\big\{0,\frac{\|\mathbf{e}_{ij}^\top\mathbf{z}(t)\|^2-(R_i+R_j)^2}{\|\mathbf{e}_{ij}^\top\mathbf{z}(t)\|^2-(r_i+r_j)^2}\big\}\big)^2, 
\end{align}
for all $(i,j)\in\mathcal{E}_c$ where $\mathbf{e}_{ij}=[-\mathbf{D}_{ij}\otimes\mathbf{I}_3,\mathbf{0}]\in\mathbb{R}^{(12N+1)\times 3}$ and $\mathbf{D}_{ij}$ is the column of $\mathbf{D}$ corresponding to the edge $(i,j)$. Note that $v_{ij}$ is nonzero only within the collision reaction region and is either zero or undefined outside of this region.  

The partial derivative of $v_{ij}$ with respect to $\mathbf{z}_i$ is given by
\begin{equation*}
   \frac{\partial v_{ij}}{\partial \mathbf{z}_i}=\left \{
  \begin{aligned}
    &\mathbf{0}, && \text{if}\ \Psi(\mathbf{p})\neq \emptyset \\
    &\mathbf{z}^\top(t)\mathbf{e}_{ij}\phi_{ij},&& \text{if}\ \Gamma(\mathbf{p})\neq \emptyset \\
    &\mathbf{0}, && \text{if}\ \Lambda(\mathbf{p})\neq \emptyset\\ &\text{not defined}, && \text{if}\ \Phi(\mathbf{p})\neq \emptyset 
  \end{aligned} \right.
\end{equation*}
where
\begin{align*}
    \phi_{ij}=&4((R_i+R_j)^2-(r_i+r_j)^2)\frac{\|\mathbf{e}_{ij}^\top\mathbf{z}(t)\|^2-(R_i+R_j)^2}{(\|\mathbf{e}_{ij}^\top\mathbf{z}(t)\|^2-(r_i+r_j)^2)^3}.
\end{align*}
Moreover, the time derivative of $v_{ij}$ at the collision reaction region (i.e., at $\Gamma(\mathbf{p})\neq\emptyset$) is
\begin{equation} \label{eq:derivativee}
   \frac{d v_{ij}}{dt}=\frac{\partial v_{ij}}{\partial \mathbf{z}_i}\mathbf{e}_{ij}^\top\dot{\mathbf{z}}(t).
\end{equation}

Let $V_i=\sum_{j\in\mathcal{N}_i}v_{ij}$ where $\frac{\partial V_i}{\partial \mathbf{z}_i}=\sum_{j\in\mathcal{N}_i}\mathbf{z}^\top(t)\mathbf{e}_{ij}\phi_{ij}$ for all $i\in\mathcal{V}$. Define $\frac{\partial V}{\partial \mathbf{z}}=[\mathbf{0}_{8N},\frac{\partial V_1}{\partial \mathbf{z}_1},\cdots,\\\frac{\partial V_N}{\partial \mathbf{z}_N},\mathbf{0}_{N+1}]\in\mathbb{R}^{12N+1}$. Now, augment the value function of the HJB (\ref{eq:A}) with the collision avoidance penalty \(V\)
\begin{align} \label{eq:C}
    &\big(\frac{d n}{d t}+\frac{d V}{d t}\big)+\min_{\mathbf{u}_{\mathbf{z}}}\big\{(\mathbf{z}(t)-\mathbf{r}(t))^\top \mathbf{K} (\mathbf{z}(t)-\mathbf{r}(t))+\mathbf{u}_{\mathbf{z}}^\top(t) \mathbf{R}_{\mathbf{z}} \mathbf{u}_{\mathbf{z}}(t)+\nonumber\\&\big(\frac{\partial n}{\partial \mathbf{z}}+\frac{\partial V}{\partial \mathbf{z}}\big)\big(\mathbf{A}\mathbf{z}(t)+\mathbf{B}\mathbf{u}_{\mathbf{z}}(t)\big)\big\}=0.
\end{align}
Note that the HJB in (\ref{eq:C}) is constructed for the tracking optimal control problem with the performance index (\ref{eq:PI-track-comp}), augmented with a penalty function to indirectly designate the collision region instead of the inequality constraint, ensuring safety~\cite{LEE201765}. 

The collision avoidance control strategy is derived by minimizing the HJB equation in (\ref{eq:C}), leading to the following feedback control law
\begin{align} \label{eq:feedback}
   \mathbf{u}_{\mathbf{z}}(t) = -\mathbf{R}_{\mathbf{z}}^{-1}\mathbf{B}^\top( \mathbf{P}(t) \mathbf{z}(t) + \frac{\partial V}{\partial \mathbf{z}}^\top ).
\end{align}
The corresponding state dynamics is given by
\begin{equation}\label{eq:dynamics-0}
    \dot{\mathbf{z}}(t) = ( \mathbf{A} - \mathbf{B}\mathbf{R}_{\mathbf{z}}^{-1}\mathbf{B}^\top \mathbf{P}(t) ) \mathbf{z}(t) - \mathbf{B}\mathbf{R}_{\mathbf{z}}^{-1}\mathbf{B}^\top \frac{\partial V}{\partial \mathbf{z}}^\top.
\end{equation}

It is important to note that the collision avoidance control law within the safety region (i.e., when UAVs are not within the collision reaction zone) coincides with the optimal control law for the system.

To demonstrate that (\ref{eq:feedback}) provides a collision-free control strategy, we prove that (\ref{eq:C}) is non-increasing in the region \(\Gamma(\mathbf{p})\), following the approach in~\cite{10.1115/1.2764510}. 

We define \(\hat{V} = \sum_{i=1}^N V_i\), where \(V_i\) represents the sum of collision penalty functions for UAV \(i\) regarding its neighbors. The following lemma provides the basis for showing that UAVs maintain a collision-free formation.

\begin{lemma} \label{lma:ICs}
If the initial conditions of the UAVs satisfy
\begin{equation}
    ||\mathbf{p}_i(0) - \mathbf{p}_j(0)|| \geq r_i + r_j + \epsilon,
\end{equation}
for all \(i,j \in \mathcal{V}\), \(i \neq j\), and some \(\epsilon > 0\), then the UAVs can achieve \(\frac{d\hat{V}}{dt} \geq 0\) using bounded control effort, thereby avoiding collisions.
\end{lemma}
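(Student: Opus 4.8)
The plan rests on two structural facts: the penalty $v_{ij}$ in (\ref{eq:penalty}) is a barrier that diverges as $\|\mathbf p_i-\mathbf p_j\|\downarrow r_i+r_j$, and in flat coordinates the separation between any two UAVs is driven by the input through a chain of three integrators (the relative position has relative degree three with respect to the relative jerk $\dddot{\mathbf p}_i-\dddot{\mathbf p}_j$, a component of $\mathbf u_{\mathbf z}$). I would first record that $\hat V(0)$ is finite: for every pair the hypothesis gives
$$\|\mathbf p_i(0)-\mathbf p_j(0)\|^2-(r_i+r_j)^2\ \ge\ 2(r_i+r_j)\epsilon+\epsilon^2\ =:\ \delta\ >\ 0,$$
so the denominator in (\ref{eq:penalty}) is bounded away from zero while the numerator is bounded by $(R_i+R_j)^2$, hence every $v_{ij}(0)$ is finite (and zero outside the reaction region) and $\hat V(0)<\infty$. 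Because $v_{ij}\to+\infty$ exactly as a collision is approached, the content of the lemma is that a bounded $\mathbf u_{\mathbf z}$ can hold $\hat V$ below a finite bound on $(0,t_f)$ --- which in turn forces every $\|\mathbf p_i(t)-\mathbf p_j(t)\|$ to stay strictly above $r_i+r_j$, i.e. $\Lambda(\mathbf p)=\Phi(\mathbf p)=\emptyset$.

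Next I would reduce collision avoidance to a triple-integrator reachability problem for each scalar $s_{ij}(t):=\|\mathbf p_i(t)-\mathbf p_j(t)\|^2$. Differentiating, $\dot s_{ij}=2(\mathbf p_i-\mathbf p_j)^\top(\dot{\mathbf p}_i-\dot{\mathbf p}_j)$, and since $v_{ij}$ depends only on $s_{ij}$ inside the reaction region, $dv_{ij}/dt=\tfrac12\phi_{ij}\dot s_{ij}$, while $\dddot s_{ij}$ contains the controlled relative jerk. Thus $s_{ij}$ is governed by an integrator chain with full authority at the third derivative, and (\ref{eq:feedback}) is one realization of a feedback exploiting this authority (the $\partial V/\partial\mathbf z$ term in (\ref{eq:feedback}) injects the collision gradient into the input channel). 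For the existence claim it suffices to exhibit \emph{any} bounded control, and I would use a coordinated deceleration --- drive all velocities $\dot{\mathbf p}_i(t)$ toward a common value whenever $\Gamma(\mathbf p)\neq\emptyset$ --- which requires only bounded jerk because the velocities are bounded, and which freezes every $s_{ij}$ once reached. Whenever a pair first enters $\Gamma(\mathbf p)$ it does so with $s_{ij}=(R_i+R_j)^2$ (or, if it starts in $\Gamma(\mathbf p)$, with $s_{ij}(0)\ge (r_i+r_j)^2+\delta$) and with finite $\dot s_{ij},\ddot s_{ij}$; a smoothed bang--bang deceleration profile on $\dddot s_{ij}$ then drives $\dot s_{ij}$ to zero over a braking corridor whose length scales with the (finite) entry speed, so a large-enough but finite jerk bound keeps $s_{ij}$ from ever reaching $(r_i+r_j)^2$. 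The resulting lower bound on $\min_t s_{ij}(t)$ yields a finite bound on $\hat V$.

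The remaining gap --- and the place I expect the real work --- is that the entry quantities $\dot s_{ij},\ddot s_{ij}$ are themselves produced by the control history, so the argument is a priori circular. I would close it with a first-collision-time bootstrap: suppose, for contradiction, that some pair reaches $\|\mathbf p_i-\mathbf p_j\|=r_i+r_j$, and let $t^\ast$ be the infimum of such times; on $[0,t^\ast)$ the closed loop --- the bounded tracking law (\ref{eq:open-loop2}) within $\Psi$ and the bounded deceleration law within $\Gamma$ --- together with the finite initial data keeps all states, in particular all relative velocities, uniformly bounded on the finite horizon, so the braking estimate above applies to the offending pair and shows $s_{ij}$ stays above $(r_i+r_j)^2$ past $t^\ast$, contradicting the definition of $t^\ast$. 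Hence no collision occurs, $\hat V$ remains bounded on $(0,t_f)$, and the control effort used is bounded, which is the assertion of the lemma.

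The two technical obstacles are: (i) making the triple-integrator braking estimate quantitative --- the double-integrator intuition ``stopping distance $\sim v^2/U$'' must be replaced by the analogue for one more integration, via an elementary comparison argument on the ODE for $s_{ij}$; and (ii) handling several simultaneously active pairs, where the decelerations demanded of one UAV by different neighbors may conflict in direction. For (ii) the clean device is to work with the single aggregate $\hat V$ (legitimate since $\mathcal G_c$ is complete, so every pair is present) and the team-level ``bring all UAVs to a common velocity'' maneuver, which each UAV executes with bounded jerk regardless of how many neighbors are close; this replaces a family of possibly conflicting pairwise requirements by one maneuver and a single scalar jerk bound.
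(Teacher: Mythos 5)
Your proposal is sound at the same level of rigor as the paper's own argument, but it takes a genuinely more constructive route. The paper never exhibits a maneuver: it writes $\hat V(t)$ and $d\hat V/dt$ as Taylor expansions whose remainders are iterated integrals of $d^{3}\hat V/dt^{3}$ (the first derivative of $\hat V$ in which the control $\mathbf u_{\mathbf z}$ appears), observes that the collision-avoidance requirements $\hat V(t)\le \hat V(0)+\delta$ and $d\hat V/dt\le 0$ translate into \emph{finite} bounds on those iterated integrals, and concludes that a finite (hence bounded) third derivative — equivalently a bounded control — can always be chosen to satisfy them. You instead reduce to the scalar triple integrators $s_{ij}=\|\mathbf p_i-\mathbf p_j\|^2$, exhibit an explicit velocity-consensus braking maneuver that freezes every $s_{ij}$, and close the loop with a first-collision-time contradiction. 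Both arguments hinge on the identical structural fact (relative degree three from $\mathbf u_{\mathbf z}$ to the penalty, plus finiteness of $\hat V(0)$ guaranteed by the $\epsilon$-separated initial data); what the paper's version buys is brevity and indifference to how many pairs are simultaneously active, while yours buys an actual certificate of feasibility and makes the "bounded control effort" claim concrete rather than asserted. Two caveats. First, your bootstrap has a mild circularity you should make explicit: the state bounds on $[0,t^\ast)$ depend on the chosen jerk bound $U$, while the braking estimate requires $U$ large relative to the entry speeds those bounds produce; for the velocity-consensus maneuver this is resolvable (once velocities agree the relative positions freeze, so entry speeds are controlled by the pre-maneuver state and the finite horizon), but as written the two constants chase each other. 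Second, note that the lemma's displayed inequality "$d\hat V/dt\ge 0$" is inconsistent with the barrier role of $\hat V$ (which diverges at contact); the paper's own proof imposes $d\hat V/dt\le 0$ and drives it to zero, and your construction correctly proves that version of the claim.
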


\begin{proof}
Let $\frac{d\hat{V}}{dt}< 0$, since any other condition trivially satisfies Lemma~\ref{lma:ICs}. By the fundamental theorem of calculus,
\begin{equation}
    \hat{V}(t) = \hat{V}(0) + \int_0^{t} \frac{d\hat{V}}{d\tau}\, d\tau.
\end{equation}
The control input $\mathbf{u}_{\mathbf{z}}$ appears in $\frac{d^{n+1} \hat{V}}{dt^{n+1}}$ (i.e., $n=2$ here). We can obtain $\hat{V}(t)$ and $\frac{d\hat{V}}{dt}$ as a function of $\mathbf{u}_z$,
\begin{align}\label{eq:V}
    \hat{V}(t) =& \hat{V}(0) +\sum_{i=1}^n\left. \frac{d^n\hat{V}}{dt^n} \right|_{t=0}\frac{t^n}{n!}+  
    \underbrace{\int_0^t \dots \int_0^t}_{n+1} \frac{d^{n+1}\hat{V}}{d\tau^{n+1}} \underbrace{d\tau \dots d\tau}_{n+1}, 
\end{align}
\begin{align}\label{eq:dV/dt}
    \frac{d\hat{V}}{dt} =& \sum_{i=1}^n\left. \frac{d^n\hat{V}}{dt^n} \right|_{t=0}\frac{t^{n-1}}{(n-1)!}+\underbrace{\int_0^t \dots \int_0^t}_{n} \frac{d^{n+1}\hat{V}}{d\tau^{n+1}} \underbrace{d\tau \dots d\tau}_{n},
\end{align}
where $\left. \frac{d^n\hat{V}}{dt^n} \right|_{t=0}$ is the initial condition of the $n^{th}$ derivative of $\hat{V}(t)$.

To guarantee collision avoidance while 
$\frac{d\hat{V}}{dt}\to 0$ at time $t$, we require
\begin{align}
    \hat{V}(t) &\leq \hat{V}(0) + \delta, \label{eq:lma-des1} \\
    \frac{d\hat{V}}{dt} &\leq 0, \label{eq:lma-des2}
\end{align}
where $\delta>0$ and \eqref{eq:lma-des1} implies collision avoidance, as $\frac{d\hat{V}}{dt} > 0$ implies that $\hat{V}(t) = \max_{\tau\in[0,t]} \hat{V}(\tau)$.
Substituting (\ref{eq:V}) and (\ref{eq:dV/dt}) into (\ref{eq:lma-des1}) and (\ref{eq:lma-des2}), respectively, and rearranging their terms yield
\begin{align}
    &\underbrace{\int_0^t \dots \int_0^t}_{n+1}  \frac{d^{n+1}\hat{V}}{d\tau^{n+1}} \underbrace{d\tau \dots d\tau}_{n+1}  \leq 
    -\big(\delta+\sum_{i=1}^n\left. \frac{d^n\hat{V}}{dt^n} \right|_{t=0}\frac{t^n}{n!}\big), \label{eq:lma-cond1}
    \\
    &\underbrace{\int_0^t\dots \int_0^t}_{n}\frac{d^{n+1}\hat{V}}{d\tau^{n+1}}\underbrace{d\tau\dots d\tau}_{n}  \leq 
    -\sum_{i=1}^n\left. \frac{d^n\hat{V}}{dt^n} \right|_{t=0}\frac{t^{n-1}}{(n-1)!} \label{eq:lma-cond2}.
\end{align}

Both (\ref{eq:lma-cond1}) and (\ref{eq:lma-cond2}) are finite lower bounds on the function $\frac{d^{n+1}\hat{V}}{dt^{n+1}}$. Therefore, we can always construct a finite function $\frac{d^{n+1}\hat{V}}{dt^{n+1}}$ to satisfy \eqref{eq:lma-cond1} and \eqref{eq:lma-cond2} for an arbitrarily small $t > 0$. This implies that \eqref{eq:lma-des1} and \eqref{eq:lma-des2} are satisfied for any $\delta > 0$. Finally, by continuity in $\frac{d\hat{V}}{dt}$, 
we can select $\delta$ sufficiently small to ensure \eqref{eq:lma-des1} and \eqref{eq:lma-des2} for any $\epsilon > 0$. This concludes the proof.
\end{proof}

\begin{theorem} \label{thm:safe}
    Given the conditions of Lemma \ref{lma:ICs}, any trajectory that results in a collision is sub-optimal.
\end{theorem}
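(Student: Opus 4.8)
The plan is to use the collision penalty $v_{ij}$ of (\ref{eq:penalty}) as a barrier. It vanishes in the safety region, is finite and positive in the interior of the collision reaction region, vanishes again strictly inside the collision region, but diverges to $+\infty$ as the inter-UAV distance $\|\mathbf{p}_i(t)-\mathbf{p}_j(t)\|=\|\mathbf{e}_{ij}^\top\mathbf{z}(t)\|$ approaches $r_i+r_j$ from above, i.e., as the configuration approaches the singularity surface $\Phi(\mathbf{p})$. Consequently the $\hat V$-penalized tracking cost associated with (\ref{eq:PI-track-comp}) and the augmented HJB (\ref{eq:C}) is $+\infty$ along any trajectory that collides, whereas Lemma~\ref{lma:ICs} exhibits a collision-free competitor with finite cost; optimality then rules out colliding trajectories.

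First I would localize the divergence. Let $\mathbf{z}(\cdot)$ be a trajectory generated by some control $\mathbf{u}_{\mathbf{z}}(\cdot)$ for the flat system (\ref{eq:dynamics-}), and suppose it results in a collision: there are a pair $(i,j)\in\mathcal{E}_c$ and a time $t^\star\in(0,t_f)$ with $\|\mathbf{p}_i(t^\star)-\mathbf{p}_j(t^\star)\|<r_i+r_j$. Under the hypothesis of Lemma~\ref{lma:ICs} (which strengthens Assumption~\ref{assump:collision}), the same pair satisfies $\|\mathbf{p}_i(0)-\mathbf{p}_j(0)\|\ge r_i+r_j+\epsilon$, so the continuous scalar map $t\mapsto\|\mathbf{p}_i(t)-\mathbf{p}_j(t)\|$ exceeds $r_i+r_j$ at $t=0$ and is strictly below it at $t=t^\star$; by the intermediate value theorem there is a first crossing time $t_c\in(0,t^\star]$ with $\|\mathbf{p}_i(t_c)-\mathbf{p}_j(t_c)\|=r_i+r_j$, and by continuity there is a left-neighborhood of $t_c$ on which the pair lies in the collision reaction region. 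On that neighborhood the denominator $\|\mathbf{e}_{ij}^\top\mathbf{z}(t)\|^2-(r_i+r_j)^2$ in (\ref{eq:penalty}) tends to $0^+$ while the numerator tends to $(r_i+r_j)^2-(R_i+R_j)^2<0$, so $v_{ij}(t)\to+\infty$ as $t\uparrow t_c$, hence $\hat V(t)\ge v_{ij}(t)\to+\infty$; the collision-penalized cost of $\mathbf{z}(\cdot)$ is therefore unbounded.

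Next I would bring in the finite-cost competitor. By Lemma~\ref{lma:ICs}, under the same initial-separation hypothesis there is a control of bounded effort keeping the team collision-free; along the resulting trajectory the tracking-error term $(\mathbf{z}-\mathbf{r})^\top\mathbf{K}(\mathbf{z}-\mathbf{r})$ and the control-effort term $\mathbf{u}_{\mathbf{z}}^\top\mathbf{R}_{\mathbf{z}}\mathbf{u}_{\mathbf{z}}$ are bounded on the finite horizon $[0,t_f]$ (bounded input, linear dynamics), and $\hat V$ stays bounded since $\hat V(0)<\infty$ and $\hat V(t)\le\hat V(0)+\delta$ by Lemma~\ref{lma:ICs}; thus this trajectory has finite augmented cost. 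Since a colliding trajectory has infinite cost while a trajectory of finite cost exists, no colliding trajectory can minimize the $\hat V$-penalized index — equivalently, it cannot realize the finite optimal value characterized by (\ref{eq:C}) — and is therefore sub-optimal.

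The main obstacle is not the barrier estimate but a clean statement of \emph{which} optimization ``sub-optimal'' refers to, namely the tracking problem of (\ref{eq:PI-track-comp}) penalized through (\ref{eq:C}) by $\hat V$, together with arguing that approaching $\Phi(\mathbf{p})$ forces the relevant cost to diverge — whether one reads the penalty as a running barrier (then the integral of $v_{ij}$ diverges near $t_c$) or through the requirement that the value function $n+\hat V$ remain finite along optimal arcs. A minor but essential technical point is that $v_{ij}$ itself is zero strictly inside $\Lambda(\mathbf{p})$, so the blow-up must be pinned at the boundary crossing $\Phi(\mathbf{p})$ that every continuous colliding trajectory is forced to traverse; this is exactly what the intermediate value step delivers.
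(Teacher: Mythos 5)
Your proposal is correct and follows essentially the same route as the paper: Lemma~\ref{lma:ICs} supplies a bounded-control, collision-free competitor whose augmented cost is finite, while any colliding trajectory incurs infinite penalized cost, so it cannot be optimal. The only difference is that the paper simply declares the colliding cost to be $J^*=\infty$ ``by definition,'' whereas you substantiate that claim by locating the first crossing of the singularity surface $\Phi(\mathbf{p})$ via the intermediate value theorem and showing $v_{ij}\to+\infty$ there (correctly noting that $v_{ij}$ vanishes strictly inside $\Lambda(\mathbf{p})$, so the blow-up must be pinned at the boundary) --- a worthwhile tightening of the same argument rather than a different one.
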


\begin{proof}
We use a proof by contradiction. Assume that a trajectory where two UAVs collide is optimal, resulting in a cost of $J^* = \infty$ by definition.
By Lemma \ref{lma:ICs}, there exists a bounded control input $\mathbf{u}_{\mathbf{z}}^\dagger$ such that $\left. \frac{d\hat{V}}{dt} \right|_{t=t^\dagger}=0$ in some finite time $t^\dagger<t_f$ without collisions.
The corresponding cost is
    \begin{align} \label{eq:stop-cost}
        J^\dagger &= \int_0^{t^\dagger}\Big\{(\mathbf{z}(t)-\mathbf{r}(t))^\top \mathbf{K} (\mathbf{z}(t)-\mathbf{r}(t))+(\mathbf{u}_{\mathbf{z}}^\dagger(t))^\top \mathbf{R}_{\mathbf{z}}\mathbf{u}_{\mathbf{z}}^\dagger(t)+ \frac{d\hat{V}}{dt}\Big\}\, dt \notag \\ &+ \int_{t^\dagger}^{t_f}\Big\{(\mathbf{z}(t)-\mathbf{r}(t))^\top \mathbf{K} (\mathbf{z}(t)-\mathbf{r}(t))+(\mathbf{u}_{\mathbf{z}}^\dagger(t)) ^\top\mathbf{R}_{\mathbf{z}}\mathbf{u}_{\mathbf{z}}^\dagger(t)\Big\}\, dt,
    \end{align}
where $V$ remains stationary for $t \geq t^\dagger$.
Every term in \eqref{eq:stop-cost} is finite. Therefore, $J^\dagger < J^*$ and $J^*$ is not optimal. This contradiction implies that our original assumption is incorrect, proving Theorem~\ref{thm:safe}.
\end{proof}

\section{Directionally Aware Collision Avoidance}\label{sec:collision}
In this section, we introduce two collision avoidance strategies that incorporate directional awareness by integrating the UAVs' velocities into the collision penalty function in Subsection~\ref{sec:collision-avoidance}. Recognizing that combining these strategies enhances the effectiveness of collision avoidance, we unify them into a single approach to provide UAVs with a more robust, directionally aware collision avoidance capability.

\subsection{Forward-path aware collision avoidance}\label{sec:forward}
To derive a collision avoidance control strategy that is directionally aware, we adjust the penalty function to prioritize avoiding collisions in the forward path of each UAV. We incorporate a forward-path aware weighting factor \(\alpha_{ij} \) into the collision penalty function~\eqref{eq:penalty} as
  \begin{equation*}
   v^F_{ij} = \alpha_{ij}v_{ij},
  \end{equation*}
where
 \begin{equation} \label{eq:weight}
\alpha_{ij} =
\begin{cases} 
  \max\{0, \cos(\eta_{ij})\}, & \text{if } \|\mathbf{v}_i(t)\| \neq 0, \\
  0, & \text{if } \|\mathbf{v}_i(t)\| = 0,
\end{cases}
\end{equation}
with
\[
\cos(\eta_{ij}) = \frac{(\mathbf{p}_j(t) - \mathbf{p}_i(t))^\top \mathbf{v}_i(t)}{\|\mathbf{p}_j(t) - \mathbf{p}_i(t)\|\|\mathbf{v}_i(t)\|}.
\]

The forward-path aware penalty function \(v^F_{ij}\) accounts for whether UAV \( j \) lies within the forward path of UAV \( i \), and it is zero otherwise. Specifically, when UAV \(j\) is in the forward path, the directionally aware weighting factor \(\alpha_{ij} \) is given by the cosine of the angle between the relative position vector and the velocity vector, as depicted in Fig.~\ref{fig:dir}. The cosine is obtained using the dot product as shown in~\eqref{eq:weight}. If UAV \(j\) is positioned behind UAV \(i\), the weighting factor \(\alpha_{ij} \) becomes zero.
\begin{figure}[htbp]
	\centering
    \includegraphics[width=0.6\linewidth]{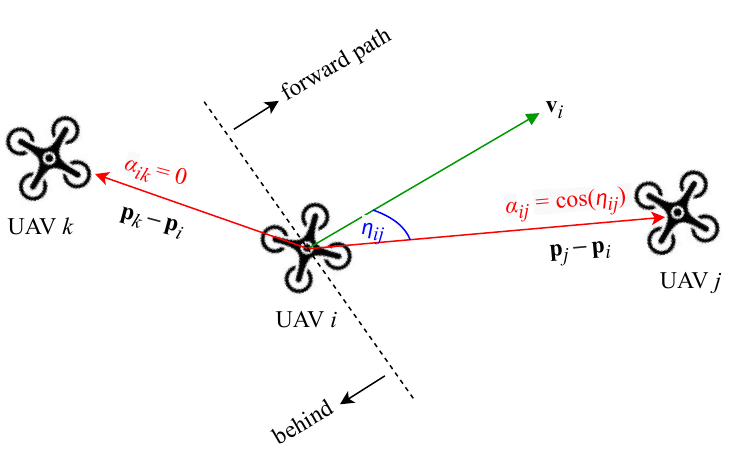}
\caption{The forward-path aware collision avoidance prioritizes avoiding collisions in the forward path. The velocity vector \(\mathbf{v}_i\) indicates the forward motion path of UAV \(i\). Within the collision reaction region, when UAV \(j\) is located in the forward path of UAV \(i\), the directionally aware weighting factor \(\alpha_{ij} \) for UAV \(i\) with respect to UAV \(j\) is defined as the cosine of the angle between the UAV \(i\)'s velocity vector \(\mathbf{v}_i\) and their relative distance vector \(\mathbf{p}_j-\mathbf{p}_i\). The value of \(\alpha_{ij}\) decreases as the UAV \(j\) moves further to the sides. By definition, \(\alpha_{ik}\) is zero when UAV \(k\) is located behind UAV \(i\).}
\label{fig:dir}
\end{figure}

\subsection{Approach-aware collision avoidance}\label{sec:approach}
In approach-aware collision avoidance, the collision penalty function is adjusted to prevent a collision when two UAVs are on an approach trajectory. This approach leverages the relative velocities to assess whether the UAVs are approaching each other. The approach-aware weighting factor \(\beta_{ij} \) is incorporated into~\eqref{eq:penalty} as
  \begin{equation*}
   v^A_{ij} = \beta_{ij}v_{ij},
  \end{equation*}
where
\begin{equation*}
\beta_{ij} =
\begin{cases}
  \max\left\{ 0,  -\cos(\zeta_{ij}) \right\}, & \text{if } \|\mathbf{v}_j(t) - \mathbf{v}_i(t)\| \neq 0, \\
  0, & \text{if } \|\mathbf{v}_j(t) - \mathbf{v}_i(t)\| = 0,
\end{cases}
\end{equation*}
with
\[
\cos(\zeta_{ij}) = \frac{(\mathbf{p}_j(t) - \mathbf{p}_i(t))^\top (\mathbf{v}_j(t) - \mathbf{v}_i(t))}{\|\mathbf{p}_j(t) - \mathbf{p}_i(t)\| \|\mathbf{v}_j(t) - \mathbf{v}_i(t)\|}.
\]
Here, \(\cos(\zeta_{ij})\) captures the alignment between the relative position \(\mathbf{p}_j(t) - \mathbf{p}_i(t)\) and velocity \(\mathbf{v}_j(t) - \mathbf{v}_i(t)\). When \(\cos(\zeta_{ij}) < 0\), the UAVs are approaching, posing a collision risk. If their velocities are identical, i.e., \(\|\mathbf{v}_j(t) - \mathbf{v}_i(t)\| = 0\), they remain at a constant separation. A higher \(\beta_{ij}\) (near 1) indicates a direct, high-speed approach, while a lower \(\beta_{ij}\) (closer to 0) suggests an oblique approach with reduced collision risk. When \(\beta_{ij} = 0\), the UAVs are either parallel, diverging, or stationary relative to each other.

\subsection{Directionally aware unified collision avoidance}

We combine the collision avoidance strategies from Subsections~\ref{sec:forward} and~\ref{sec:approach} into a unified approach that accounts for both UAV \(j\) being within UAV \(i\)'s forward path and the relative approach. This is achieved by defining the weighting factor \(\xi_{ij}\) as
\begin{equation*}
\xi_{ij} = \alpha_{ij} \beta_{ij},
\end{equation*}
or
\[
\xi_{ij} =
\begin{cases}
  \max\{0, \cos(\eta_{ij})\} \max\{0, -\cos(\zeta_{ij})\}, & \text{if } \|\mathbf{v}_i(t)\| \neq 0 \text{ and } \|\mathbf{v}_j(t) - \mathbf{v}_i(t)\| \neq 0, \\
  0, & \text{otherwise}.
\end{cases}
\]
Here, \(\alpha_{ij}\) ensures that UAV \(j\) is in the forward path of UAV \(i\). If UAV \(j\) is outside the forward path, \(\alpha_{ij} = 0\), making \(\xi_{ij} = 0\). \(\beta_{ij}\) ensures that UAV \(j\) is approaching UAV \(i\). If UAV \(j\) is moving away or stationary relative to UAV \(i\), \(\beta_{ij} = 0\), making \(\xi_{ij} = 0\). Thus, \(\xi_{ij}\) is nonzero only when UAV \(j\) lies in the forward path of UAV \(i\) (\(\alpha_{ij} > 0\)) and UAV \(j\) is approaching UAV \(i\) (\(\beta_{ij} > 0\)) and is zero if either condition is not met.

To demonstrate the improved effectiveness of the unified collision avoidance approach, we analyze two scenarios with two UAVs having evolving positions and velocities. In the first scenario, \(\mathbf{p}_1(0) = [1, 1,1]^\top\) and \(\mathbf{p}_2(0) = [7, 0,3]^\top\); in the second, \(\mathbf{p}_2(0)\) is changed to \([7, 3,1]^\top\). Both scenarios feature constant velocities \(\mathbf{v}_1(0) = [2, 1,0.5]^\top\) and \(\mathbf{v}_2(0) = [-2, 1,-0.5]^\top\). The evolution of relative vectors, \(\alpha_{12}\), \(\beta_{12}\), and \(\xi_{12}\) at some time steps are shown in Fig.~\ref{fig:relative_vectors}. Figs.~\ref{fig:relative_vectors}(\subref{vector:a})--(\subref{vector:d}) represent the first scenario, while Figs.~\ref{fig:relative_vectors}(\subref{vector:e})--(\subref{vector:h}) depict the second. Figs.~\ref{fig:relative_vectors}(\subref{vector:d}) and~\ref{fig:relative_vectors}(\subref{vector:h}) compare \(\alpha_{12}\) and \(\beta_{12}\) over 20 time steps. It is evident that \(\xi_{12}\) consistently captures collision risks more effectively and shows to be more reliable overall. 

\begin{figure}[htbp]
    \centering
    \begin{subfigure}{0.24\textwidth}
        \centering
        \includegraphics[width=\linewidth]{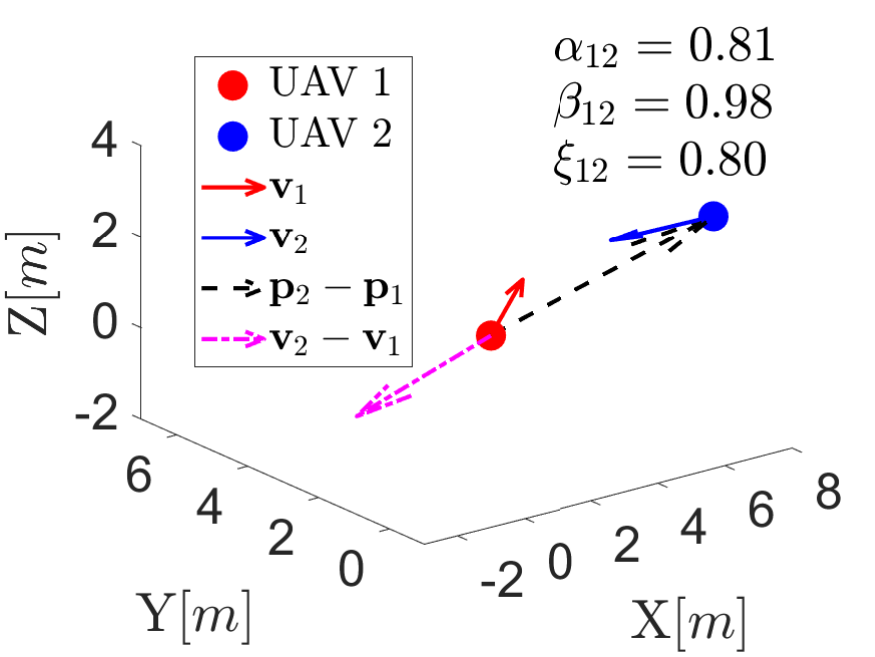}
        \subcaption[]{} \label{vector:a}
    \end{subfigure}
    \hfill
    \begin{subfigure}{0.24\textwidth}
        \centering
        \includegraphics[width=\linewidth]{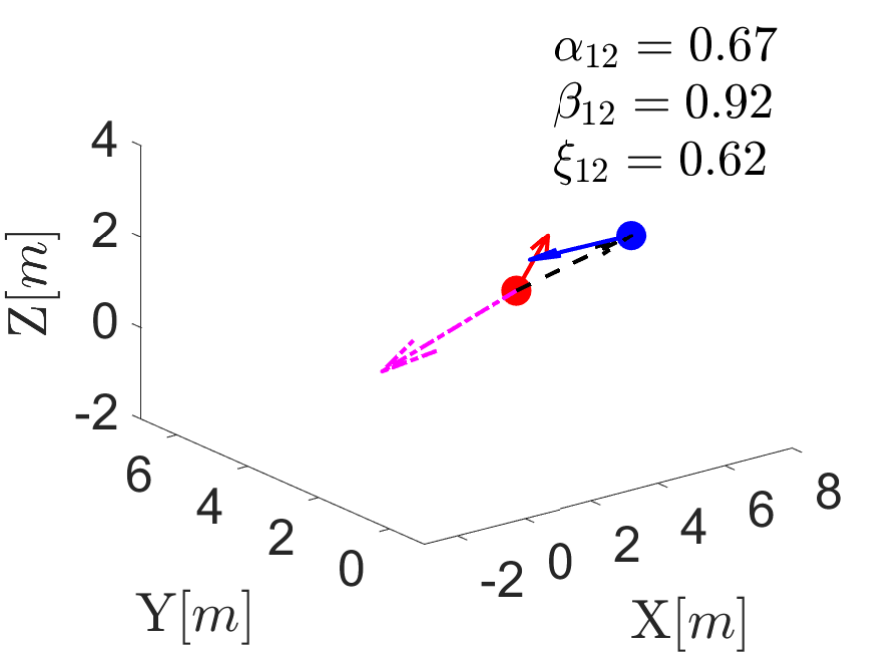}
        \subcaption[]{} \label{vector:b}
    \end{subfigure}
    \hfill
    \begin{subfigure}{0.24\textwidth}
        \centering
        \includegraphics[width=\linewidth]{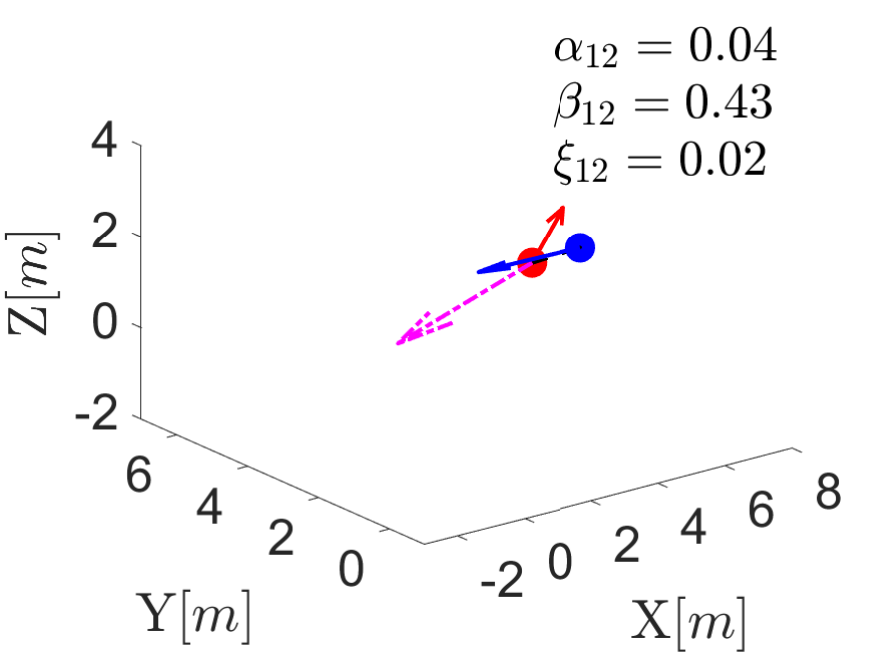}
        \subcaption[]{} \label{vector:c}
    \end{subfigure}
    \hfill
    \begin{subfigure}{0.24\textwidth}
        \centering
        \includegraphics[width=\linewidth]{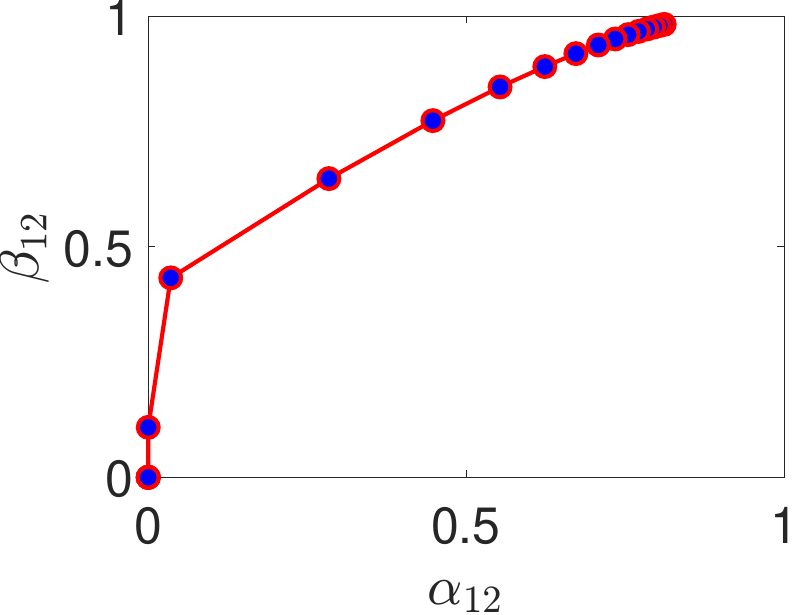}
        \subcaption[]{} \label{vector:d}
    \end{subfigure}    
    \vskip\baselineskip
    \begin{subfigure}{0.24\textwidth}
        \centering
        \includegraphics[width=\linewidth]{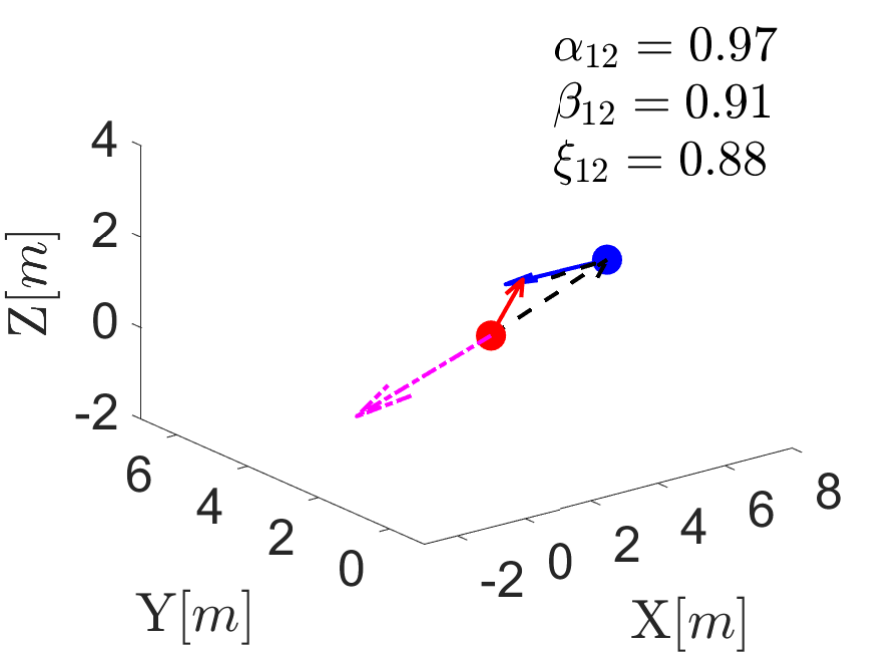}
        \subcaption[]{} \label{vector:e}
    \end{subfigure}
    \hfill
    \begin{subfigure}{0.24\textwidth}
        \centering
        \includegraphics[width=\linewidth]{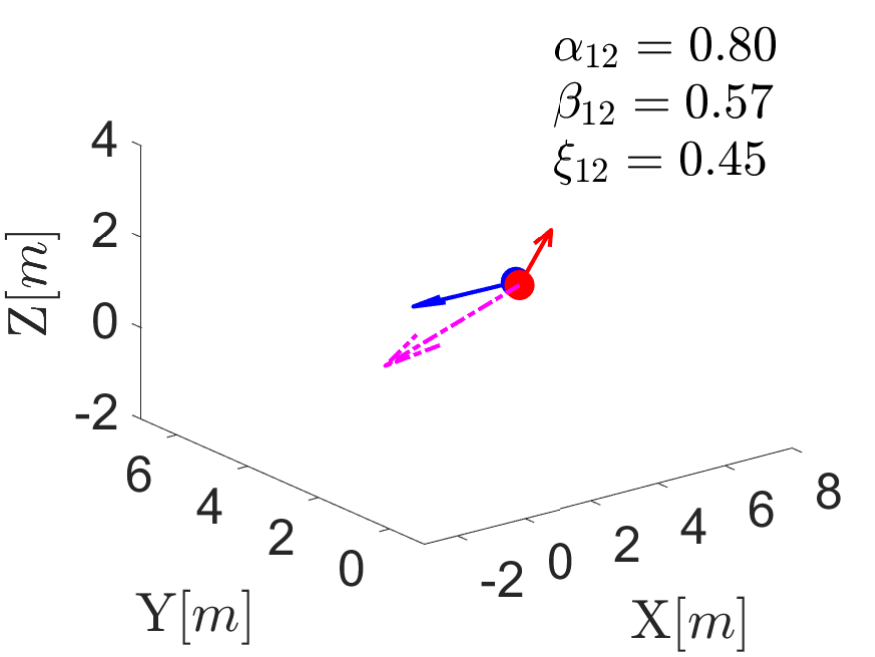}
        \subcaption[]{} \label{vector:f}
    \end{subfigure}
    \hfill
    \begin{subfigure}{0.24\textwidth}
        \centering
        \includegraphics[width=\linewidth]{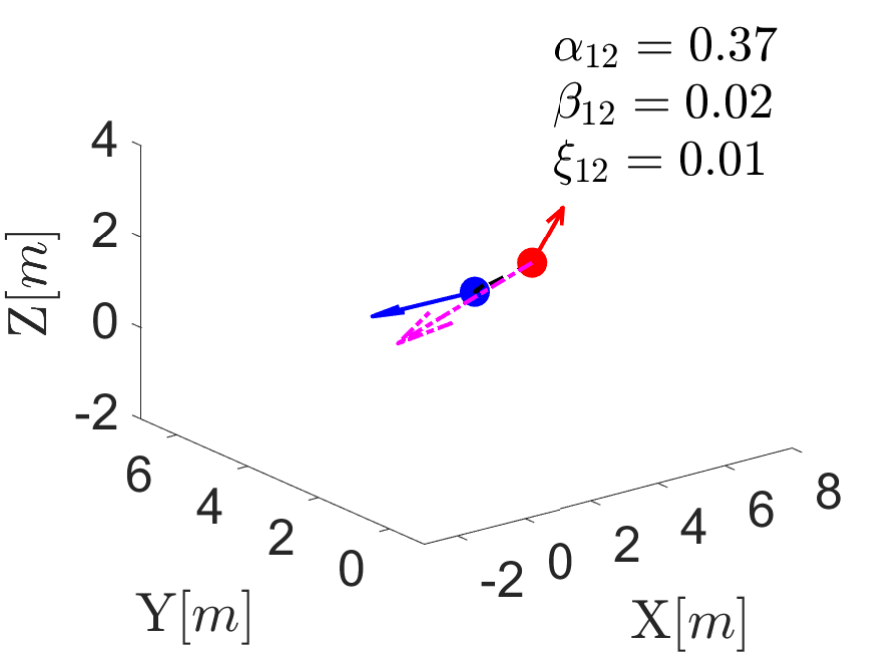}
        \subcaption[]{} \label{vector:g}
    \end{subfigure}
    \hfill
    \begin{subfigure}{0.24\textwidth}
        \centering
        \includegraphics[width=\linewidth]{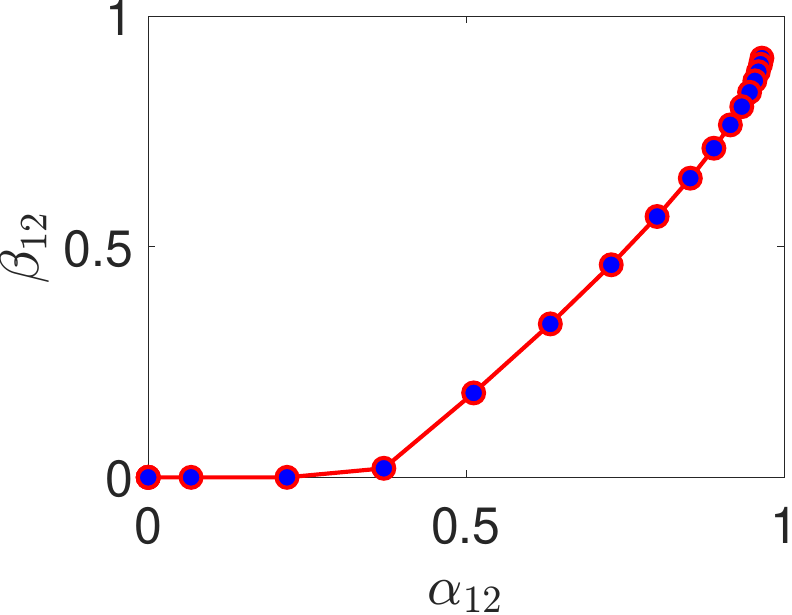}
        \subcaption[]{} \label{vector:h}
    \end{subfigure}
    \caption{Evolution of relative vectors, \(\alpha_{12}\), \(\beta_{12}\), and \(\xi_{12}\) for two UAVs in two scenarios. Scenario 1 is depicted in subfigures (a) to (d), and Scenario 2 in subfigures (e) to (h).}
    \label{fig:relative_vectors}
\end{figure}

We then incorporate the directionally aware penalty function \(v^U_{ij}\) defined as
  \begin{equation*}
   v^U_{ij} = \xi_{ij}v_{ij},
  \end{equation*}
into the HJB equation, replacing \(v_{ij}\) with \(v^U_{ij}\) in the minimization to account for the new penalty function. In the feedback control law \( \mathbf{u}_{\mathbf{z}}(t)\) in~\eqref{eq:feedback} now \(\frac{\partial V}{\partial \mathbf{z}}\) accounts for the directionally aware collision avoidance strategy. This approach provides a collision avoidance control that emphasizes avoiding collisions in the forward path as well as along an approach trajectory, promoting adaptive behavior, and enhancing collision-free formation tracking efficiency.

\begin{remark} \label{remark:role} The weighting factors \( \alpha_{ij},\beta_{ij},\xi_{ij} \) are introduced to dynamically adjust the sensitivity of the penalty function \( v_{ij} \) based on the relative motion of the agents. In the context of directionally aware collision avoidance, \( v_{ij} \) quantifies the collision risk between agents based on their proximity, and \( \alpha_{ij},\beta_{ij},\xi_{ij} \) modulate the impact of \( v_{ij} \) by accounting for the agents' forward paths and relative approach. This distinction ensures that the weighting factors do not alter the fundamental role of \( v_{ij} \) in the collision avoidance mechanism. Rather, they dynamically scale its effect based on directional considerations.  
\end{remark}

\begin{assumption} \label{assumption:constancy} (Constancy of weighting factors in derivatives of the penalty function)
Given the distinct roles of the weighting factors \( \alpha_{ij}, \beta_{ij}, \xi_{ij} \) in dynamically adjusting the sensitivity of the penalty function \( v_{ij} \), as stated in Remark~\ref{remark:role}, we assume that these factors are treated as constants when calculating the time and partial derivatives of \( v_{ij} \).
\end{assumption}

From their definitions, the boundedness of the weighting factors is evident (\( \alpha_{ij}, \beta_{ij}, \xi_{ij} \in [0, 1] \)). In Subsection~\ref{sec:collision-avoidance}, we demonstrate that the HJB equation, constructed using the time and partial derivatives of \( v_{ij} \), remains non-increasing within the collision reaction region \( \Gamma(\mathbf{p}) \). The inclusion of the bounded weighting factors as coefficients to \( v_{ij} \), in accordance with Remark~\ref{remark:role}, does not alter the directional behavior of the HJB equation nor increase its magnitude in \( \Gamma(\mathbf{p}) \). This is because the weighting factors do not amplify the penalty function beyond its original bounds, preventing any unbounded growth that could destabilize the HJB equation. Therefore, under Assumption~\ref{assumption:constancy}, the conditions outlined in Subsection~\ref{sec:collision-avoidance} are preserved.

\begin{remark} \label{remark:stability}
Assumption~\ref{assumption:constancy} simplifies the computation of the penalty function derivatives and improves computational efficiency, robustness, and smoothness of control responses. However, as demonstrated in Appendix~\ref{app:dir}, the control strategy in~\eqref{eq:feedback}, utilizing the forward-path-aware penalty function \( v_{ij}^F \), ensures bounded control inputs even without Assumption~\ref{assumption:constancy}, thereby guaranteeing stability. Analogous stability conditions can be derived for the approach-aware penalty function \( v_{ij}^A \) and the unified penalty function \( v_{ij}^U \), reinforcing the robustness of the directionally aware collision avoidance framework.
\end{remark}

\section{Simulation Results}\label{sec:sim}
In this section, we first consider a four-UAV ($N=4$) (re)formation problem to demonstrate the efficacy of the presented control scheme outlined in Fig.~\ref{fig:control-scheme}. In the initial configuration, the UAVs are positioned in a square formation in the $x-y$ plane from the top, as illustrated in Fig.~\ref{fig:formation}(\subref{fig:initial}). The formation shape is determined by the offset vectors $\mathbf{d}_{12}=[-4,-4,0]^\top\, \mathrm{m}$ and $\mathbf{d}_{13}=\mathbf{d}_{24}=[4,-4,0]^\top\, \mathrm{m}$, which characterize a diamond alignment in the $x-y$ plane, as depicted in Fig.~\ref{fig:formation}(\subref{fig:final}).

\begin{figure}[htbp]
	\centering
	\begin{subfigure}{0.3\linewidth}
		\includegraphics[width=\linewidth]{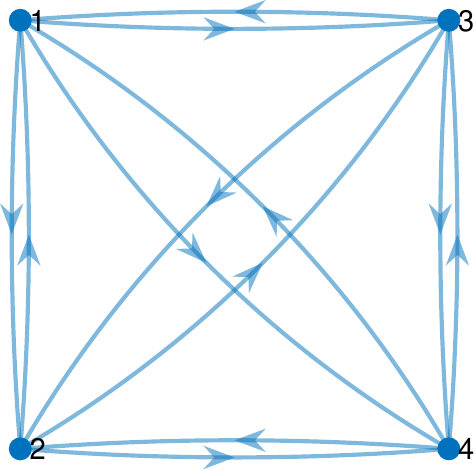}
		\caption{Communication graph}
		\label{fig:initial}
	\end{subfigure}
	\hspace{0.2\linewidth}  
	\begin{subfigure}{0.3\linewidth}
	  \includegraphics[width=\linewidth]{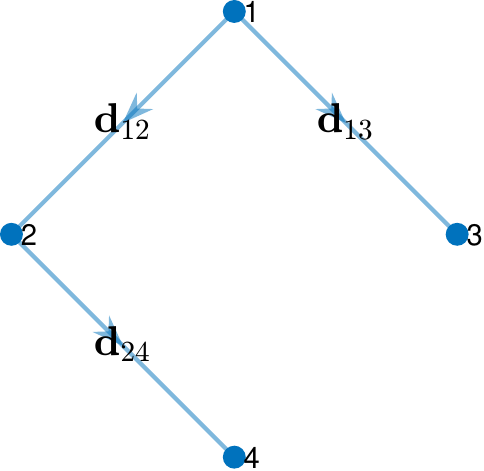}
		\caption{Desired formation graph}
		\label{fig:final}
    \end{subfigure}
\caption{The communication graph \(\mathcal{G}_c(\mathcal{V},\mathcal{E}_c)\) and the desired formation graph \(\mathcal{G}_f(\mathcal{V},\mathcal{E}_f)\) for a team of four UAVs. The graphs illustrate the spatial arrangement of the UAVs in the \(x\)--\(y\) plane, corresponding to their initial and desired formations, respectively.}
\label{fig:formation}
\end{figure}

The initial states of the UAVs in the flat coordinates are $\mathbf{p}_1(0)=[0,0,5]^\top\, \mathrm{m}$, $\mathbf{p}_2(0)=[5,0,0]^\top\, \mathrm{m}$, $\mathbf{p}_3(0)=[0,5,0]^\top\, \mathrm{m}$, $\mathbf{p}_4(0)=[5,5,0]^\top\, \mathrm{m}$, $\dot{\mathbf{p}}_i(0)=[0,0,0]^\top\, \mathrm{m/s}$, $\ddot{\mathbf{p}}_i(0)=[0,0,0]^\top\, \mathrm{m/s^2}$, and $\psi_i(0)=0\, \mathrm{rad}$, $\dot{\psi}_i(0)=0\, \mathrm{rad/s}$, $\ddot{\psi}_i(0)=0\, \mathrm{rad/s^2}$ for $i\in\{1,2,3,4\}$. The weighting parameters of the performance index (\ref{eq:PI}) are as $\mu_{12}=0.9$, $\mu_{13}=0.7$, $\mu_{14}=0.5$, $\mu_{23}=0.6$, $\mu_{24}=0.8$, $\mu_{34}=0.4$, $\omega_{ij}=1$ for all $(i,j)\in\mathcal{E}_f$, and $\gamma_i=1$ for all $i\in\{1,2,3,4\}$. The horizon length is $t_f=10\, \mathrm{s}$. For each UAV, its mass, distance from the rotor to the center of mass, gravity acceleration, and moment of inertia are respectively set as \(m=1.0\, \mathrm{kg}\), \(l=0.2\, \mathrm{m}\), \(g=9.81\, \mathrm{m/s^2}\), and \(\mathcal{I}_{xx}=\mathcal{I}_{xx}=\mathcal{I}_{xx}= 0.016\, \mathrm{kg.m^2}\).

Figs.~\ref{fig:planning}–\ref{fig:planning-history} show the optimal formation trajectories (from Theorem~\ref{theorem:sol-open}) and their time histories. Without collision avoidance, these trajectories demonstrate optimal control strategies for achieving the diamond formation within $t_f=10\, \mathrm{s}$. The decreasing control inputs \(\mathbf{u}_{\mathbf{r}}\) over time, as shown in the time histories in Figs.~\ref{fig:planning-history}, reflect the achievement of the optimal performance index, which includes a penalty on control effort in addition to the formation requirements. 

\begin{figure}[htbp]
	\centering
	\begin{subfigure}{0.49\linewidth}
		\includegraphics[width=\linewidth]{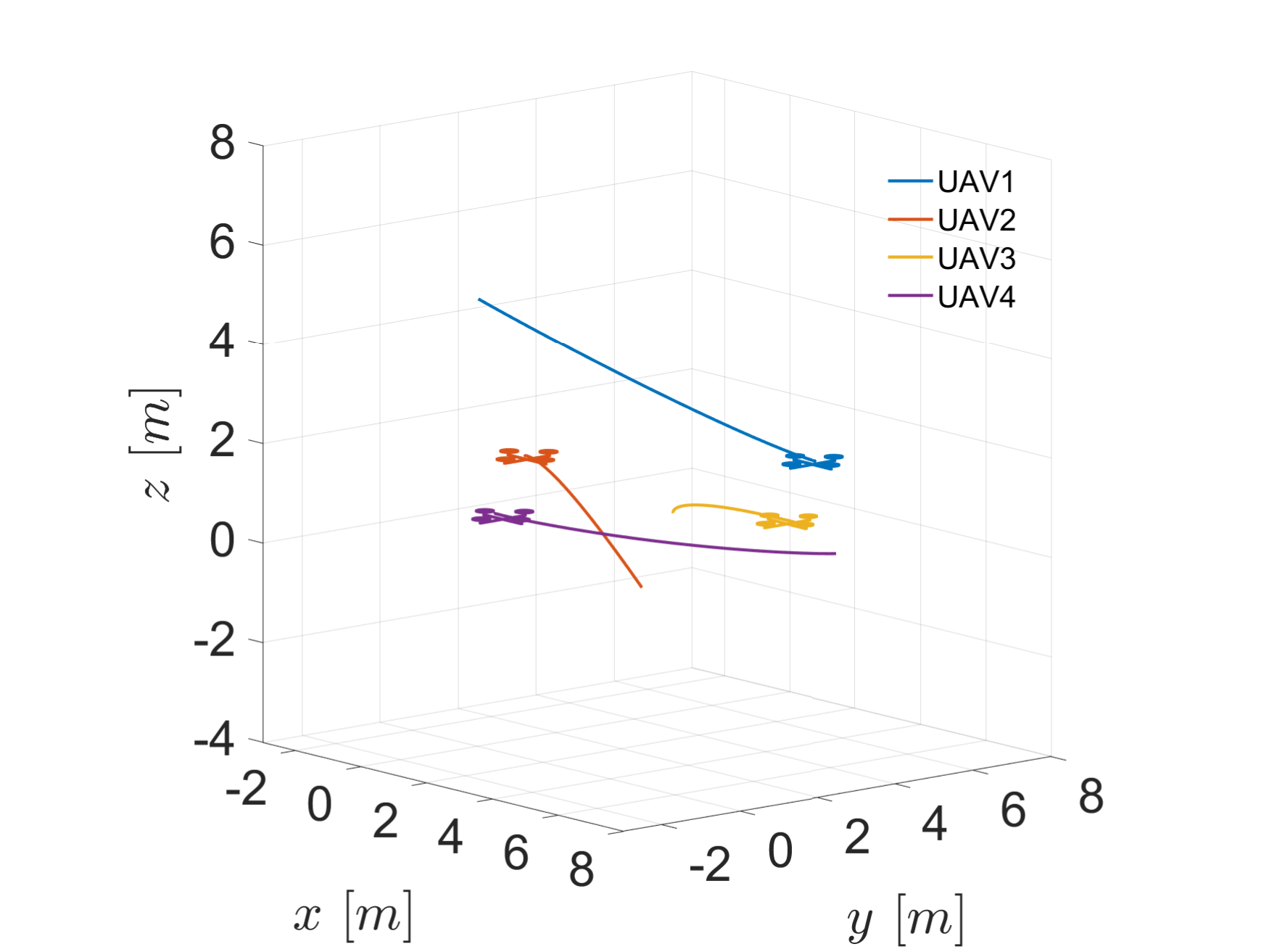}
		\caption{Optimal formation trajectories}
	\end{subfigure}
	\begin{subfigure}{0.49\linewidth}
	  \includegraphics[width=\linewidth]{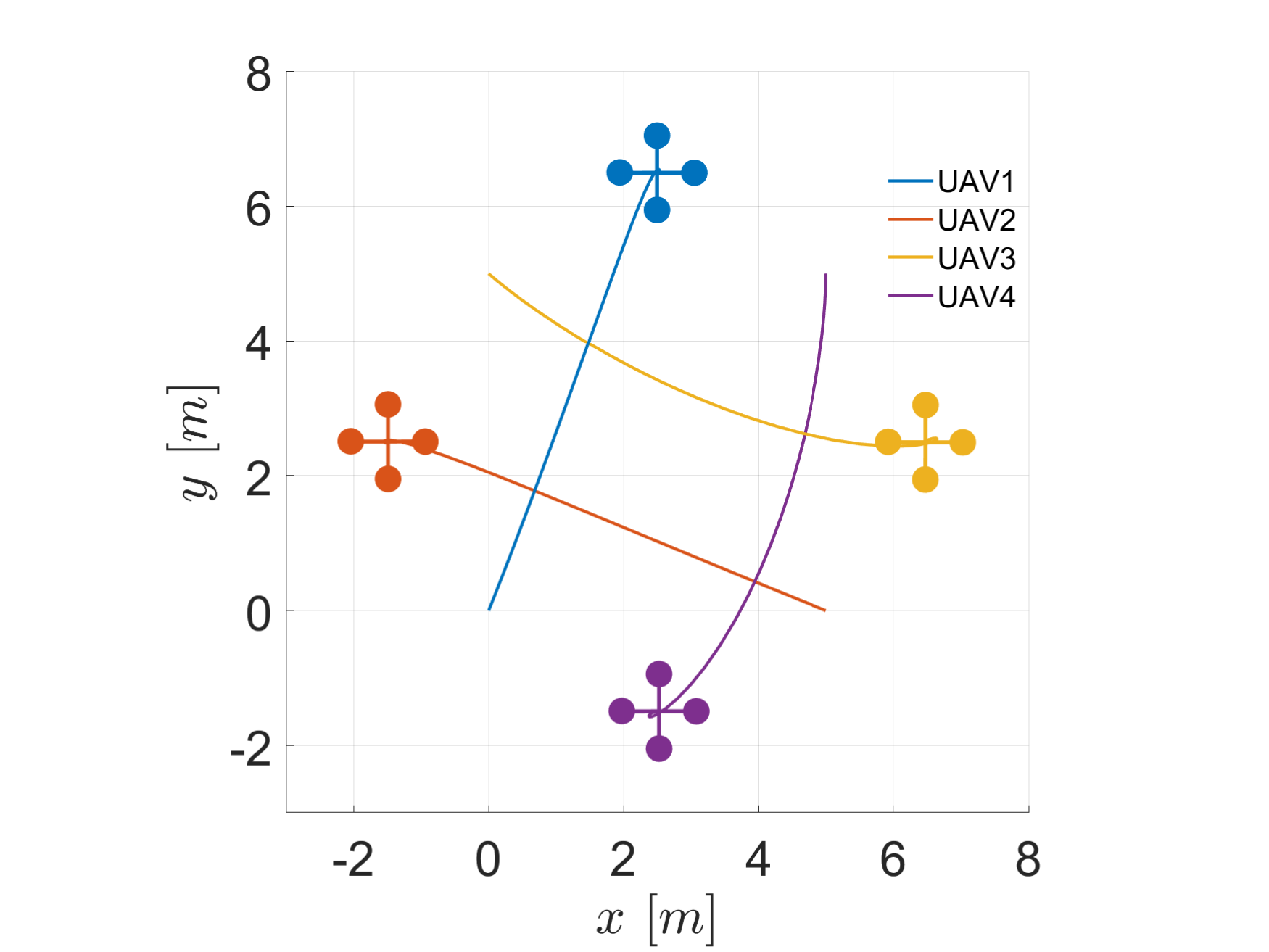}
		\caption{Top view of the trajectories}
    \end{subfigure}
\caption{Optimal formation trajectories of UAVs. Initially, the UAVs are placed such that from the top, they are seen in a square formation and then transition into a diamond formation in the $x-y$ plane.}
\label{fig:planning}
\end{figure}

\begin{figure}[htbp]
	\centering
	\begin{subfigure}{0.24\linewidth}
		\includegraphics[width=\linewidth]{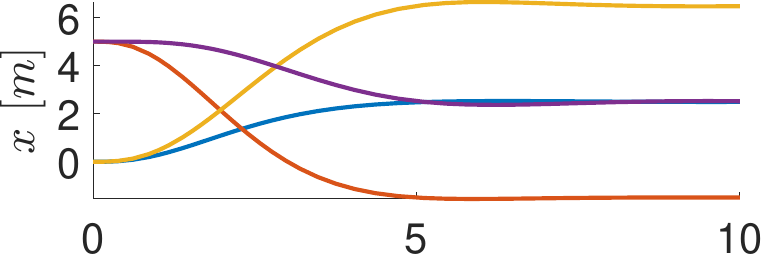}
        \includegraphics[width=\linewidth]{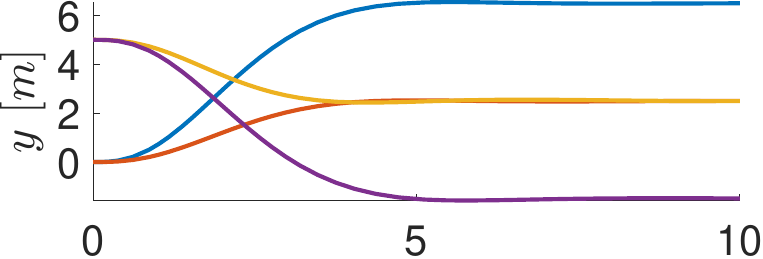}
        \includegraphics[width=\linewidth]{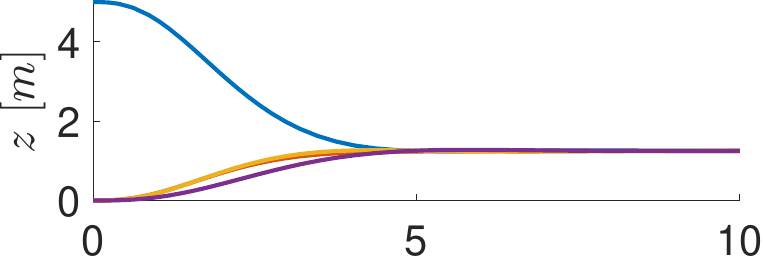}
        \includegraphics[width=\linewidth]{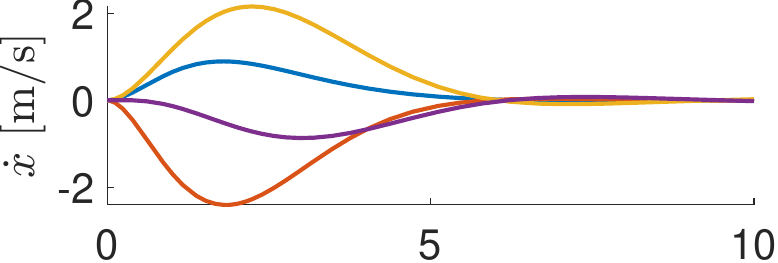}
        \includegraphics[width=\linewidth]{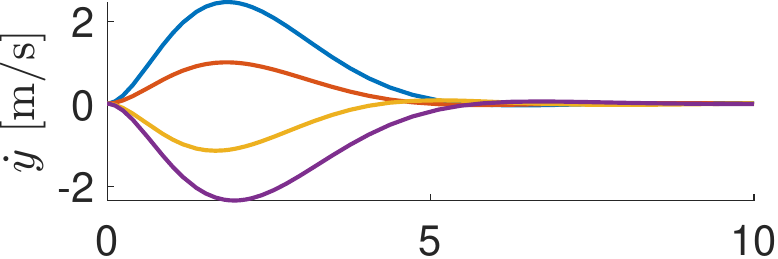}
        \includegraphics[width=\linewidth]{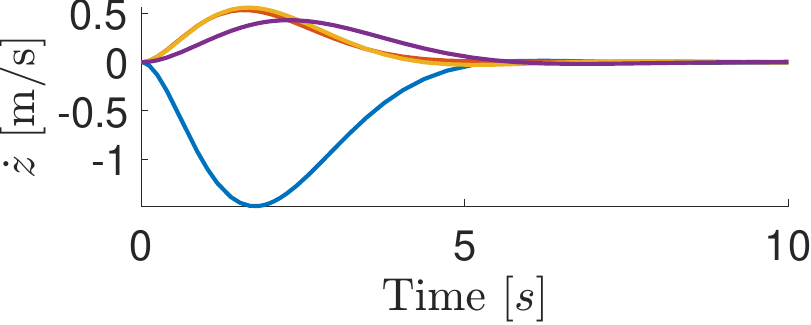}
		\caption{Positions and velocities}
	\end{subfigure}
    	\begin{subfigure}{0.24\linewidth}
		\includegraphics[width=\linewidth]{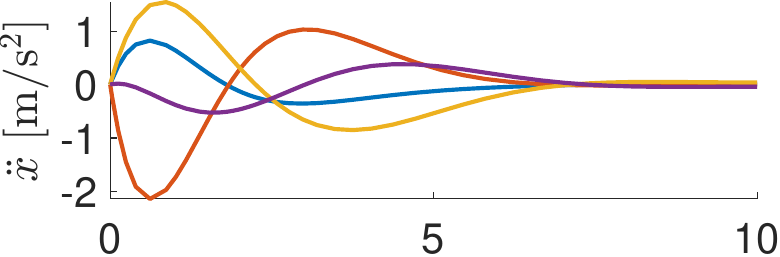}
        \includegraphics[width=\linewidth]{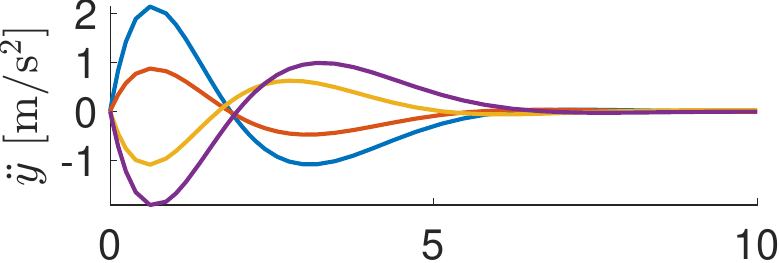}
        \includegraphics[width=\linewidth]{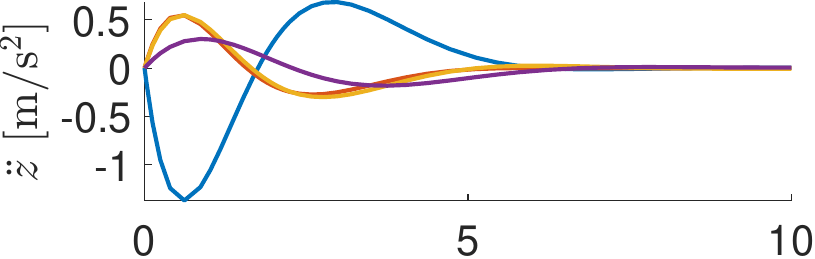}
        \includegraphics[width=\linewidth]{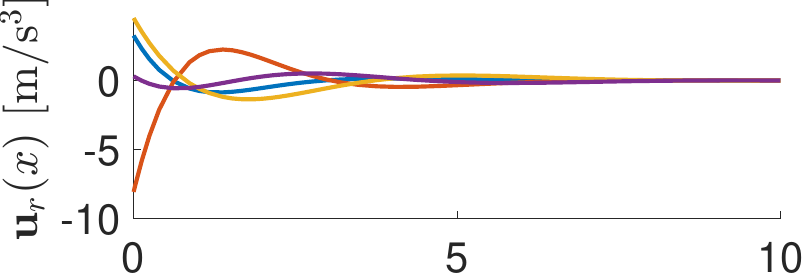}
        \includegraphics[width=\linewidth]{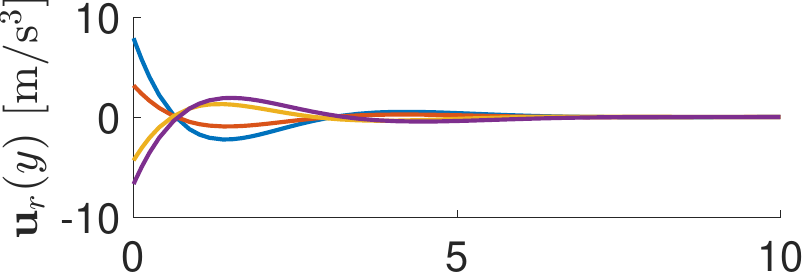}
        \includegraphics[width=\linewidth]{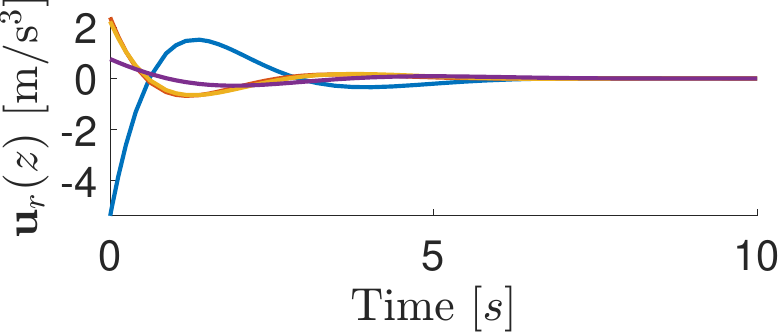}
		\caption{Accelerations and \(\mathbf{u}_{\mathbf{r}}\)}
	\end{subfigure}
    \begin{subfigure}{0.24\linewidth}
		\includegraphics[width=\linewidth]{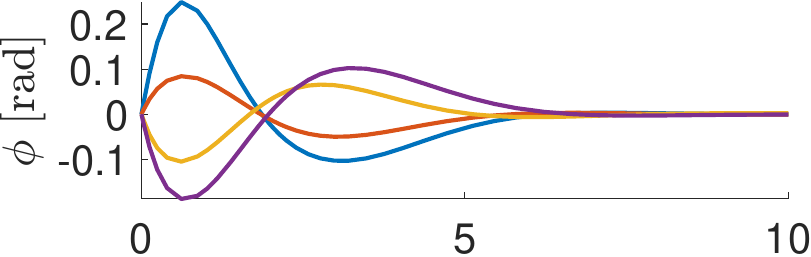}
        \includegraphics[width=\linewidth]{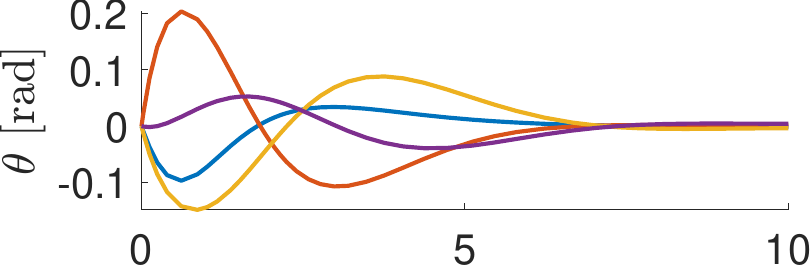}
        \includegraphics[width=\linewidth]{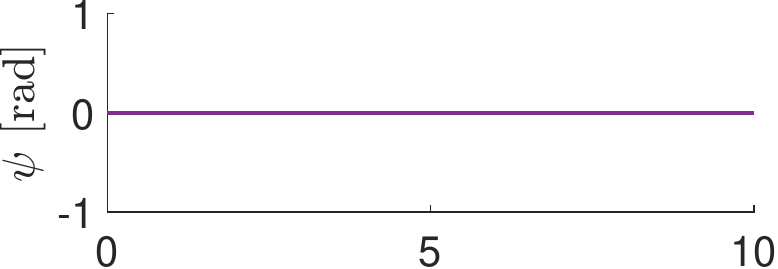}
        \includegraphics[width=\linewidth]{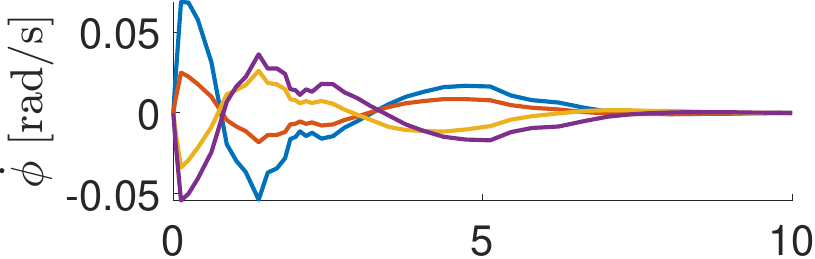}
        \includegraphics[width=\linewidth]{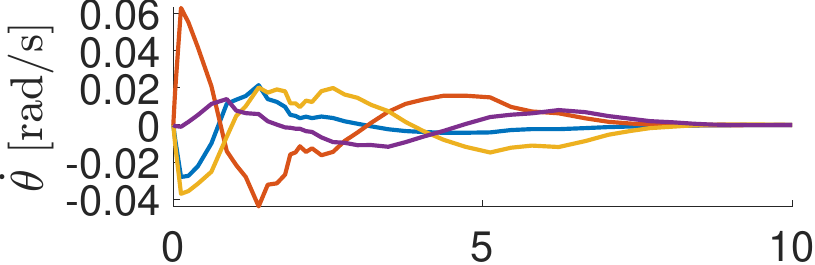}
        \includegraphics[width=\linewidth]{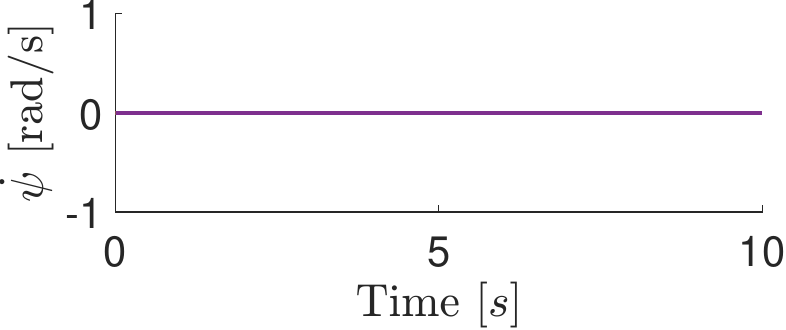}
		\caption{Attitudes with rates}
	\end{subfigure}
	\begin{subfigure}{0.45\linewidth}
	  \includegraphics[width=\linewidth]{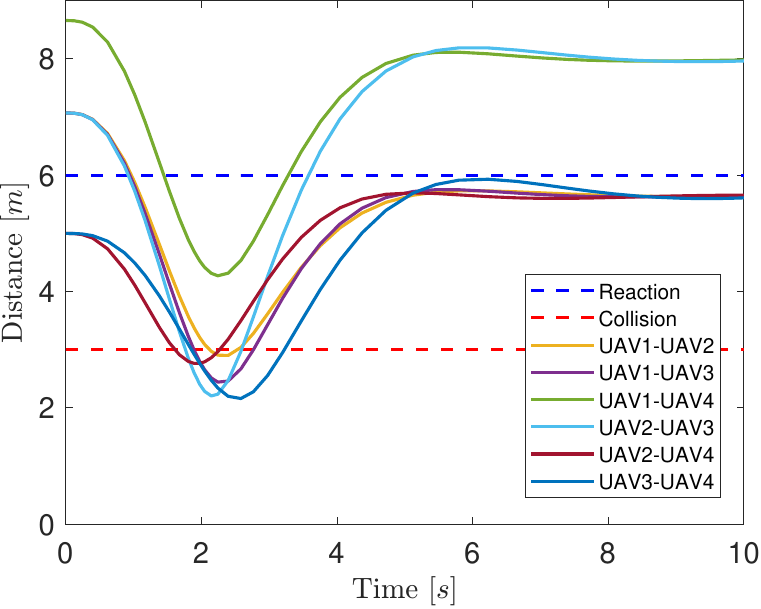}
		\caption{Euclidean distances}
		\label{fig:traj-dir}
    \end{subfigure}
\caption{Time histories of positions, velocities, accelerations, jerks (i.e., control inputs \(\mathbf{u}_{\mathbf{r}}\)), attitudes with rates, and Euclidean distances for all UAV pairs $(i,j) \in \mathcal{E}_f$. The blue dashed line represents the distance before entering the collision reaction region (i.e., $R_i+R_j$), and the red dashed line represents the distance before entering the collision region (i.e., $r_i+r_j$).}
\label{fig:planning-history}
\end{figure}

The formation trajectory tracking involves tracking the optimal formation trajectory planning signal while considering inter-UAV collisions, as outlined in Fig.~\ref{fig:control-scheme}. The avoidance region radius for all UAVs is defined as the same value $r_i=1.5\, \mathrm{m}$ for all $i={1,\cdots,4}$. This implies that a collision occurs for every UAV pair $(i,j)$ if they approach each other closer than $3\, \mathrm{m}$. We choose the radius of the collision reaction region $R_i=2r_i$ for all $i={1,\cdots,4}$.
The parameters of the tracking performance index (\ref{eq:PI-track-comp}) are set as $\zeta_{ij}=10\mu_{ij}$, $\delta_{ij}=10\omega_{ij}$, and $\eta_i=\gamma_i$ for all $i,j\in\{1,2,3,4\}$ and $(i,j)$ in the communication graph. Utilizing the collision penalty function from Subsection~\ref{sec:collision-avoidance}, the formation tracking trajectories and their corresponding time histories are shown in Figs.~\ref{fig:tracking}-\ref{fig:tracking-history}, respectively. It is apparent from both figures that the UAV team tracks the optimal trajectories and successfully achieves the desired formation at the terminal time. The relative distances between any UAV pair never decrease below $r_i+r_j$, even though some UAV pairs initially have a relative distance between $r_i+r_j$ and $R_i+R_j$. Despite the collision avoidance control strategy being applied from the outset, it separates the UAV pairs before they reach the collision region.

\begin{figure}[htbp]
	\centering
	\begin{subfigure}{0.49\linewidth}
		\includegraphics[width=\linewidth]{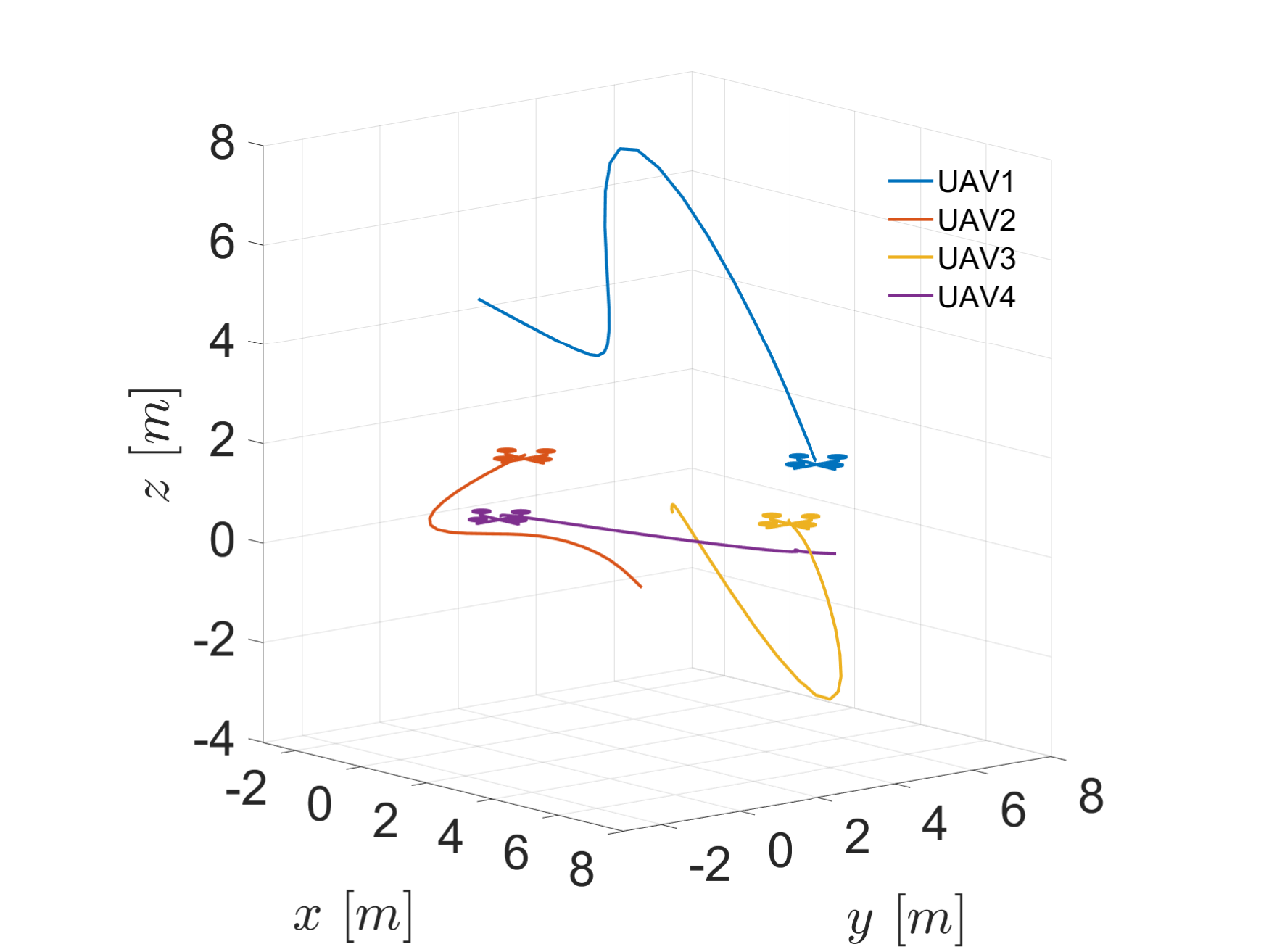}
		\caption{Formation tracking trajectories}
	\end{subfigure}
	\begin{subfigure}{0.49\linewidth}
	  \includegraphics[width=\linewidth]{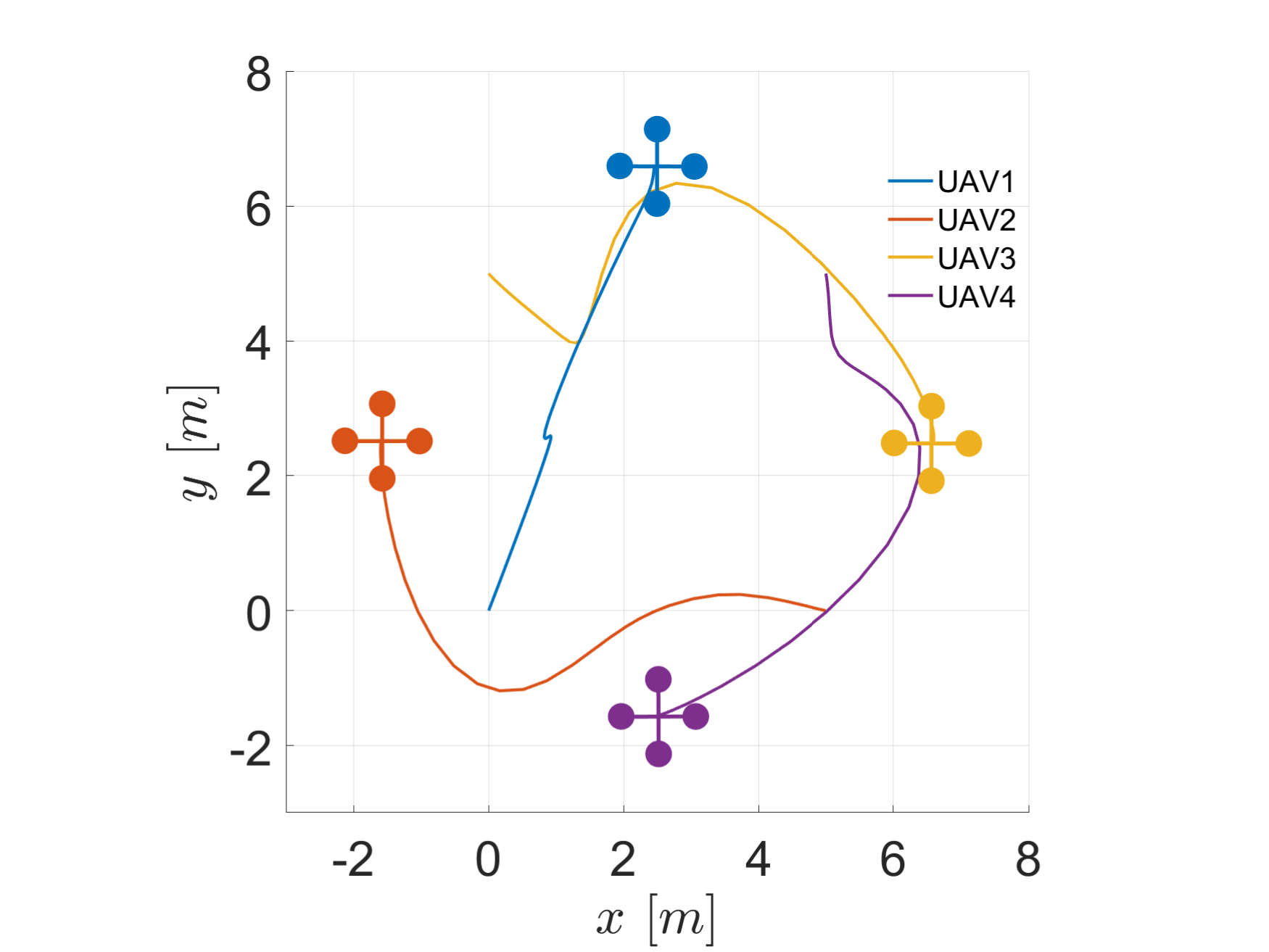}
		\caption{Top view of the tracking trajectories}
    \end{subfigure}
\caption{Formation trajectory tracking, which involves following the optimal formation trajectory while accounting for inter-UAV collisions.}
\label{fig:tracking}
\end{figure}

\begin{figure}[htbp]
	\centering
	\begin{subfigure}{0.24\linewidth}
		\includegraphics[width=\linewidth]{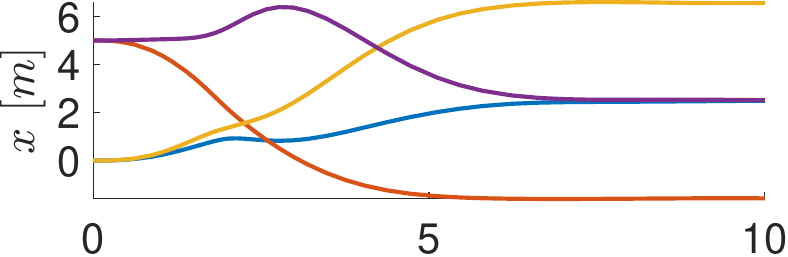}
        \includegraphics[width=\linewidth]{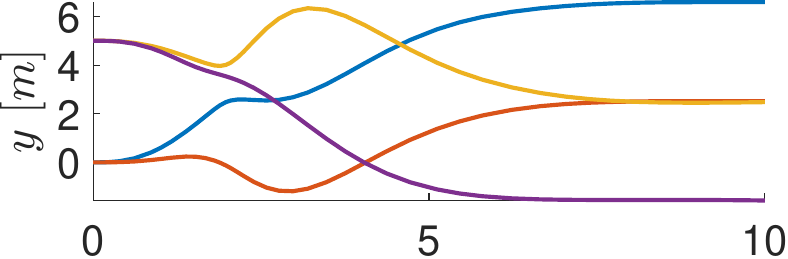}
        \includegraphics[width=\linewidth]{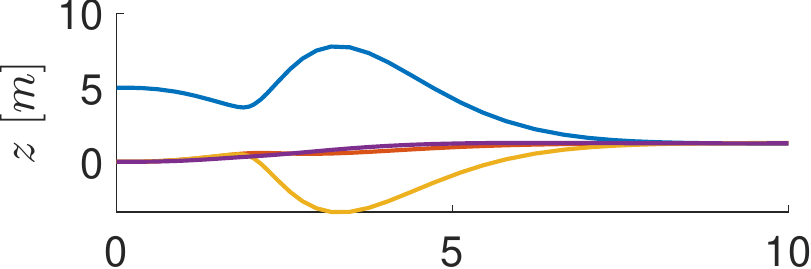}
        \includegraphics[width=\linewidth]{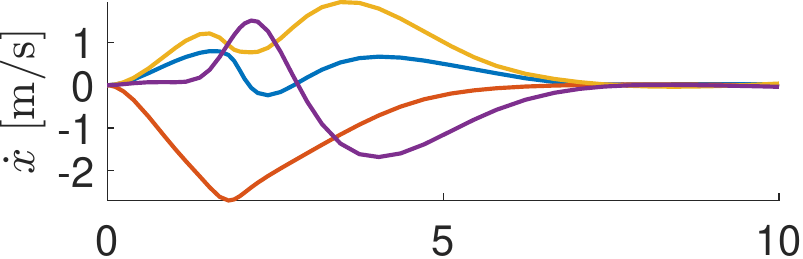}
        \includegraphics[width=\linewidth]{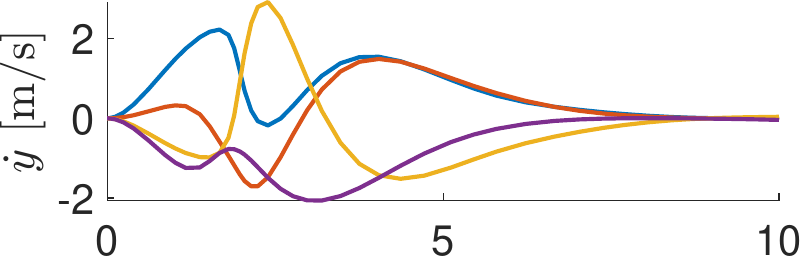}
        \includegraphics[width=\linewidth]{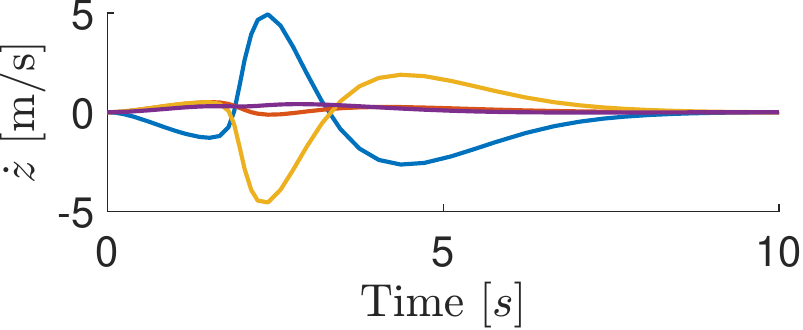}
		\caption{Positions and velocities}
	\end{subfigure}
    \begin{subfigure}{0.24\linewidth}
		\includegraphics[width=\linewidth]{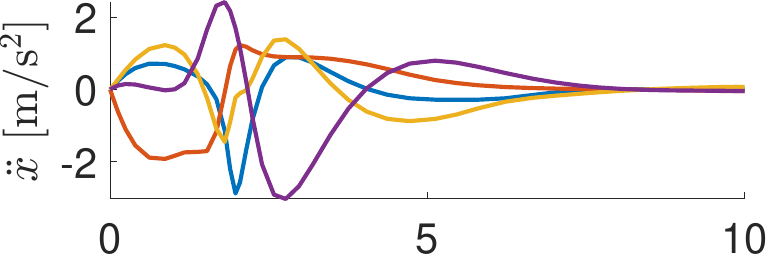}
        \includegraphics[width=\linewidth]{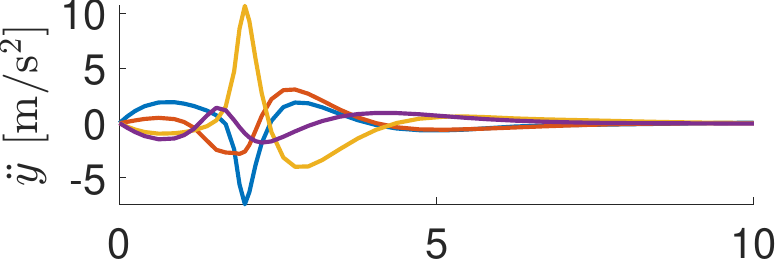}
        \includegraphics[width=\linewidth]{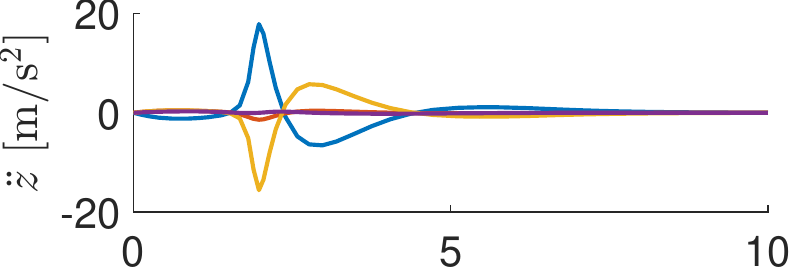}
        \includegraphics[width=\linewidth]{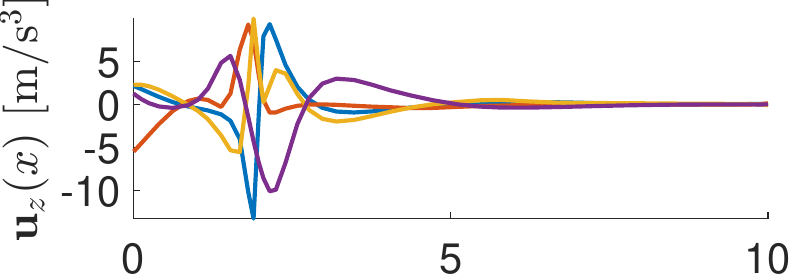}
        \includegraphics[width=\linewidth]{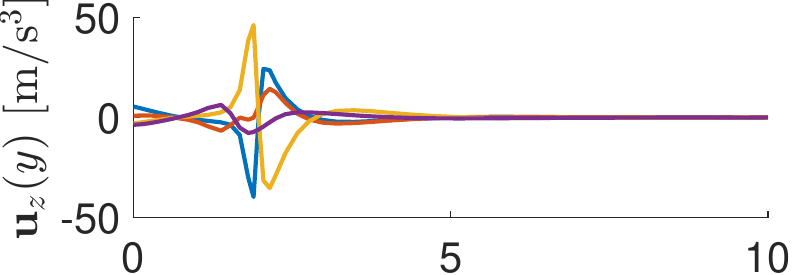}
        \includegraphics[width=\linewidth]{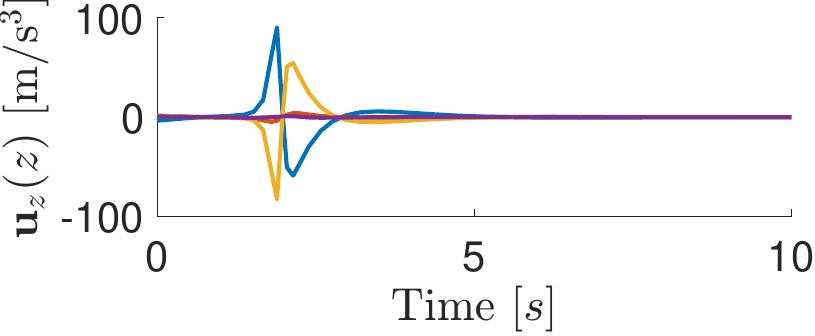}
		\caption{Accelerations and \(\mathbf{u}_{\mathbf{z}}\)}
	\end{subfigure}
    \begin{subfigure}{0.24\linewidth}
		\includegraphics[width=\linewidth]{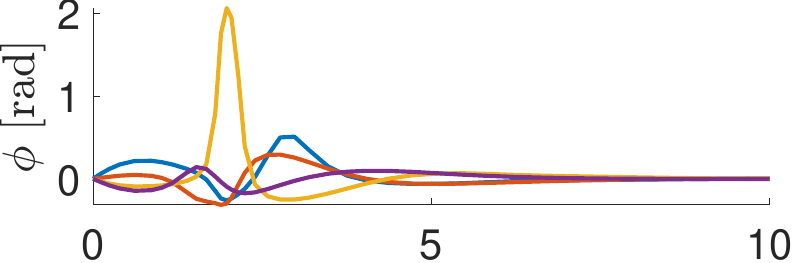}
        \includegraphics[width=\linewidth]{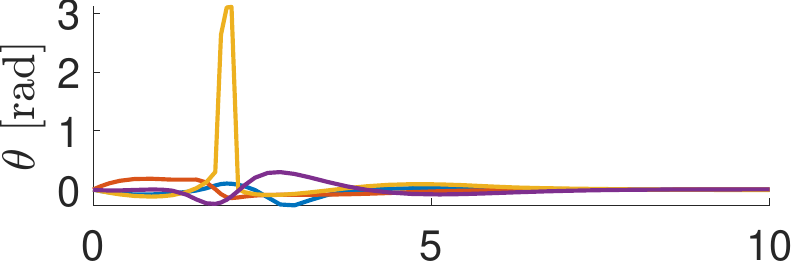}
        \includegraphics[width=\linewidth]{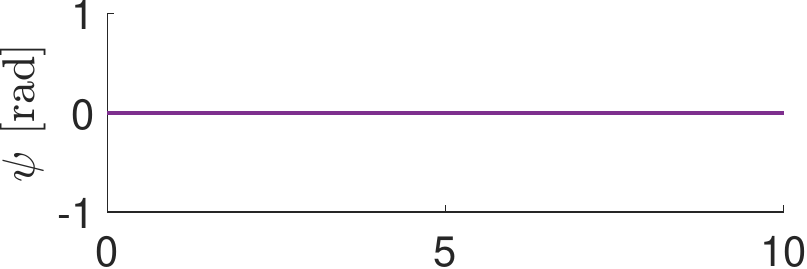}
        \includegraphics[width=\linewidth]{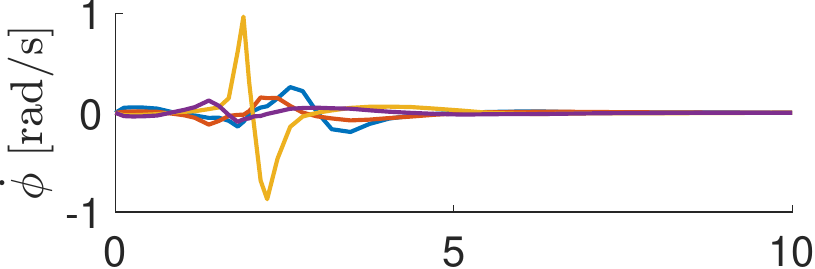}
        \includegraphics[width=\linewidth]{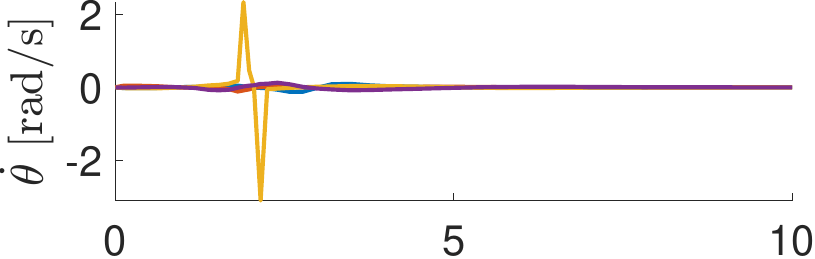}
        \includegraphics[width=\linewidth]{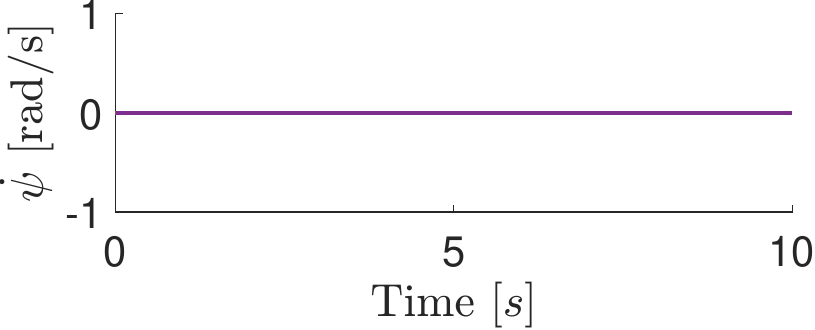}
		\caption{Attitudes with rates}
	\end{subfigure}
	\begin{subfigure}{0.45\linewidth}
	  \includegraphics[width=\linewidth]{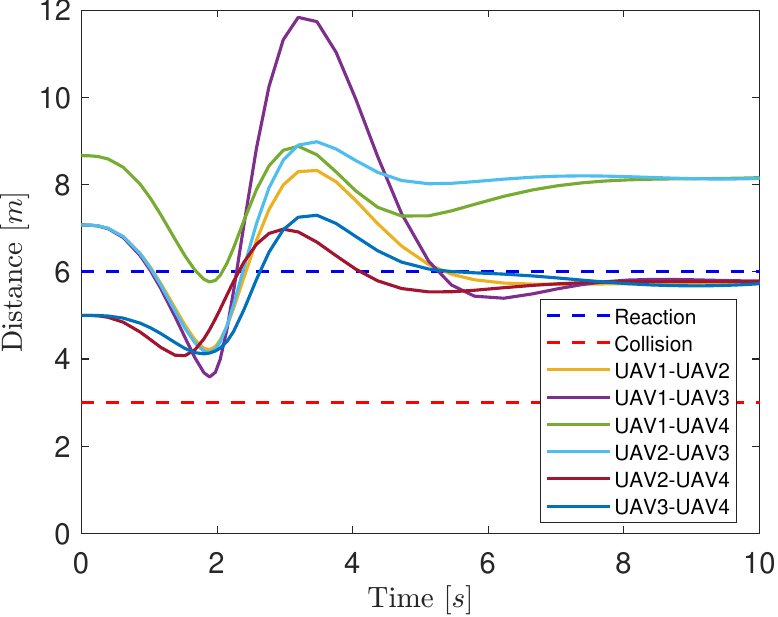}
		\caption{Euclidean distances}
    \end{subfigure}
\caption{Time histories of positions, velocities, accelerations, jerks (\(\mathbf{u}_{\mathbf{z}}\)), attitudes with rates, and Euclidean distances for formation tracking trajectories.}
\label{fig:tracking-history}
\end{figure}

The formation trajectories and their corresponding time histories using the directionally aware collision avoidance tracking strategy are shown in Fig.~\ref{fig:aware}-\ref{fig:aware-history}, respectively. Compared to the non-directionally aware approach (Figs.~\ref{fig:tracking}–\ref{fig:tracking-history}), the directionally aware strategy minimizes deviations from optimal trajectories while avoiding collisions. The results highlight the effectiveness of the proposed formation trajectory planning and tracking, demonstrating successful collision-free multi-UAV formation control. Notably, UAVs employing the directionally aware collision avoidance strategy require minimal deviation from their optimal paths to ensure collision avoidance. Thus, the directionally aware collision avoidance approach proves more efficient by adaptively prioritizing collision avoidance in both the forward path and the relative approach for each UAV.

\begin{figure}[htbp]
	\centering
	\begin{subfigure}{0.49\linewidth}
		\includegraphics[width=\linewidth]{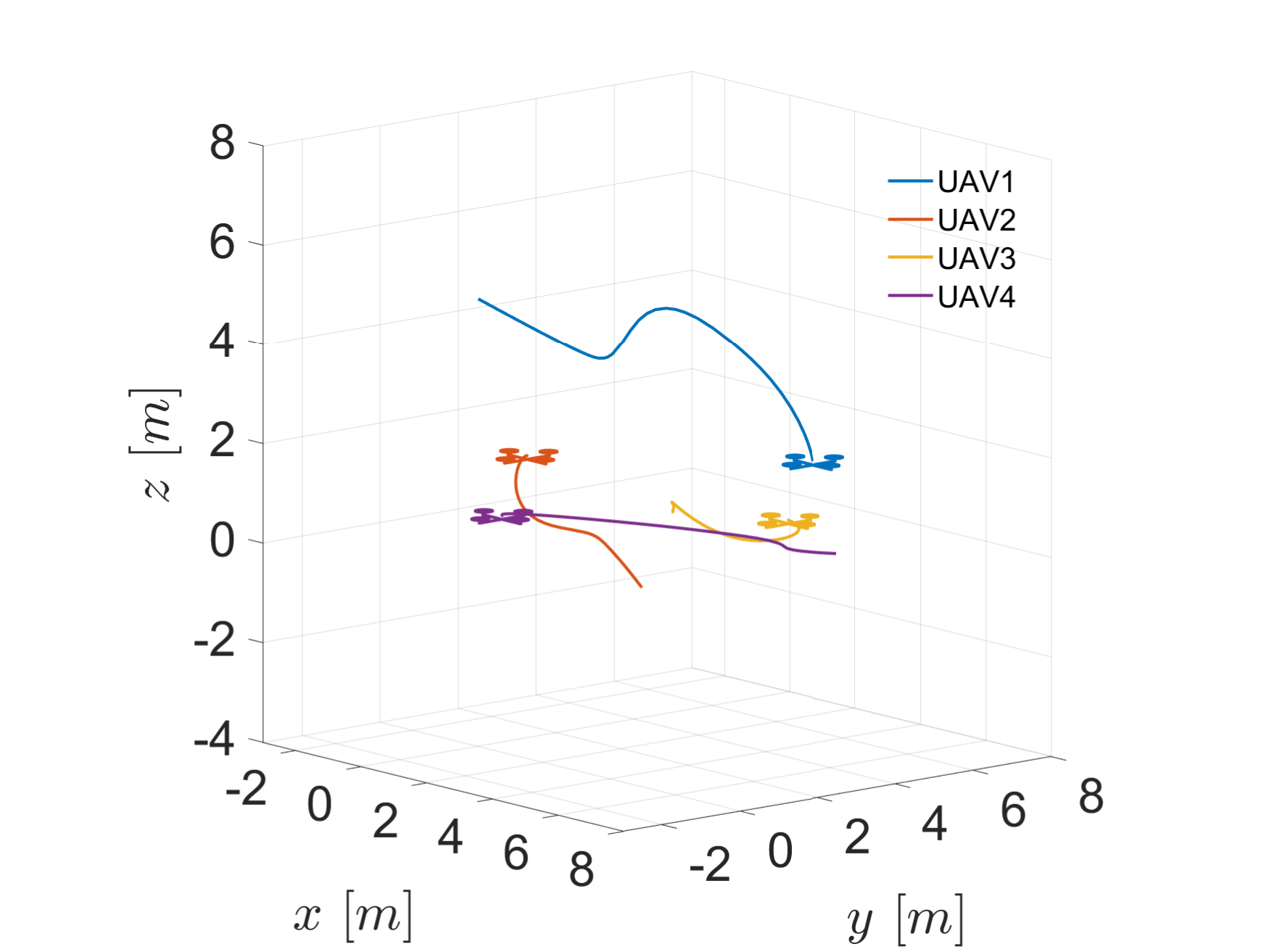}
		\caption{Directionally aware formation tracking trajectories}
	\end{subfigure}
	\begin{subfigure}{0.49\linewidth}
	  \includegraphics[width=\linewidth]{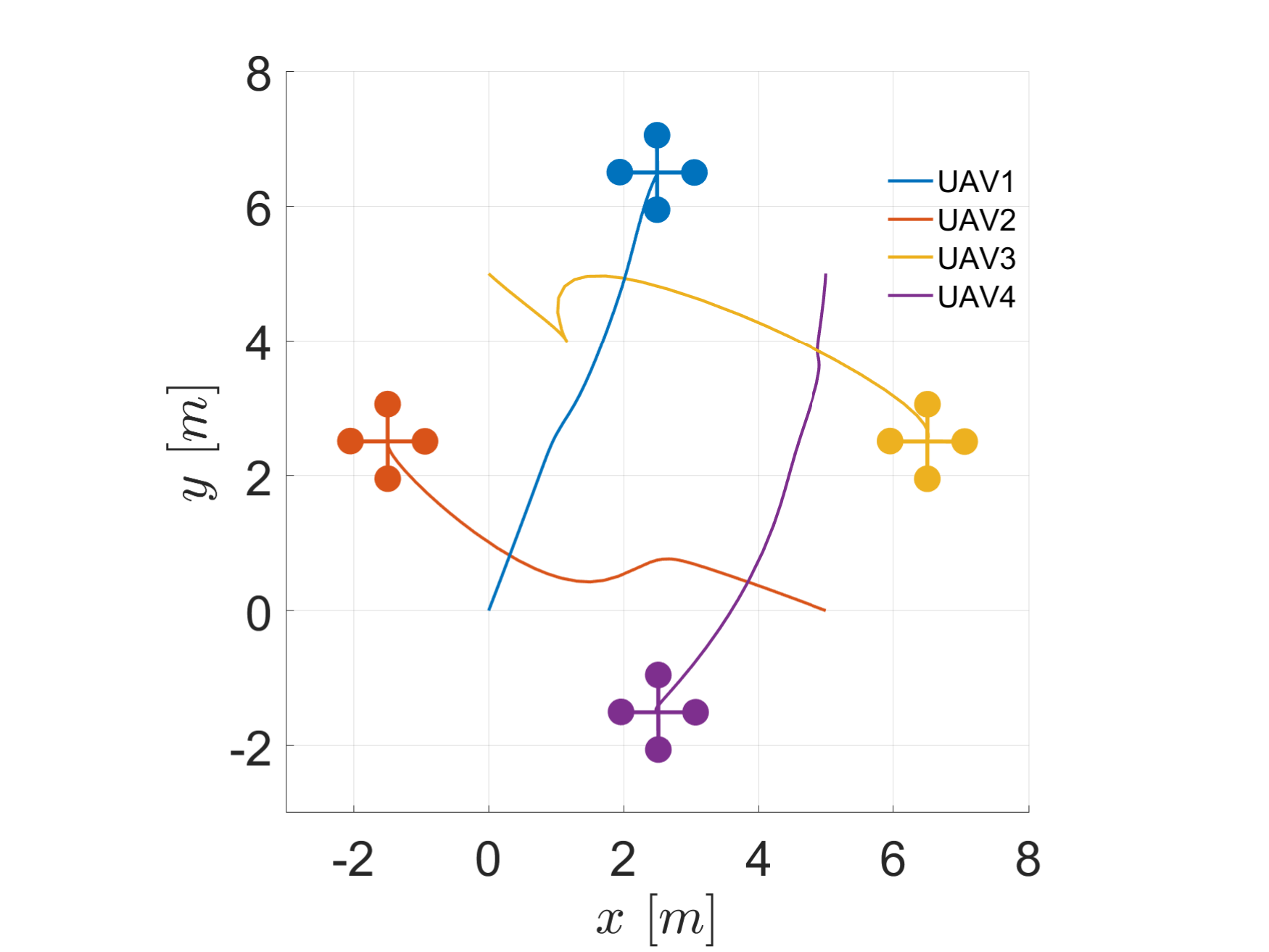}
		\caption{Top view of the tracking trajectories}
    \end{subfigure}
\caption{Formation trajectory tracking utilizing the directionally aware collision avoidance strategy.}
\label{fig:aware}
\end{figure}

\begin{figure}[htbp]
	\centering
	\begin{subfigure}{0.24\linewidth}
		\includegraphics[width=\linewidth]{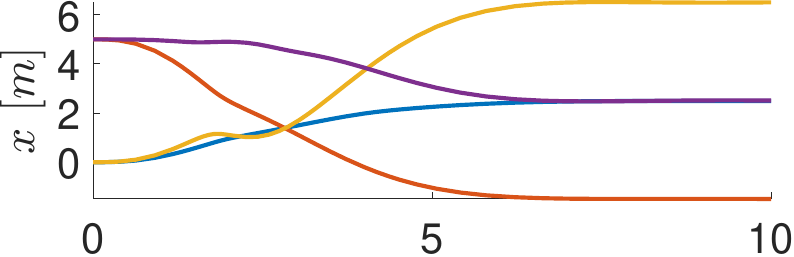}
        \includegraphics[width=\linewidth]{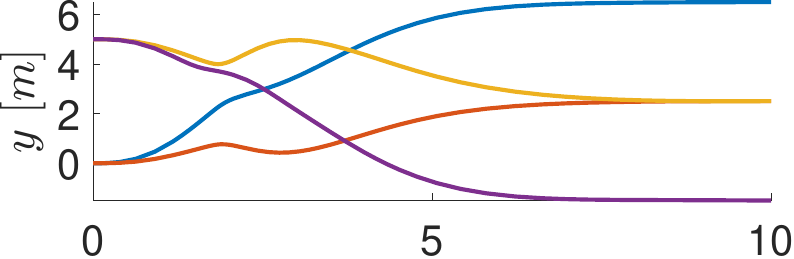}
        \includegraphics[width=\linewidth]{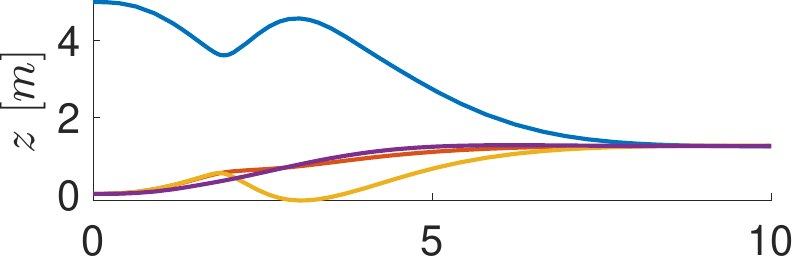}
        \includegraphics[width=\linewidth]{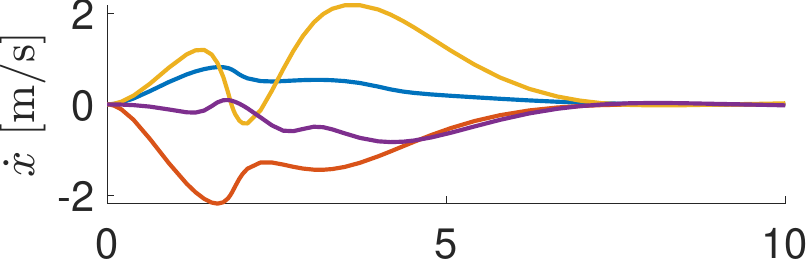}
        \includegraphics[width=\linewidth]{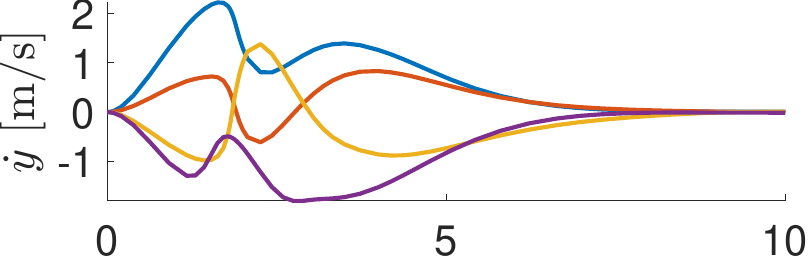}
        \includegraphics[width=\linewidth]{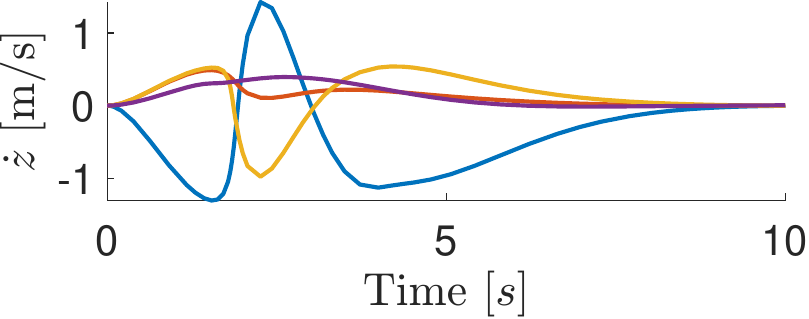}
		\caption{Positions and velocities}
	\end{subfigure}
    \begin{subfigure}{0.24\linewidth}
		\includegraphics[width=\linewidth]{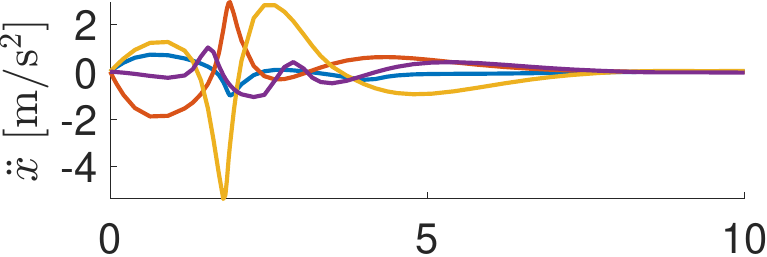}
        \includegraphics[width=\linewidth]{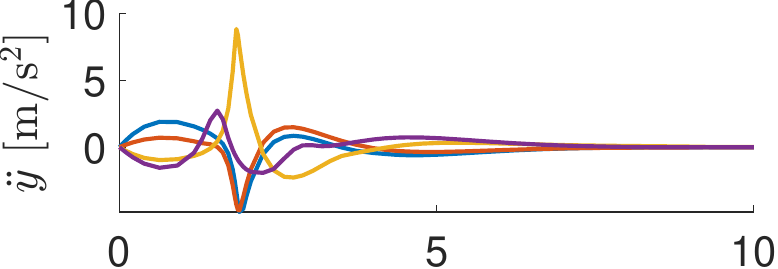}
        \includegraphics[width=\linewidth]{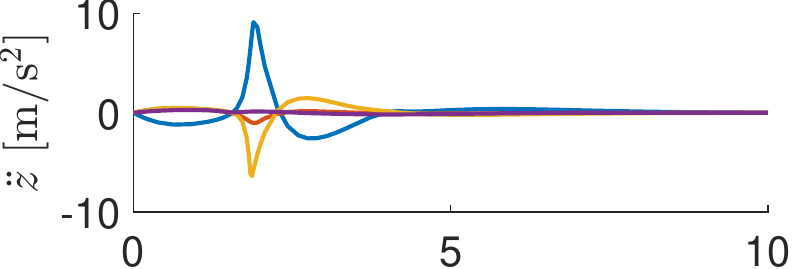}
        \includegraphics[width=\linewidth]{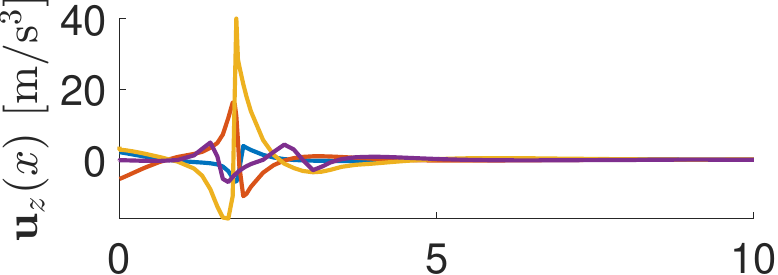}
        \includegraphics[width=\linewidth]{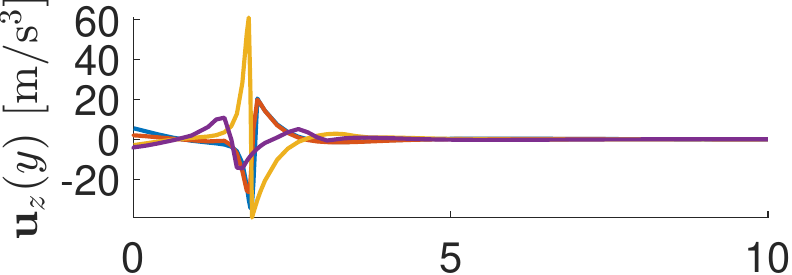}
        \includegraphics[width=\linewidth]{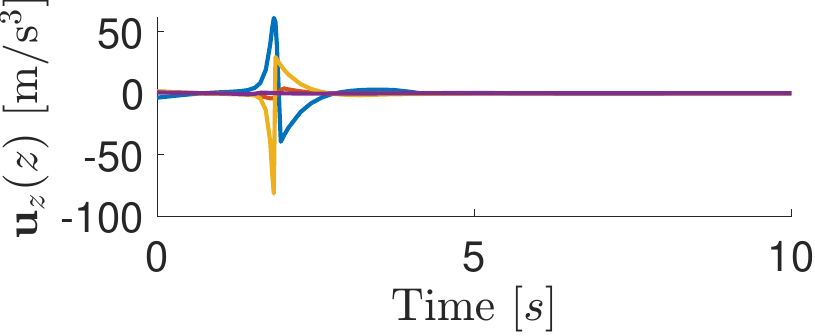}
		\caption{Accelerations and \(\mathbf{u}_{\mathbf{z}}\)}
	\end{subfigure}
    \begin{subfigure}{0.24\linewidth}
		\includegraphics[width=\linewidth]{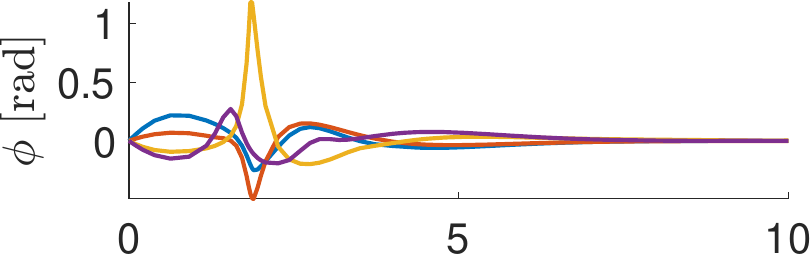}
        \includegraphics[width=\linewidth]{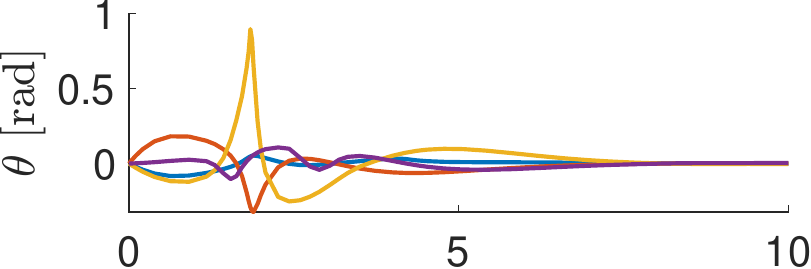}
        \includegraphics[width=\linewidth]{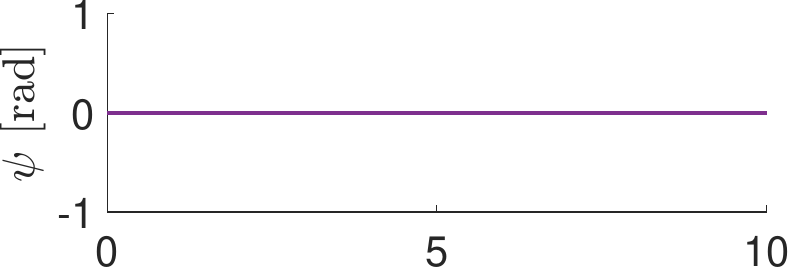}
        \includegraphics[width=\linewidth]{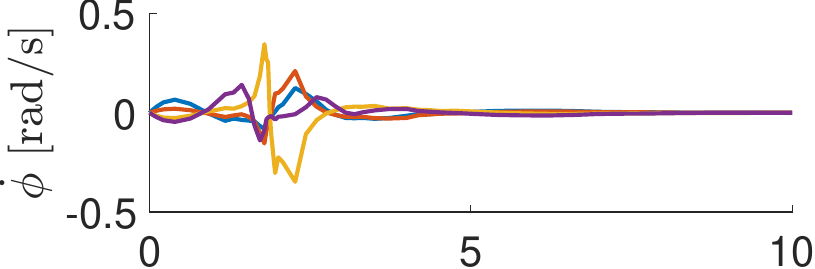}
        \includegraphics[width=\linewidth]{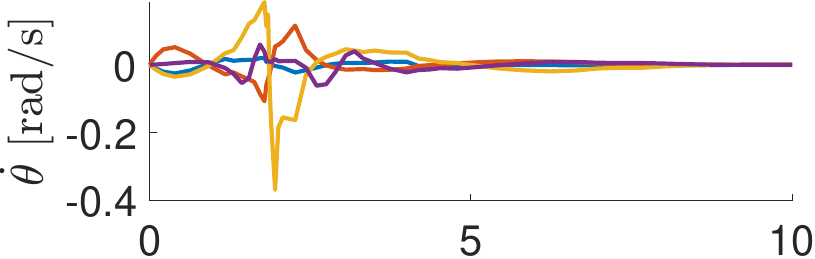}
        \includegraphics[width=\linewidth]{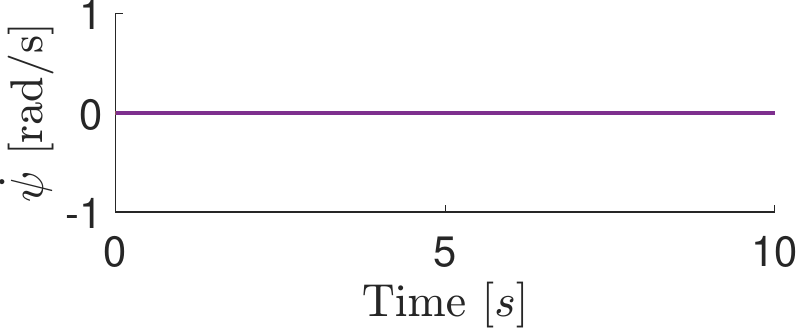}
		\caption{Attitudes with rates}
	\end{subfigure}
	\begin{subfigure}{0.45\linewidth}
	  \includegraphics[width=\linewidth]{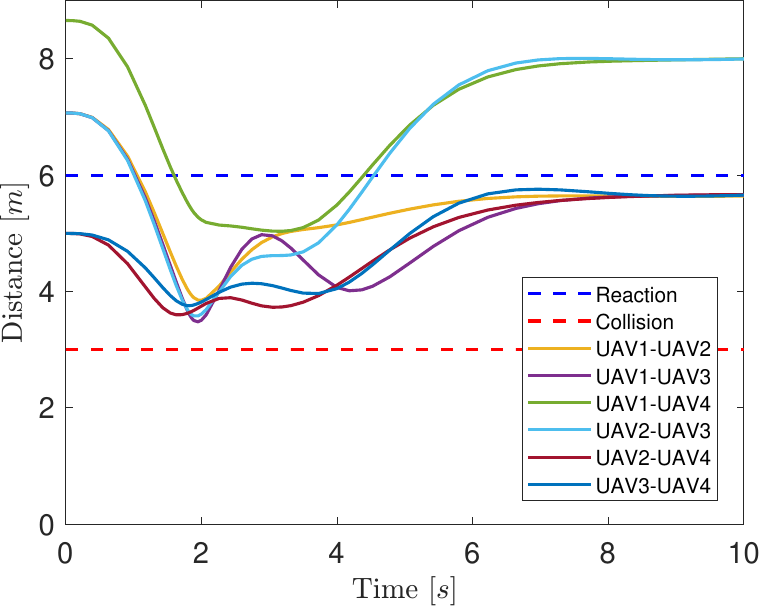}
		\caption{Euclidean distances}
    \end{subfigure}
    \caption{Time histories of positions, velocities, accelerations, jerks (\(\mathbf{u}_{\mathbf{z}}\)), attitudes with rates, and Euclidean distances for formation tracking trajectories utilizing the directionally aware collision avoidance strategy.}
\label{fig:aware-history}
\end{figure}

To further assess the performance of the proposed UAV formation control scheme, we extend our analysis to a larger group of seven UAVs (\(N=7\)). Initially, the UAVs are arranged in a cubic configuration, as shown in Fig.~\ref{fig:formation7}(\subref{fig:initial7}). The desired formation shape is defined by the offset vectors \(\mathbf{d}_{12}=\mathbf{d}_{24}=\mathbf{d}_{46}=[-8,8,0]^\top\, \mathrm{m}\) and \(\mathbf{d}_{13}=\mathbf{d}_{35}=\mathbf{d}_{57}=[8,-8,0]^\top\, \mathrm{m}\), forming a V-shape in the \(x\)-\(y\) plane, as illustrated in Fig.~\ref{fig:formation7}(\subref{fig:final7}). Initially, \(\dot{\mathbf{p}}_i(0) = [0,0,0]^\top\, \mathrm{m/s}\), \(\ddot{\mathbf{p}}_i(0) = [0,0,0]^\top\, \mathrm{m/s^2}\), and \(\psi_i(0) = 0\, \mathrm{rad}\), \(\dot{\psi}_i(0) = 0\, \mathrm{rad/s}\), and \(\ddot{\psi}_i(0) = 0\, \mathrm{rad/s^2}\) for \(i \in \{1, \dots, 7\}\). The weighting parameters of the performance index (\ref{eq:PI}) are assigned as \(\mu_{12} = 0.9\), \(\mu_{13} = 0.7\), \(\mu_{24} = 0.8\), \(\mu_{35} = 0.5\), \(\mu_{46} = 0.7\), \(\mu_{57} = 0.6\), with \(\omega_{ij} = 1\) and \(\gamma_i = 1\) for all \(i\). All other parameters remain consistent with the first example.

\begin{figure}[htbp]
	\centering
	\begin{subfigure}{0.5\linewidth}
		\includegraphics[width=\linewidth]{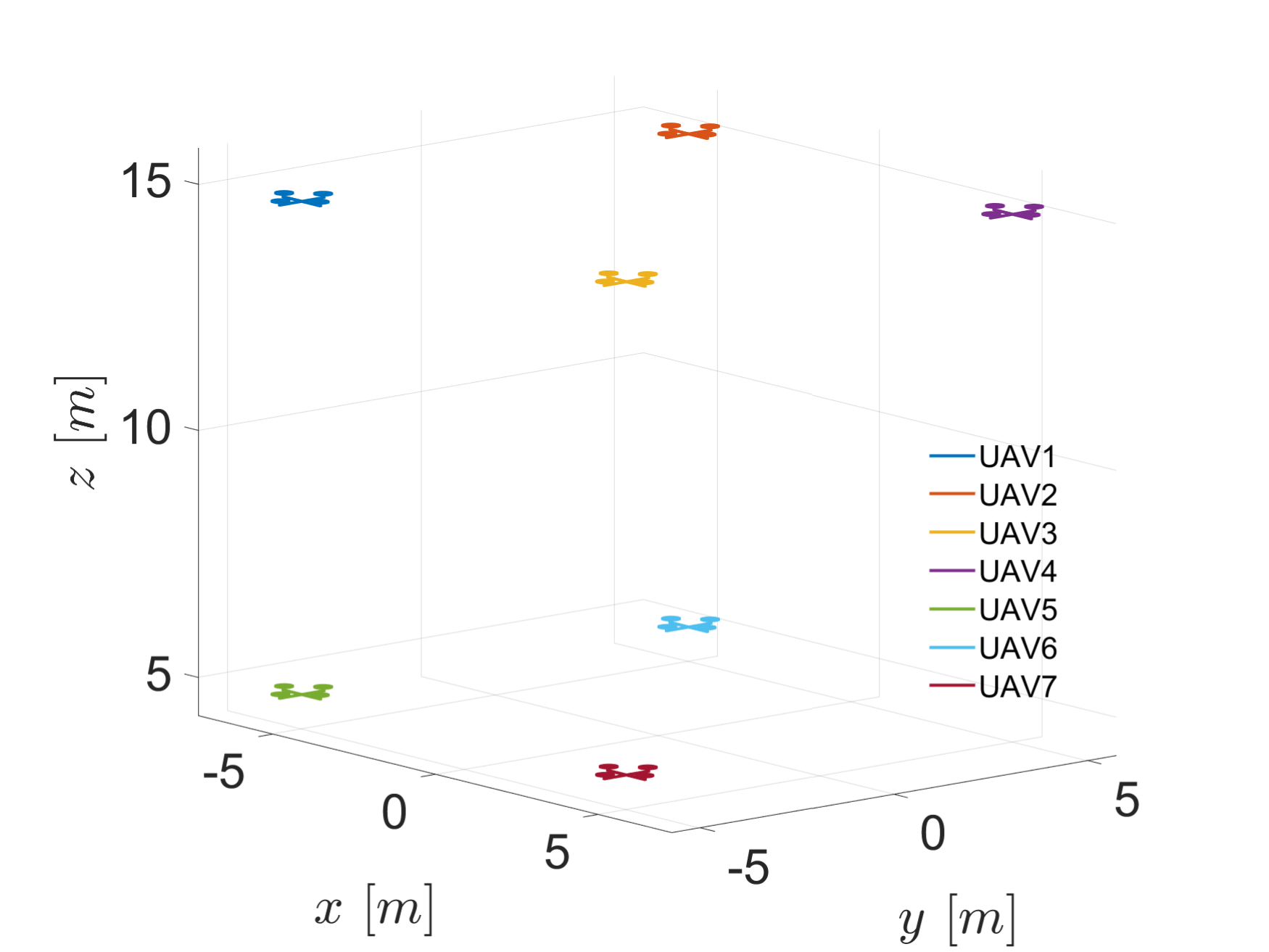}
		\caption{Initial UAV placement}
		\label{fig:initial7}
	\end{subfigure}
	\hspace{0.1\linewidth}  
	\begin{subfigure}{0.35\linewidth}
	  \includegraphics[width=0.5\linewidth]{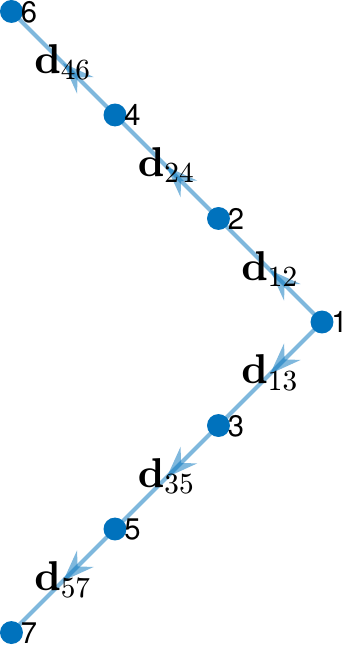}
		\caption{Desired formation graph}
		\label{fig:final7}
    \end{subfigure}
\caption{Initial configuration and desired V-shape formation graph for a team of seven UAVs.}
\label{fig:formation7}
\end{figure}

The optimal formation trajectories and their corresponding time histories are depicted in Figs.~\ref{fig:planning7}-\ref{fig:planning-history7}. The relative distances between UAV pairs occasionally fall slightly below \( r_i + r_j \), indicating a collision situation. The formation tracking trajectories and their time histories are presented in Figs.~\ref{fig:tracking7}-\ref{fig:tracking-history7}. While the UAV team achieves collision-free formation tracking (no violations of $r_i + r_j$), the control input $\mathbf{u}_{\mathbf{z}}$ shows abrupt jumps (e.g., at $t=2.5\, \mathrm{s}$), which are infeasible for physical UAVs due to actuation limits. The formation trajectories and their time histories under the directionally aware collision avoidance tracking strategy are shown in Figs.~\ref{fig:aware7}-\ref{fig:aware-history7}. Similarly to the first example, this strategy ensures that no UAV pair violates the \( r_i + r_j \) constraint while allowing only minimal deviations from the optimal trajectory. These results further validate the effectiveness of the proposed formation trajectory planning and tracking framework, particularly demonstrating the efficiency of the directionally aware collision avoidance approach.

\begin{figure}[htbp]
	\centering
	\begin{subfigure}{0.49\linewidth}
		\includegraphics[width=\linewidth]{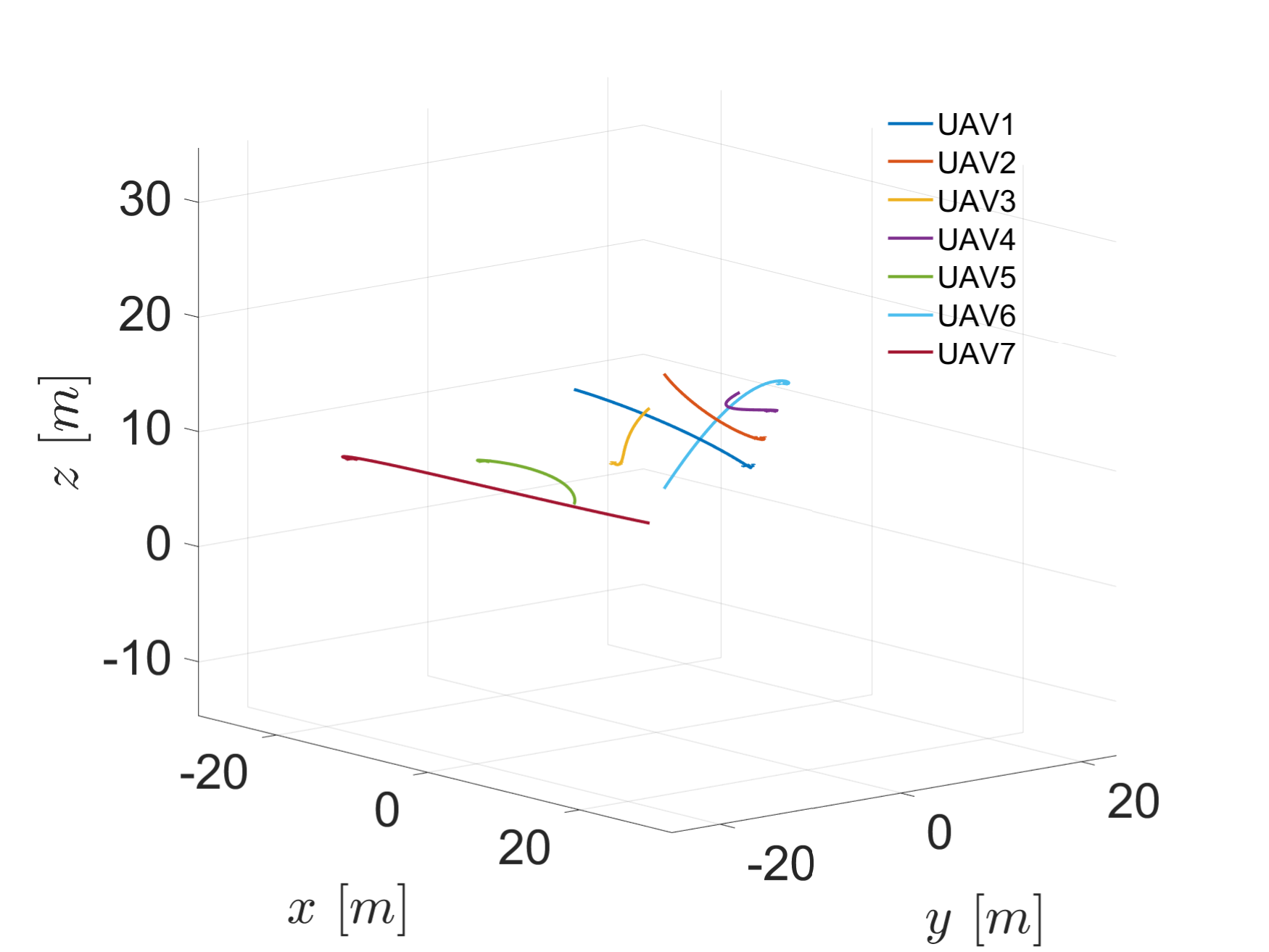}
		\caption{Optimal formation trajectories}
	\end{subfigure}
	\begin{subfigure}{0.49\linewidth}
	  \includegraphics[width=\linewidth]{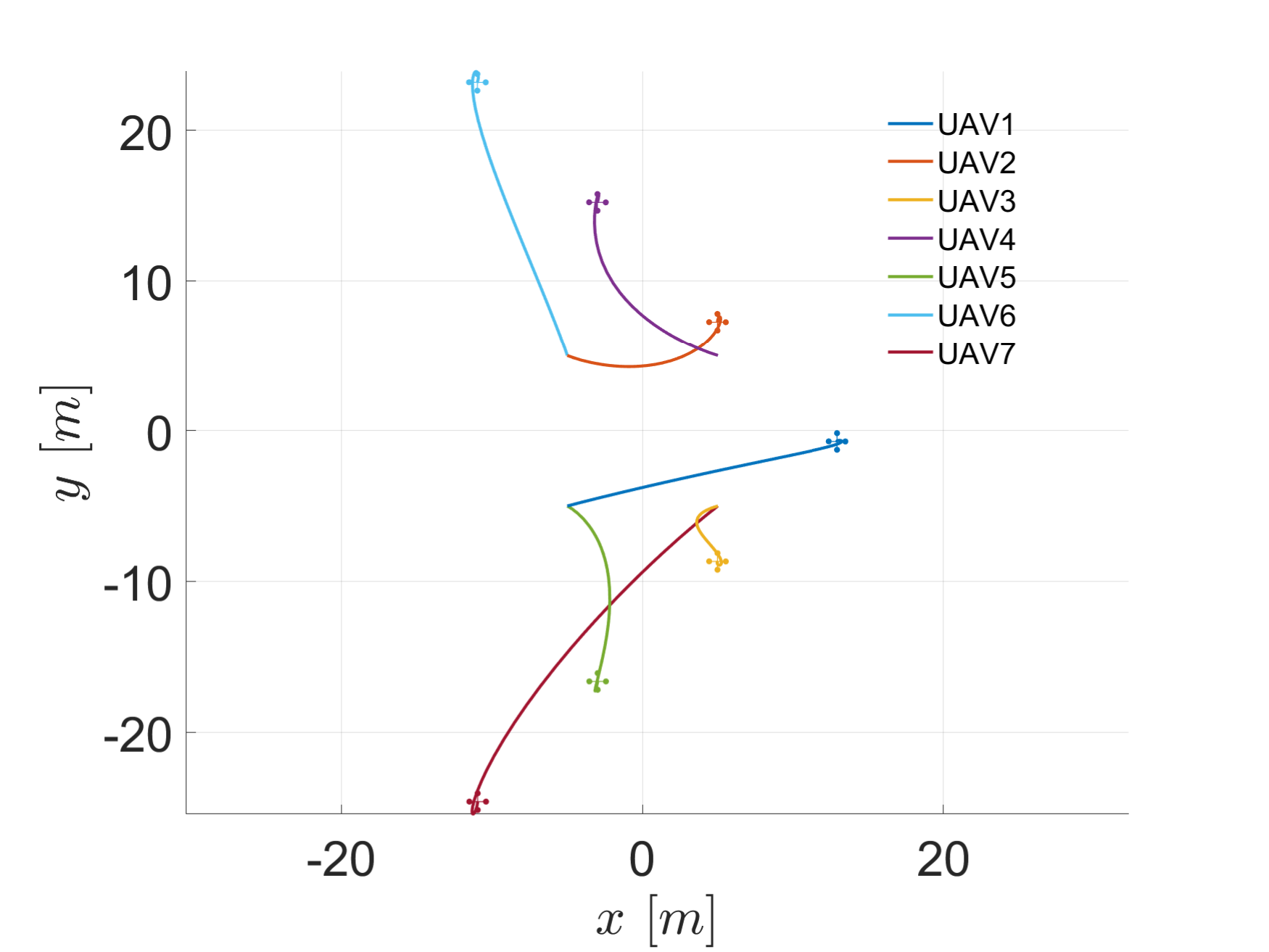}
		\caption{Top view of the trajectories}
    \end{subfigure}
\caption{Optimal formation trajectories of the seven-UAV team.}
\label{fig:planning7}
\end{figure}

\begin{figure}[htbp]
	\centering
	\begin{subfigure}{0.24\linewidth}
		\includegraphics[width=\linewidth]{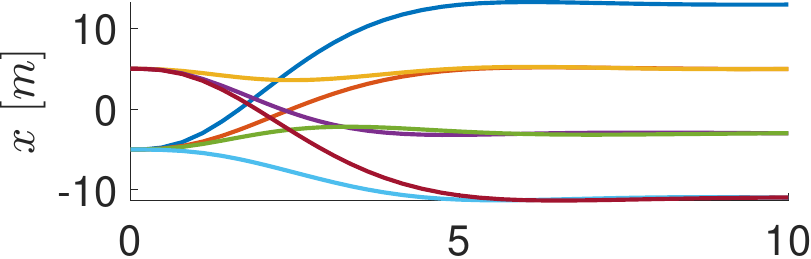}
        \includegraphics[width=\linewidth]{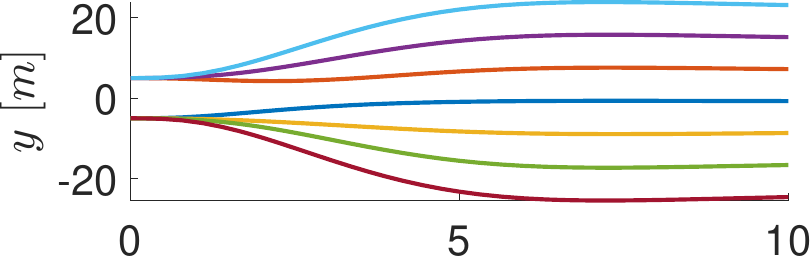}
        \includegraphics[width=\linewidth]{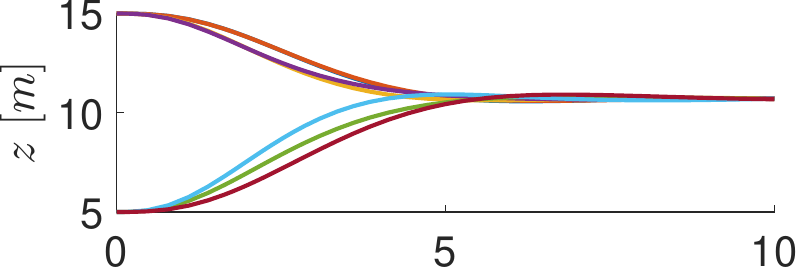}
        \includegraphics[width=\linewidth]{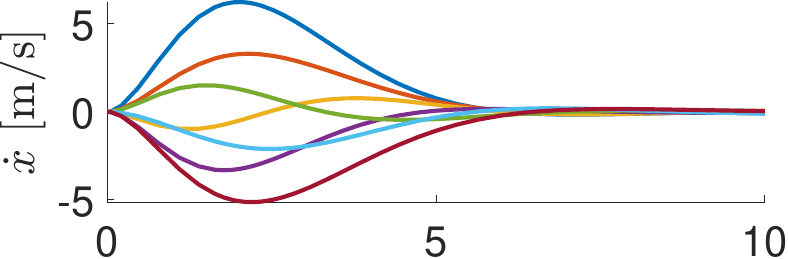}
        \includegraphics[width=\linewidth]{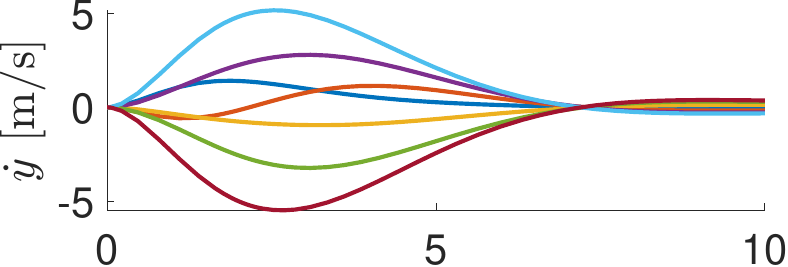}
        \includegraphics[width=\linewidth]{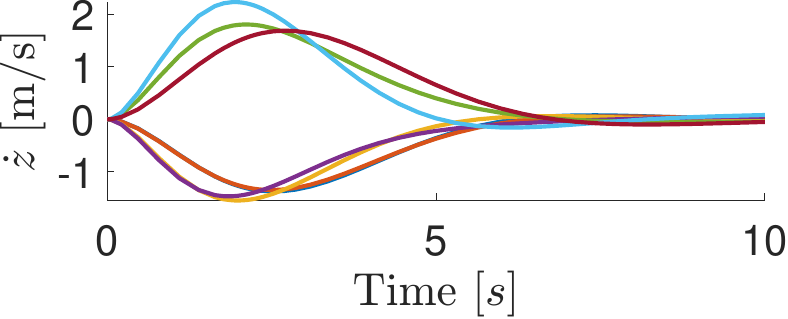}
		\caption{Positions and velocities}
	\end{subfigure}
    	\begin{subfigure}{0.24\linewidth}
		\includegraphics[width=\linewidth]{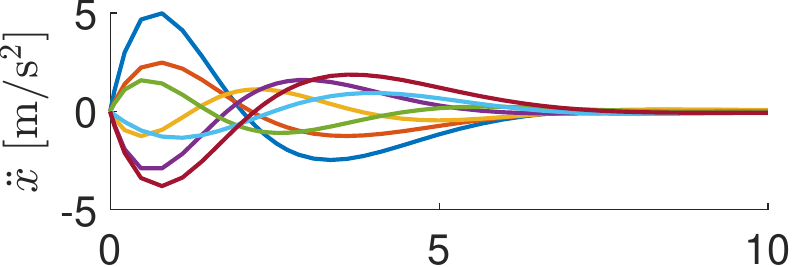}
        \includegraphics[width=\linewidth]{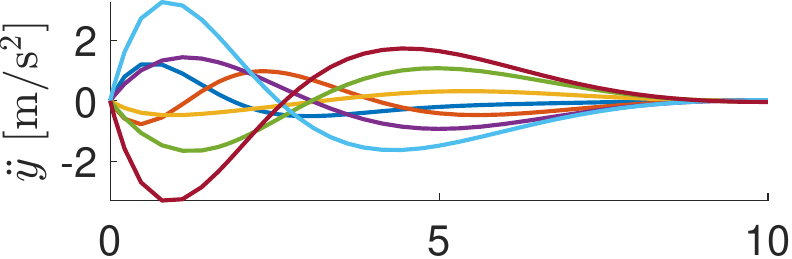}
        \includegraphics[width=\linewidth]{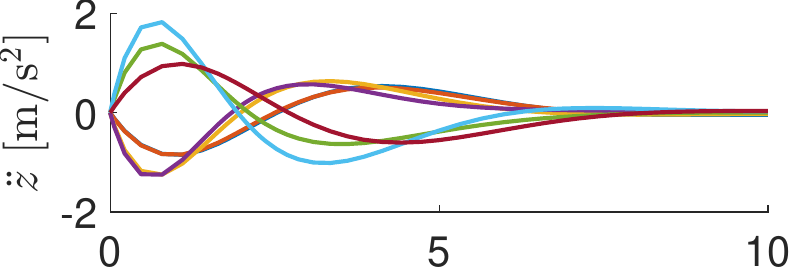}
        \includegraphics[width=\linewidth]{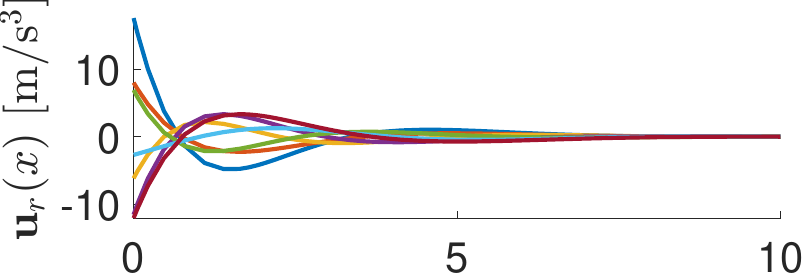}
        \includegraphics[width=\linewidth]{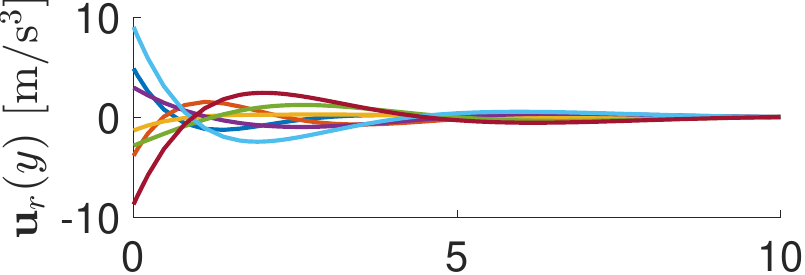}
        \includegraphics[width=\linewidth]{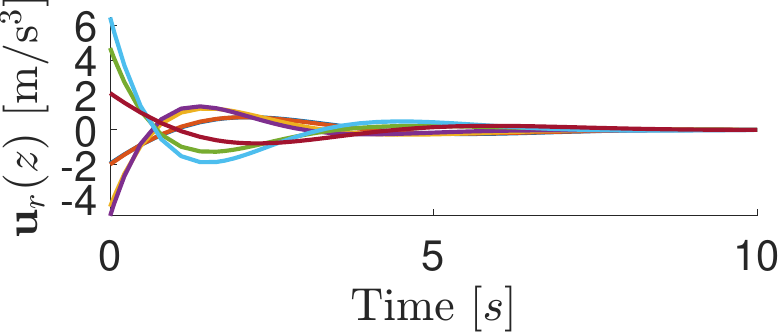}
		\caption{Accelerations and \(\mathbf{u}_{\mathbf{r}}\)}
	\end{subfigure}
    \begin{subfigure}{0.24\linewidth}
		\includegraphics[width=\linewidth]{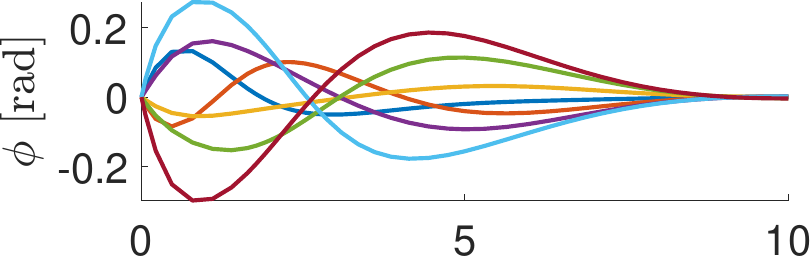}
        \includegraphics[width=\linewidth]{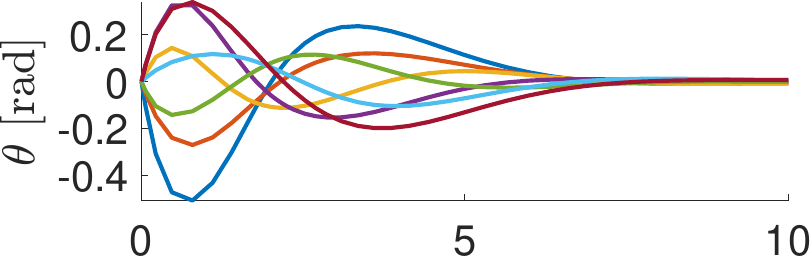}
        \includegraphics[width=\linewidth]{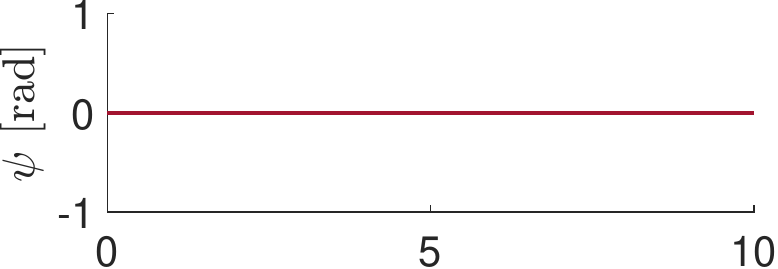}
        \includegraphics[width=\linewidth]{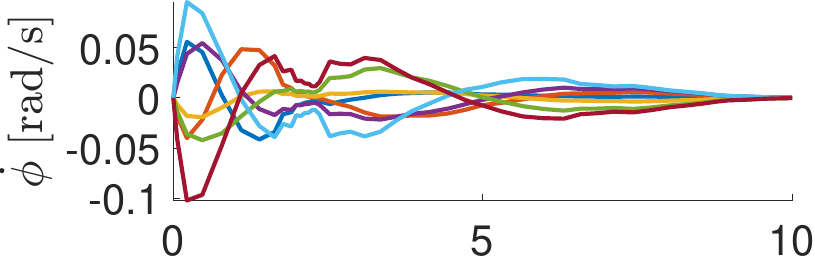}
        \includegraphics[width=\linewidth]{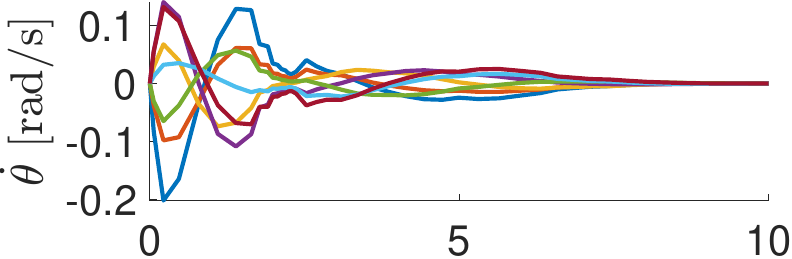}
        \includegraphics[width=\linewidth]{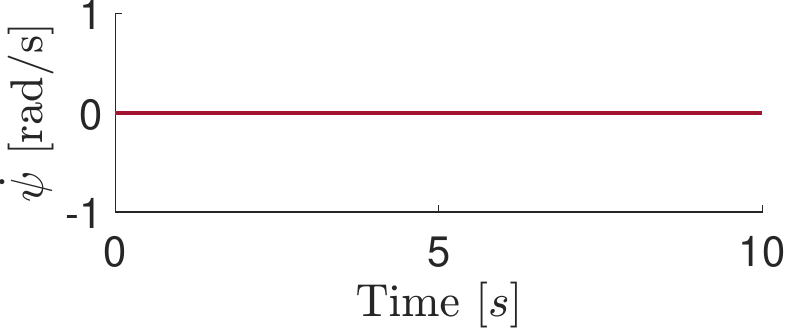}
		\caption{Attitudes with rates}
	\end{subfigure}
	\begin{subfigure}{0.45\linewidth}
	  \includegraphics[width=\linewidth]{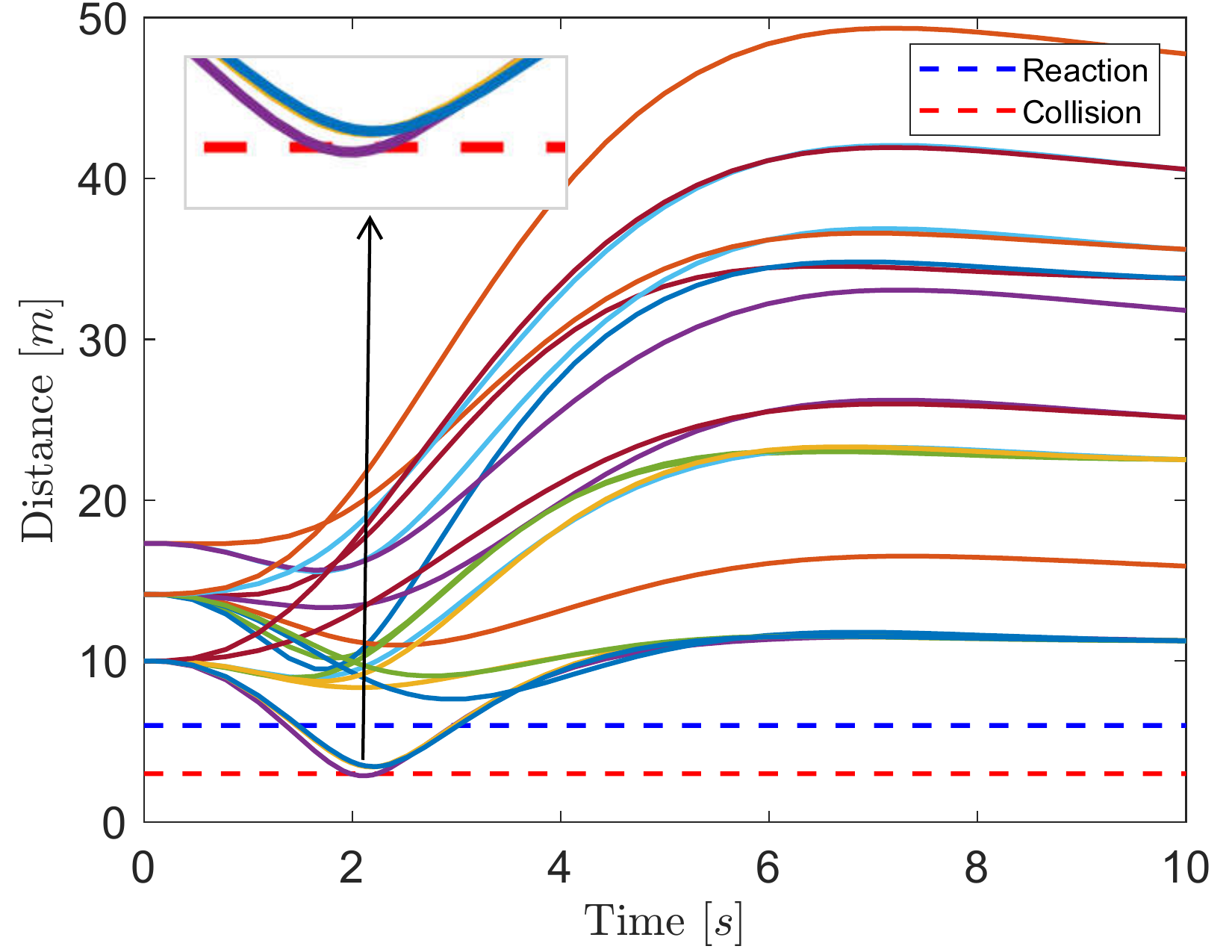}
		\caption{Euclidean distances}
		\label{fig:traj-dir7}
    \end{subfigure}
\caption{Time histories of positions, velocities, accelerations, jerks, attitudes with their rates, and Euclidean distances for the optimal formation trajectories of the seven-UAV team. (d) indicates a collision situation.}
\label{fig:planning-history7}
\end{figure}

\begin{figure}[htbp]
	\centering
	\begin{subfigure}{0.49\linewidth}
		\includegraphics[width=\linewidth]{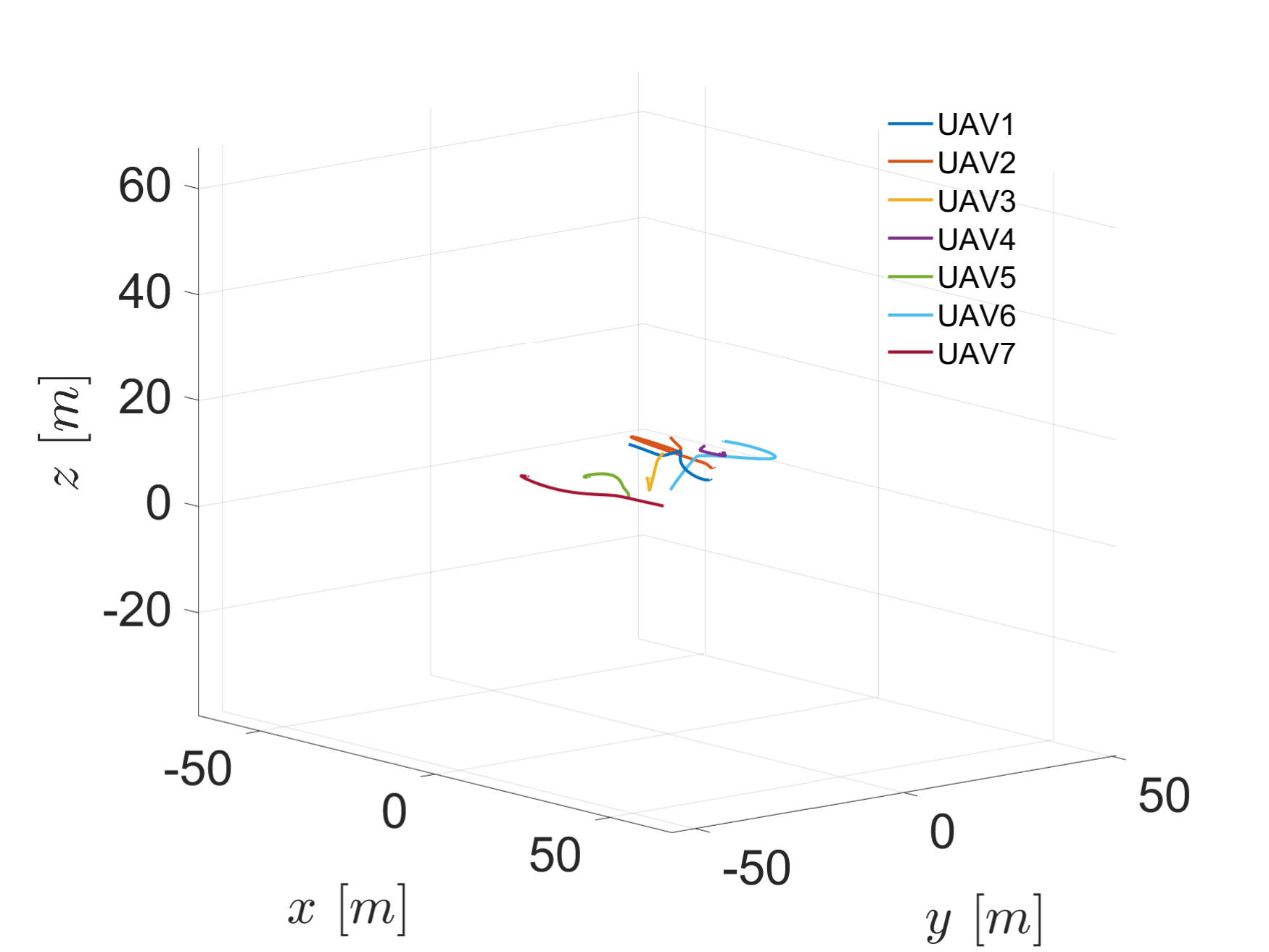}
		\caption{Formation tracking trajectories}
	\end{subfigure}
	\begin{subfigure}{0.49\linewidth}
	  \includegraphics[width=\linewidth]{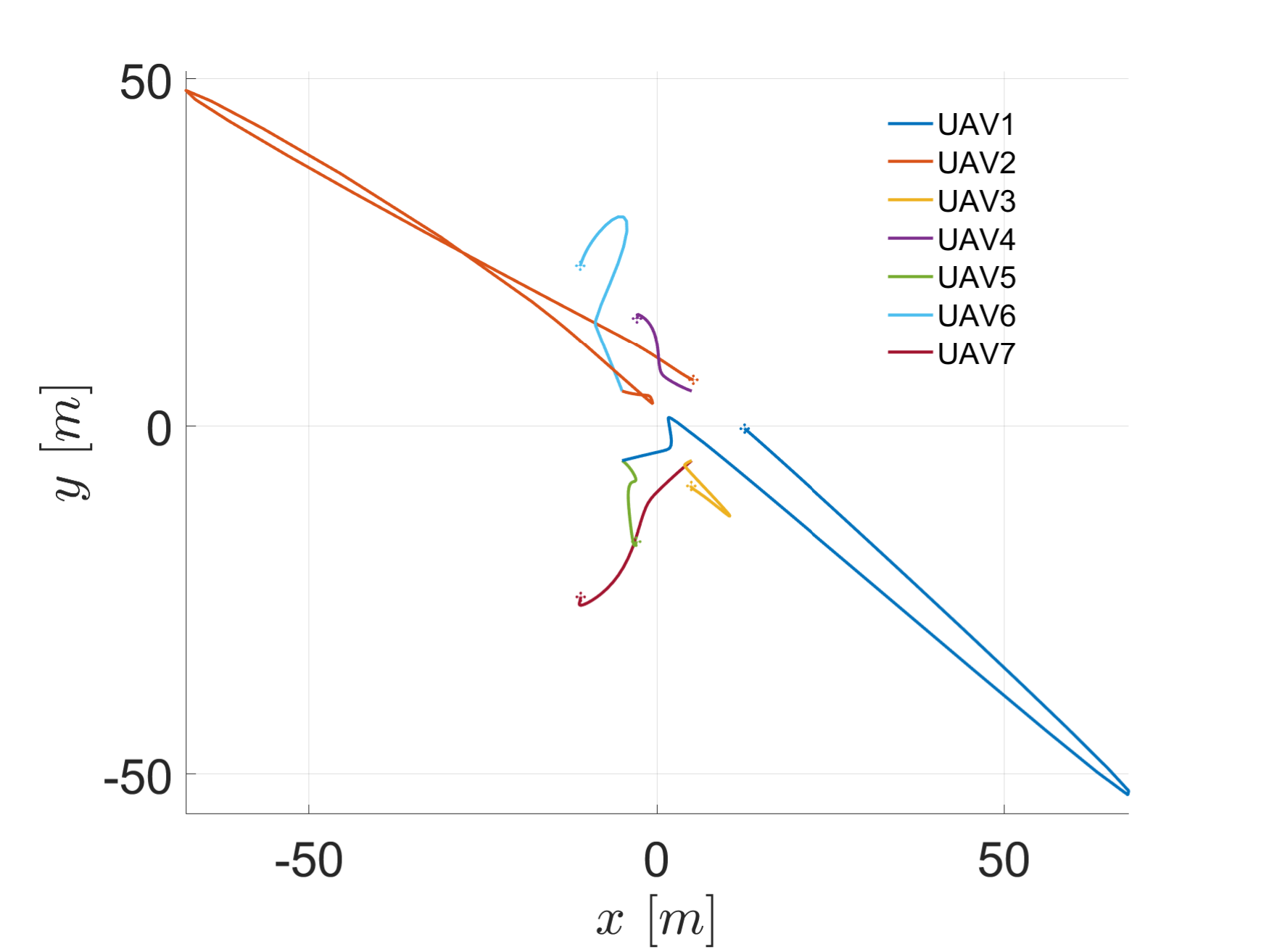}
		\caption{Top view of the tracking trajectories}
    \end{subfigure}
\caption{Formation trajectory tracking of the seven-UAV team.}
\label{fig:tracking7}
\end{figure}

\begin{figure}[htbp]
	\centering
	\begin{subfigure}{0.24\linewidth}
		\includegraphics[width=\linewidth]{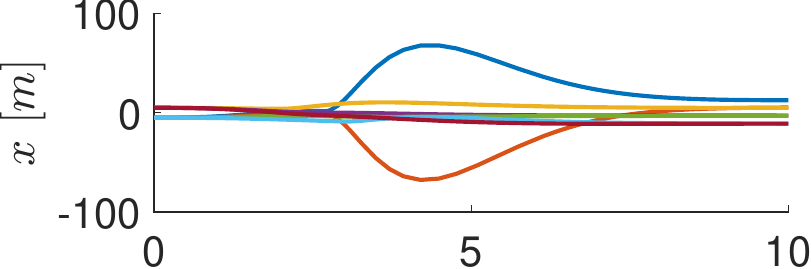}
        \includegraphics[width=\linewidth]{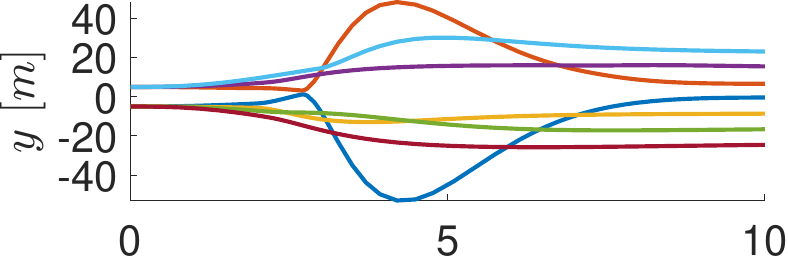}
        \includegraphics[width=\linewidth]{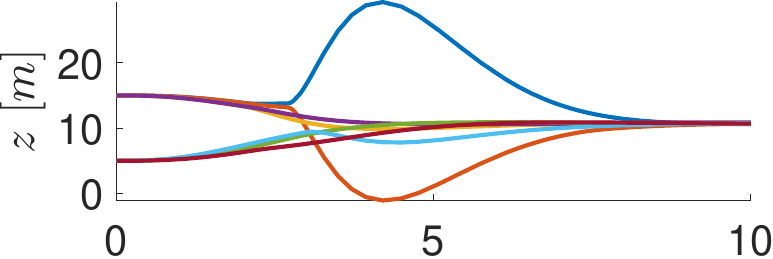}
        \includegraphics[width=\linewidth]{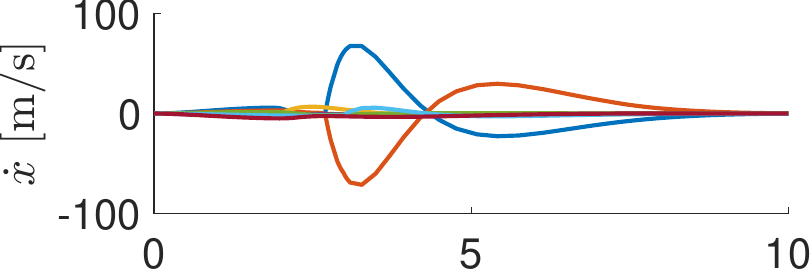}
        \includegraphics[width=\linewidth]{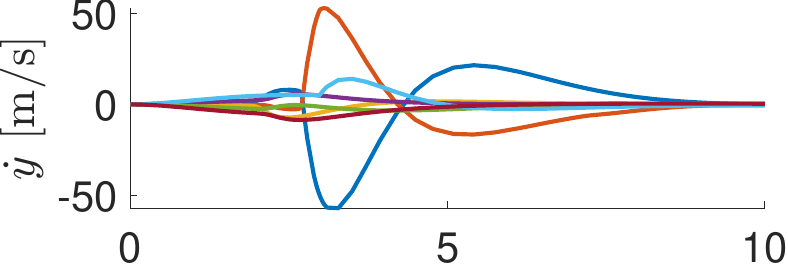}
        \includegraphics[width=\linewidth]{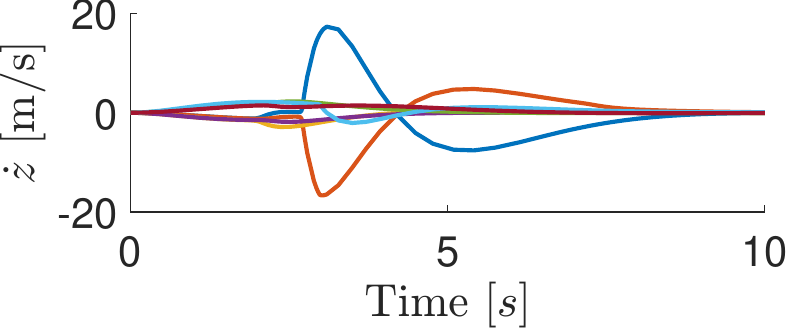}
		\caption{Positions and velocities}
	\end{subfigure}
    \begin{subfigure}{0.24\linewidth}
		\includegraphics[width=\linewidth]{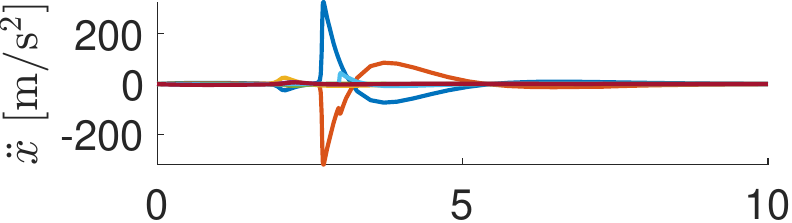}
        \includegraphics[width=\linewidth]{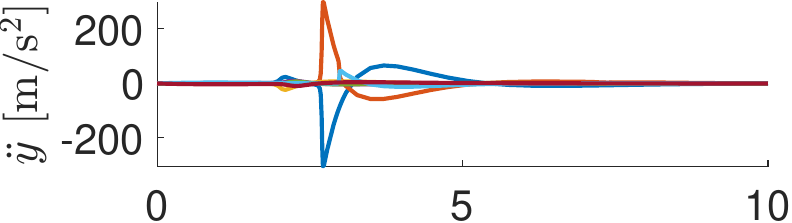}
        \includegraphics[width=\linewidth]{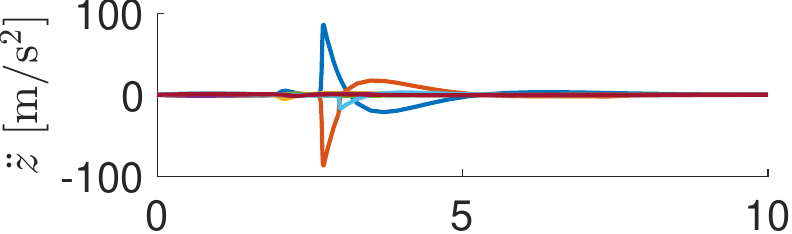}
        \includegraphics[width=\linewidth]{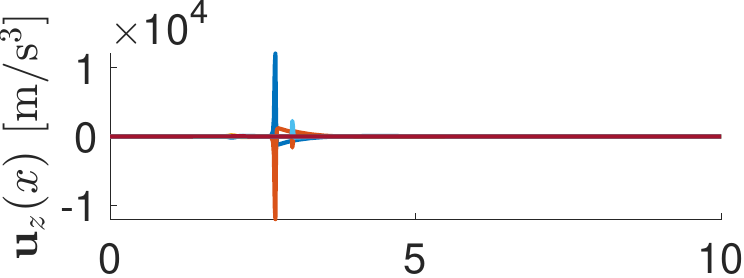}
        \includegraphics[width=\linewidth]{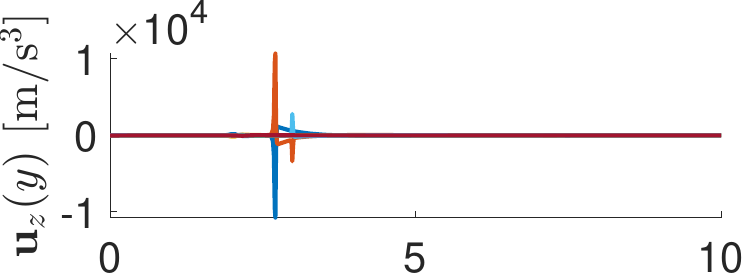}
        \includegraphics[width=\linewidth]{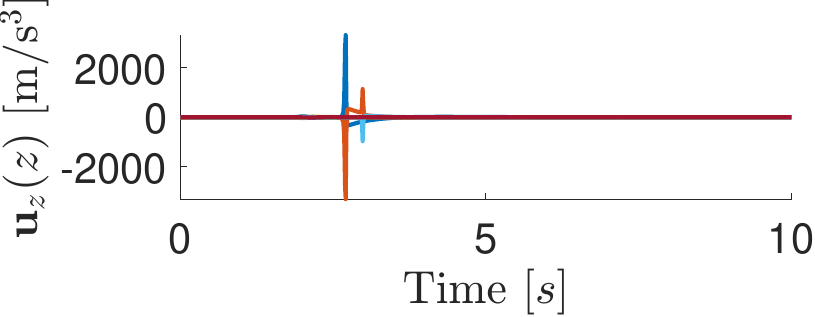}
		\caption{Accelerations and \(\mathbf{u}_{\mathbf{z}}\)}
	\end{subfigure}
    \begin{subfigure}{0.24\linewidth}
		\includegraphics[width=\linewidth]{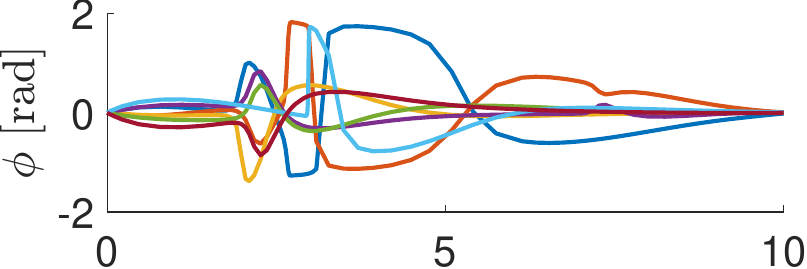}
        \includegraphics[width=\linewidth]{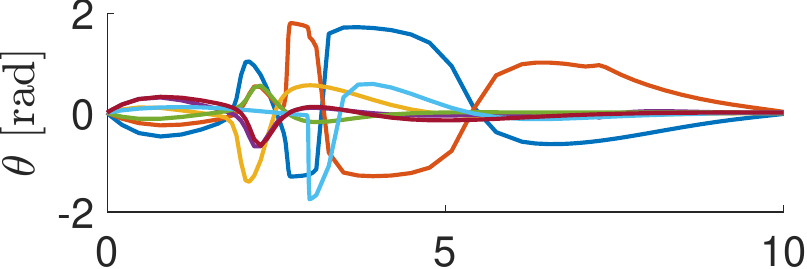}
        \includegraphics[width=\linewidth]{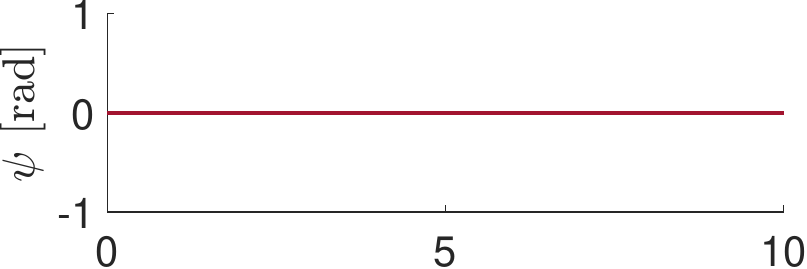}
        \includegraphics[width=\linewidth]{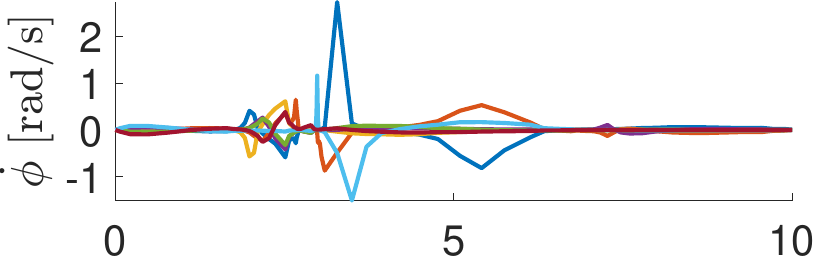}
        \includegraphics[width=\linewidth]{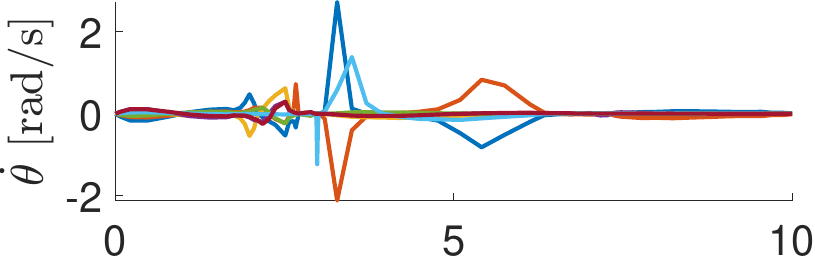}
        \includegraphics[width=\linewidth]{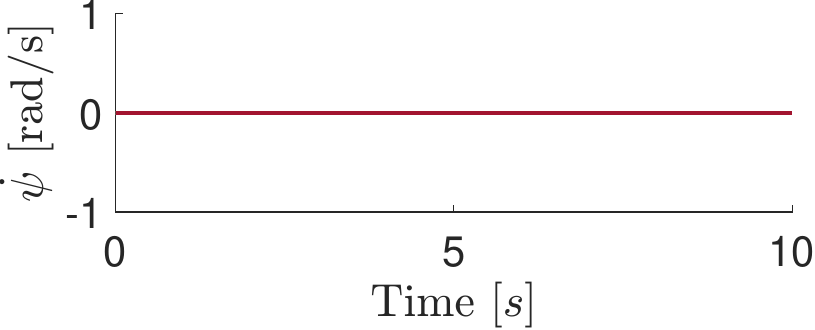}
		\caption{Attitudes with rates}
	\end{subfigure}
	\begin{subfigure}{0.45\linewidth}
	  \includegraphics[width=\linewidth]{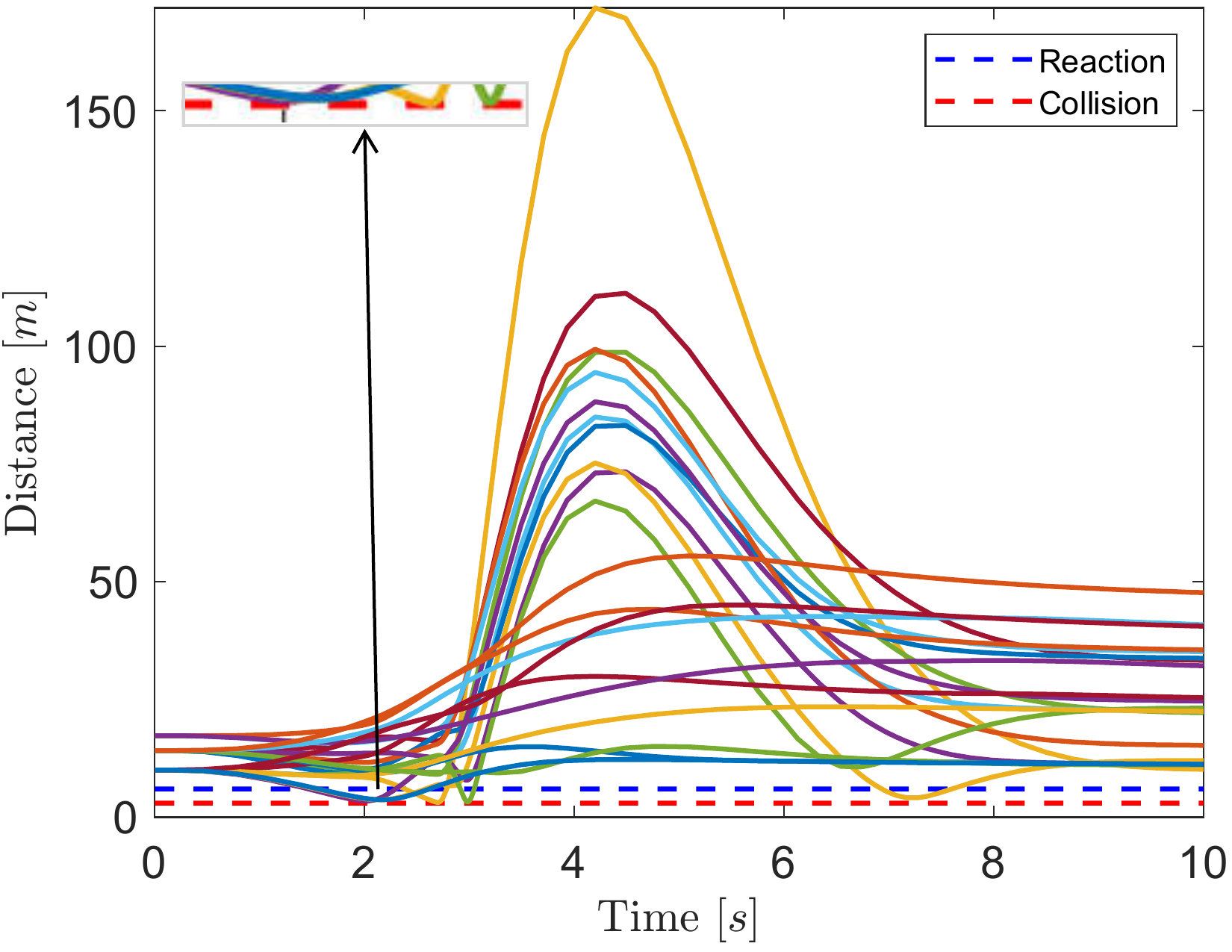}
		\caption{Euclidean distances}
    \end{subfigure}
\caption{Time histories of positions, velocities, accelerations, jerks, attitudes with their rates, and Euclidean distances for formation tracking trajectories of the seven-UAV team. (b) shows a sudden large jump in the tracking control input \( \mathbf{u}_z \) around \(t=2.5\, \mathrm{s}\).}
\label{fig:tracking-history7}
\end{figure}

\begin{figure}[htbp]
	\centering
	\begin{subfigure}{0.49\linewidth}
		\includegraphics[width=\linewidth]{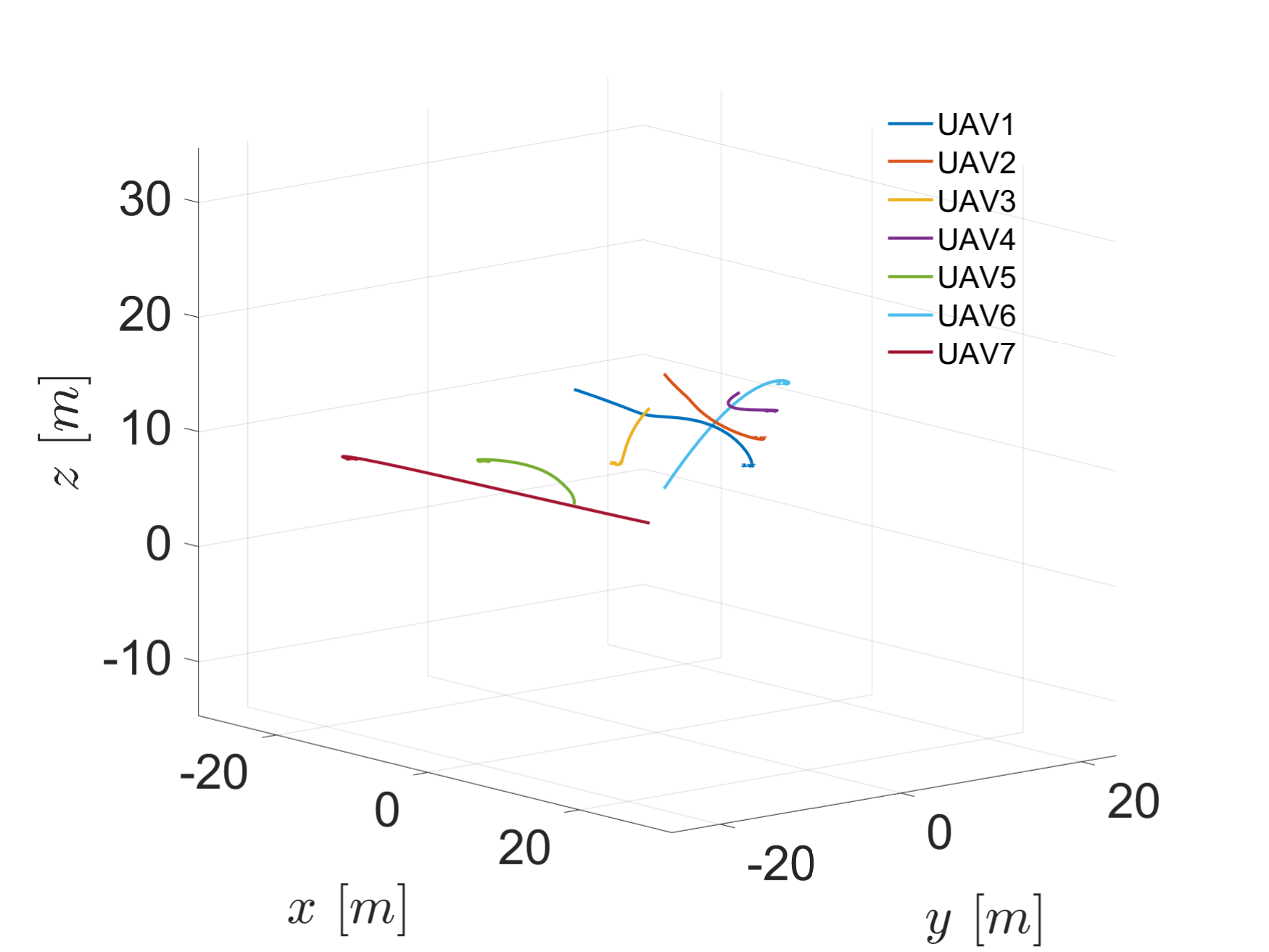}
		\caption{Directionally aware formation tracking trajectories}
	\end{subfigure}
	\begin{subfigure}{0.49\linewidth}
	  \includegraphics[width=\linewidth]{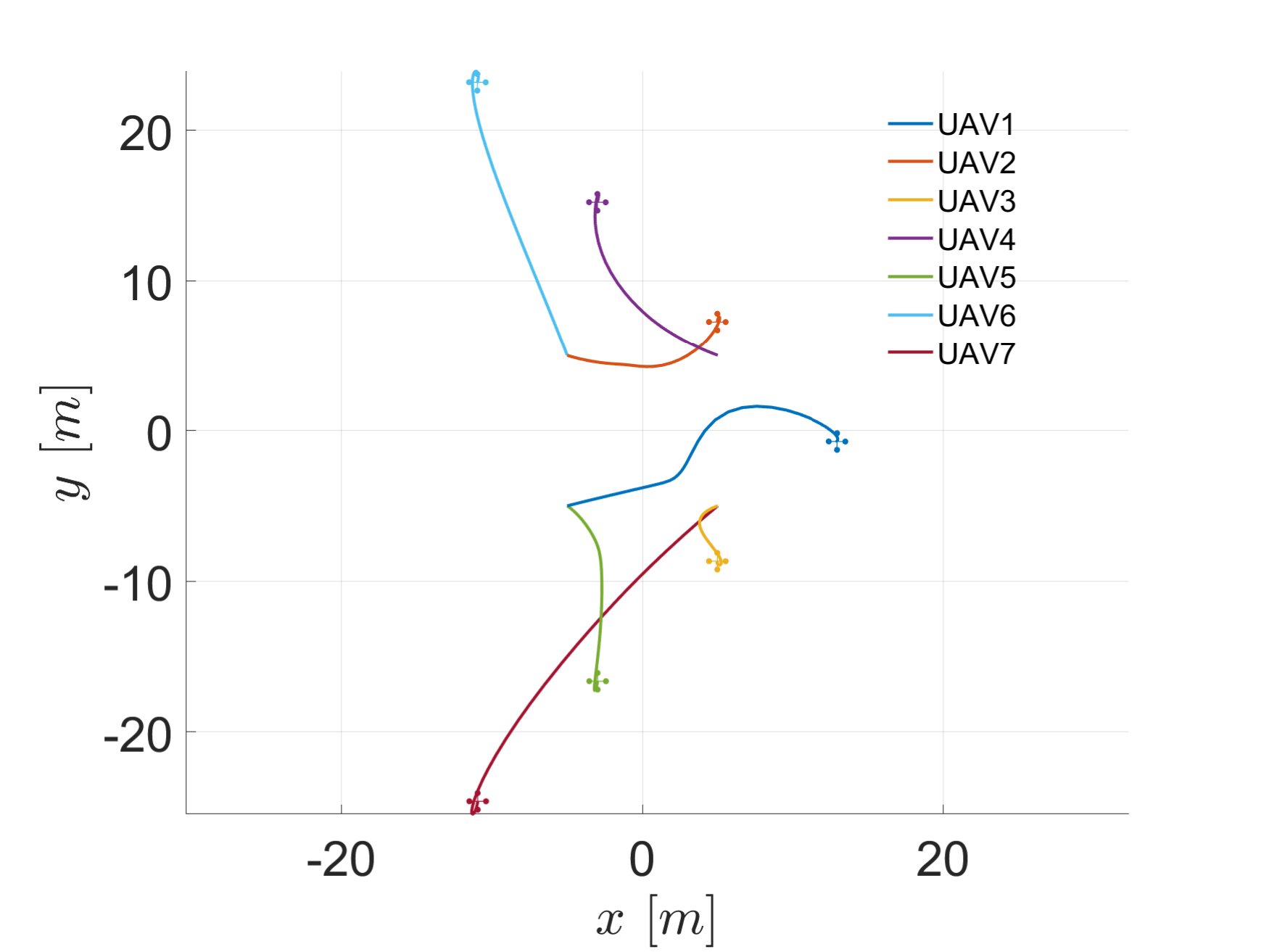}
		\caption{Top view of the tracking trajectories}
    \end{subfigure}
\caption{Formation trajectory tracking of the seven-UAV team utilizing the directionally aware collision avoidance strategy.}
\label{fig:aware7}
\end{figure}

\begin{figure}[htbp]
	\centering
	\begin{subfigure}{0.24\linewidth}
		\includegraphics[width=\linewidth]{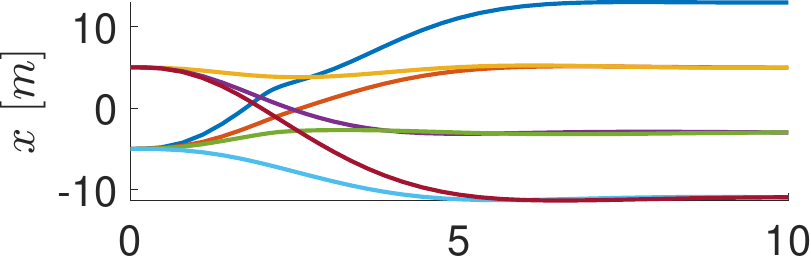}
        \includegraphics[width=\linewidth]{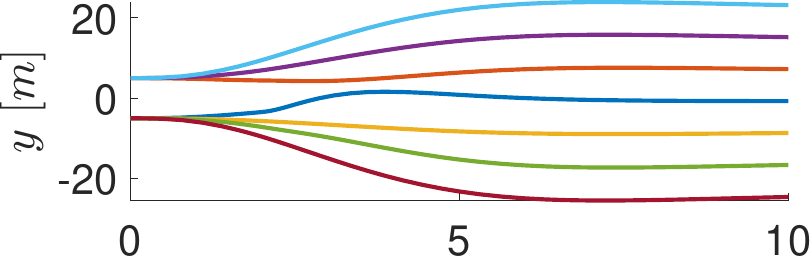}
        \includegraphics[width=\linewidth]{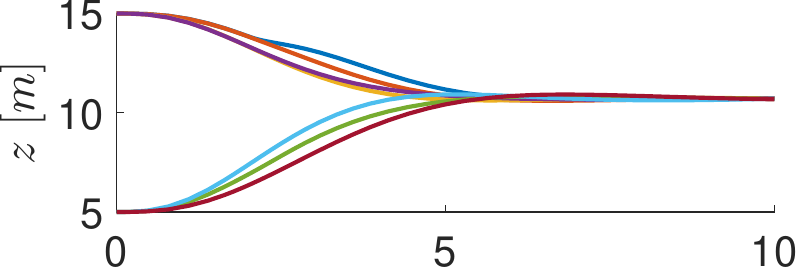}
        \includegraphics[width=\linewidth]{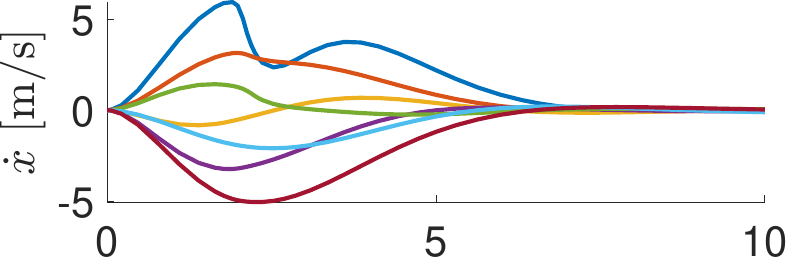}
        \includegraphics[width=\linewidth]{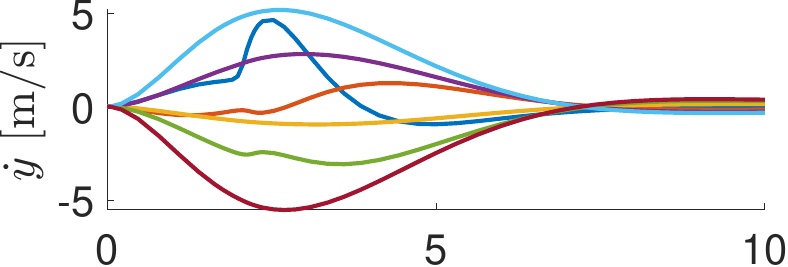}
        \includegraphics[width=\linewidth]{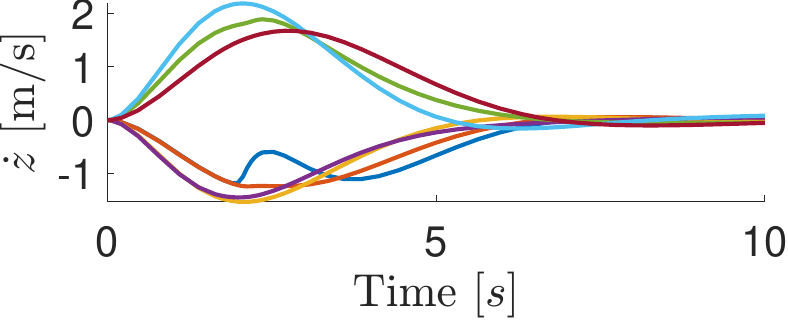}
		\caption{Positions and velocities}
	\end{subfigure}
    \begin{subfigure}{0.24\linewidth}
		\includegraphics[width=\linewidth]{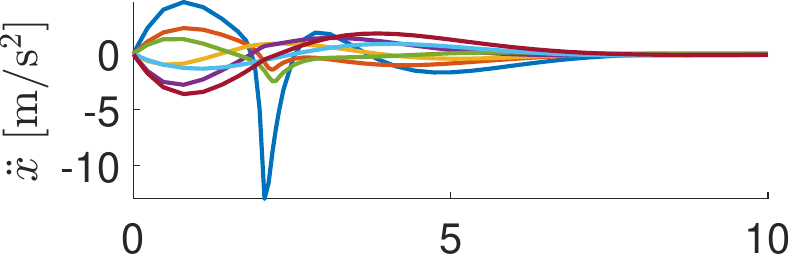}
        \includegraphics[width=\linewidth]{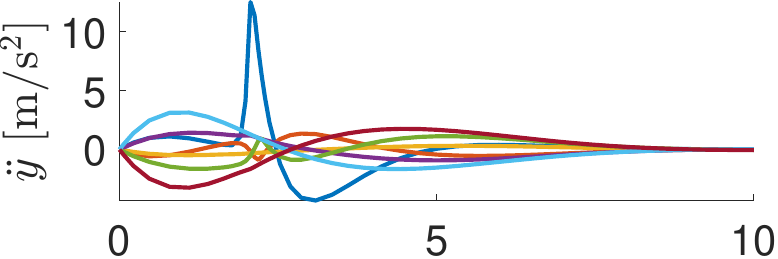}
        \includegraphics[width=\linewidth]{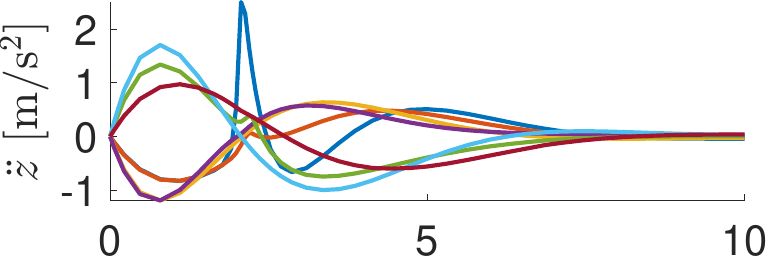}
        \includegraphics[width=\linewidth]{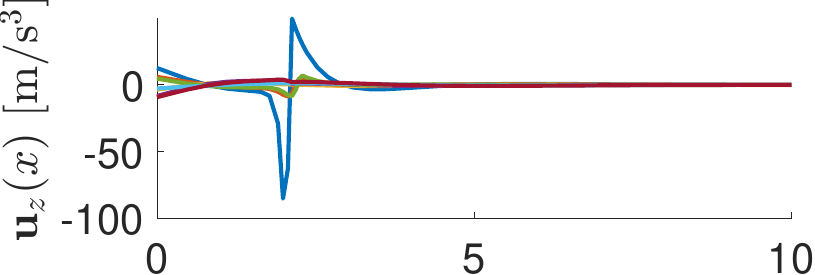}
        \includegraphics[width=\linewidth]{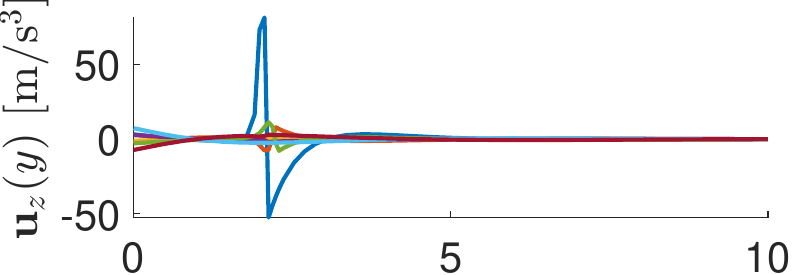}
        \includegraphics[width=\linewidth]{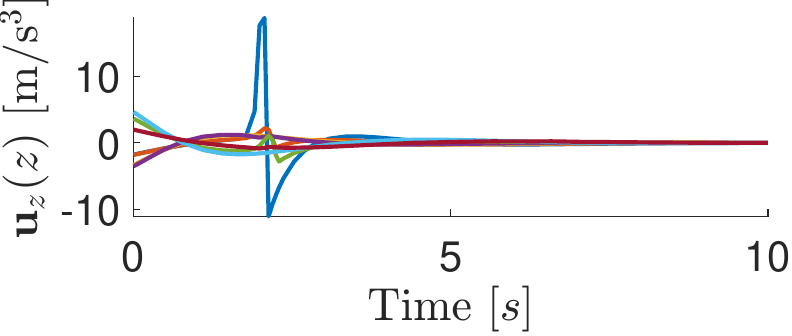}
		\caption{Accelerations and \(\mathbf{u}_{\mathbf{z}}\)}
	\end{subfigure}
    \begin{subfigure}{0.24\linewidth}
		\includegraphics[width=\linewidth]{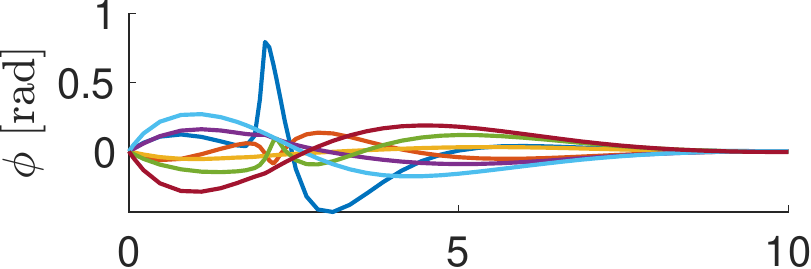}
        \includegraphics[width=\linewidth]{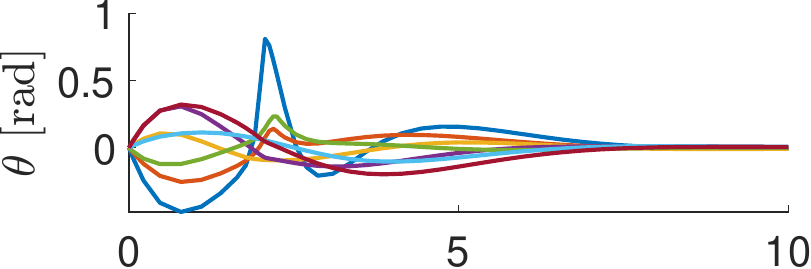}
        \includegraphics[width=\linewidth]{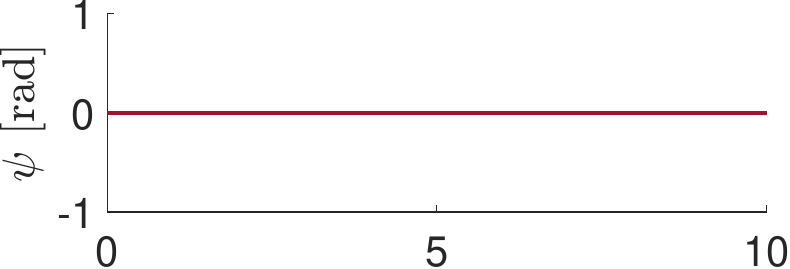}
        \includegraphics[width=\linewidth]{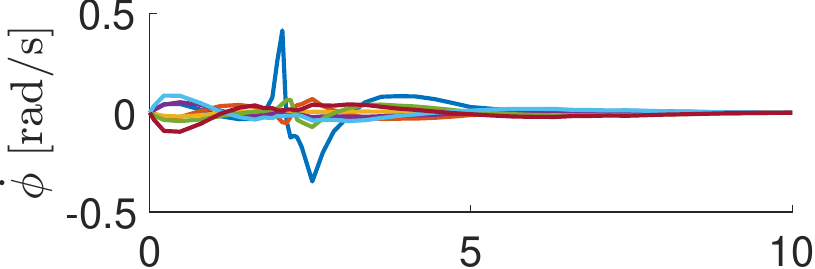}
        \includegraphics[width=\linewidth]{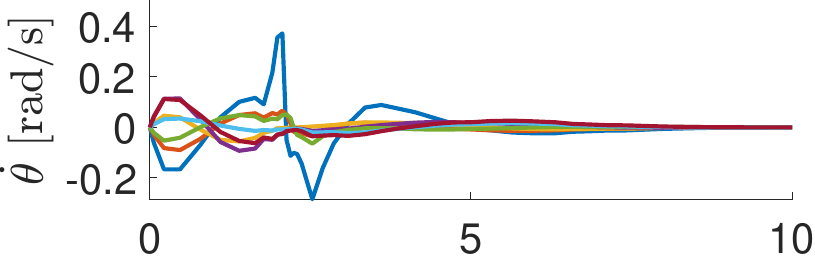}
        \includegraphics[width=\linewidth]{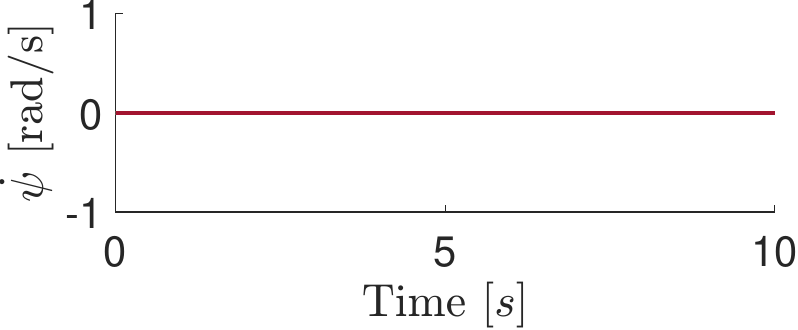}
		\caption{Attitudes with rates}
	\end{subfigure}
	\begin{subfigure}{0.45\linewidth}
	  \includegraphics[width=\linewidth]{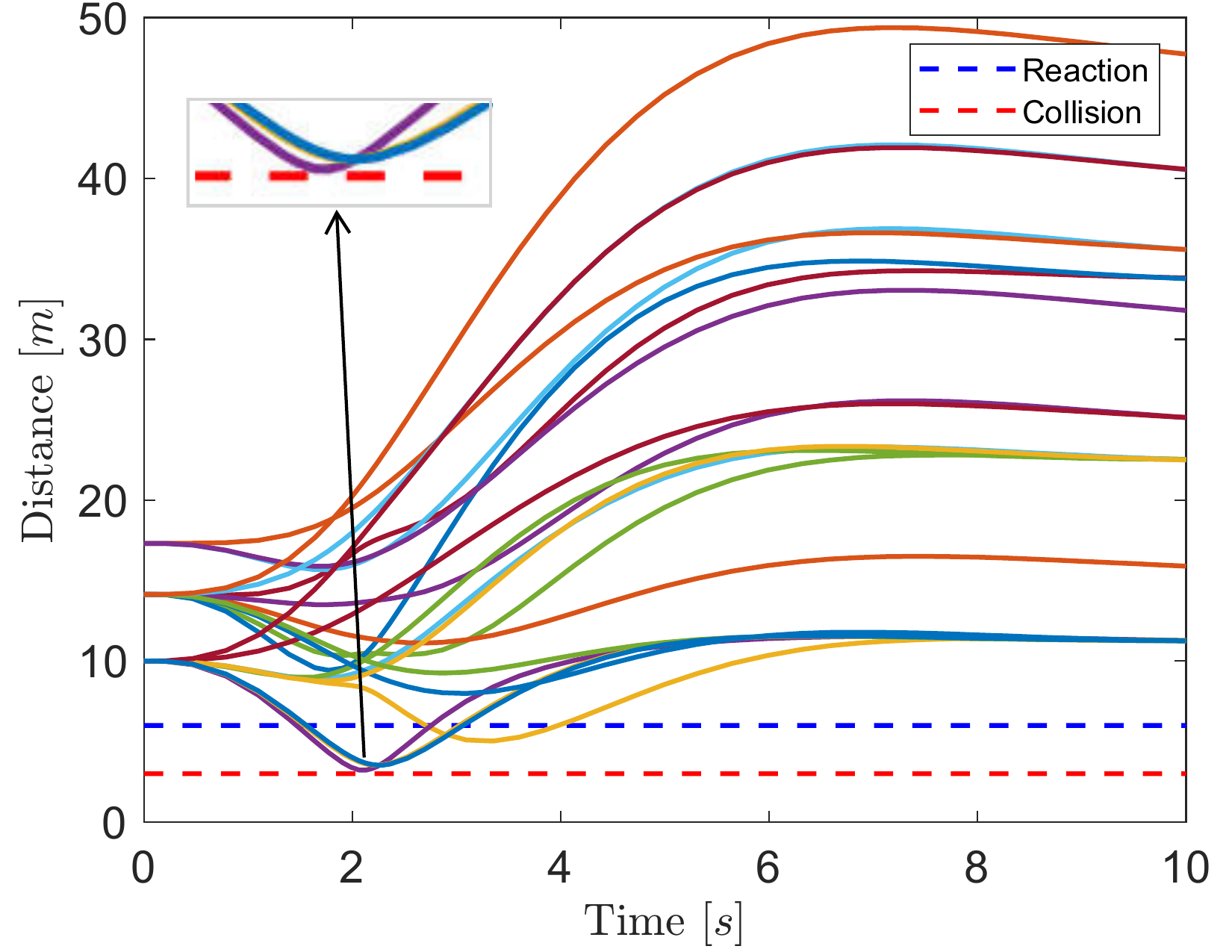}
		\caption{Euclidean distances}
    \end{subfigure}
    \caption{Time histories of positions, velocities, accelerations, jerks, attitudes with their rates, and Euclidean distances for formation tracking trajectories of the seven-UAV team utilizing the directionally aware collision avoidance strategy.}
\label{fig:aware-history7}
\end{figure}

The trajectory in Fig.~\ref{fig:aware-history7} may still be impractical for some physical UAVs due to actuator constraints. However, the proposed formation control design can adapt to these limitations. For instance, reducing the weighting parameters \(\mu_{ij}\) of the formation performance index improves trajectory feasibility. Fig.~\ref{fig:aware-history7_slow} shows the time histories of trajectories using the directionally aware collision avoidance strategy from Fig.~\ref{fig:aware-history7}, with \(\mu_{ij}\) adjusted to 10\% of their original values. The improved feasibility is evident from the reduced magnitudes of velocities, accelerations, jerks, attitudes, and their rates.

To validate the feasibility of the planned trajectories, a simulation experiment was conducted using the high-fidelity Gazebo simulator in conjunction with the MRS UAV system\footnote{\url{https://github.com/ctu-mrs/mrs_uav_system}}.
The simulation incorporates full system dynamics, aerodynamic effects, and sensor noise while utilizing the same control and estimation software as the real platforms.
Over years of validation through extensive use and comparisons with real-world experiments, it has demonstrated high accuracy, rendering the simulation-to-reality gap negligible \cite{Baca2021}.
The simulation was performed using UAV platforms based on the DJI Flame Wheel F450.

Snapshots of Gazebo simulator-based formation trajectories and their time histories under the directionally aware collision avoidance tracking strategy with improved feasibility are shown in Figs.~\ref{fig:aware7_slow}-\ref{fig:aware-history7_slow}. The reference trajectories from the latest example (shown in green in Fig.~\ref{fig:aware7_slow}) generate feasible control commands suitable for direct hardware execution~\cite{baca2018model,petrlik2020robust}. By leveraging this high-fidelity simulator, we validated the feasibility of the formation control scheme in a realistic simulation environment, ensuring minimal discrepancies between simulation and real-world implementation. These results further substantiate the effectiveness of the proposed formation trajectory planning and tracking framework, demonstrating its potential for the deployment of real-world UAV teams.

\begin{figure}[htbp]
	\centering
	\begin{subfigure}{0.3\linewidth}
		\includegraphics[width=\linewidth]{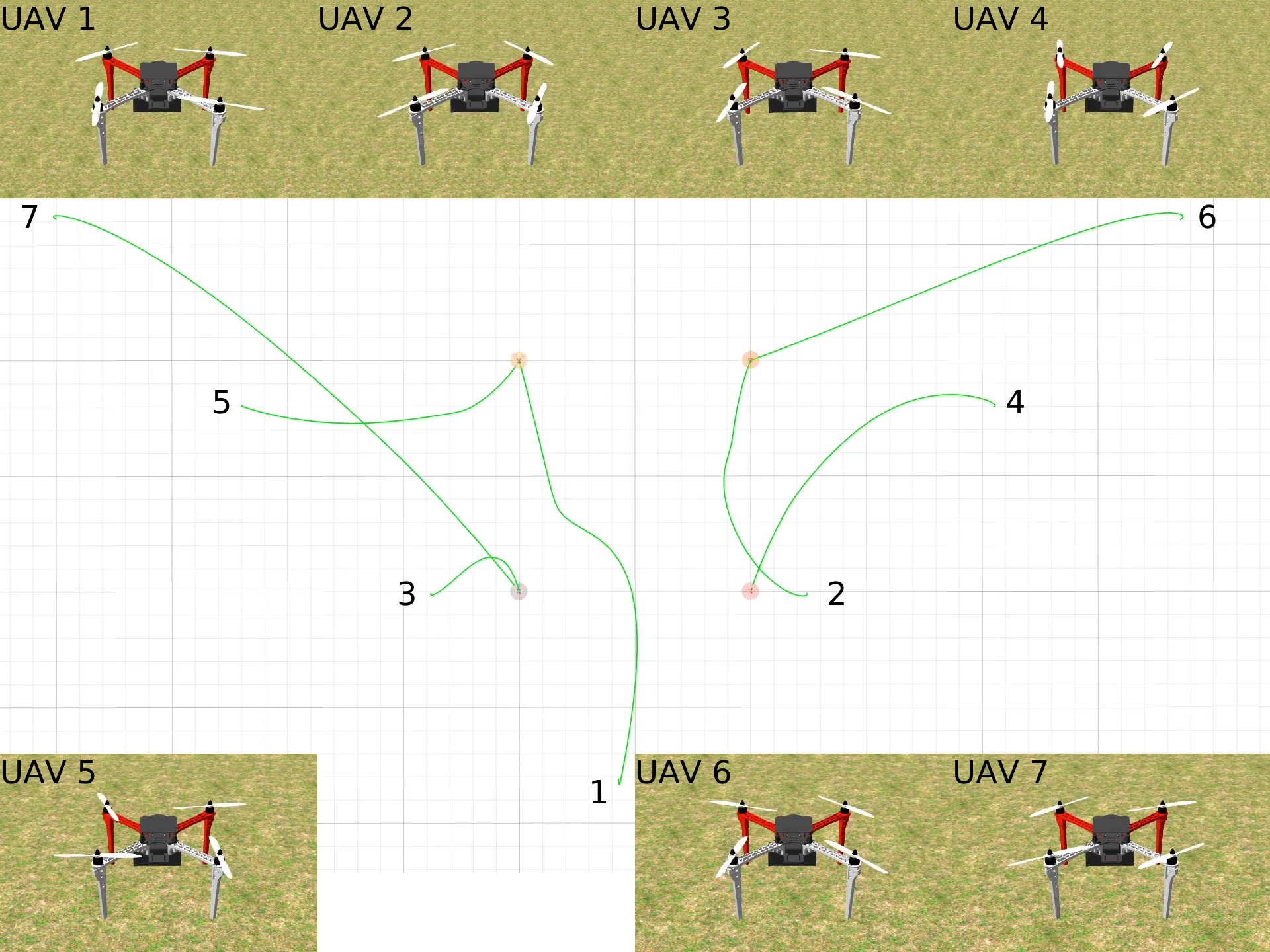}
		\caption{\(t=0\,\mathrm{s}\)}
	\end{subfigure}
    \hspace{0.025\linewidth}
    \begin{subfigure}{0.3\linewidth}
		\includegraphics[width=\linewidth]{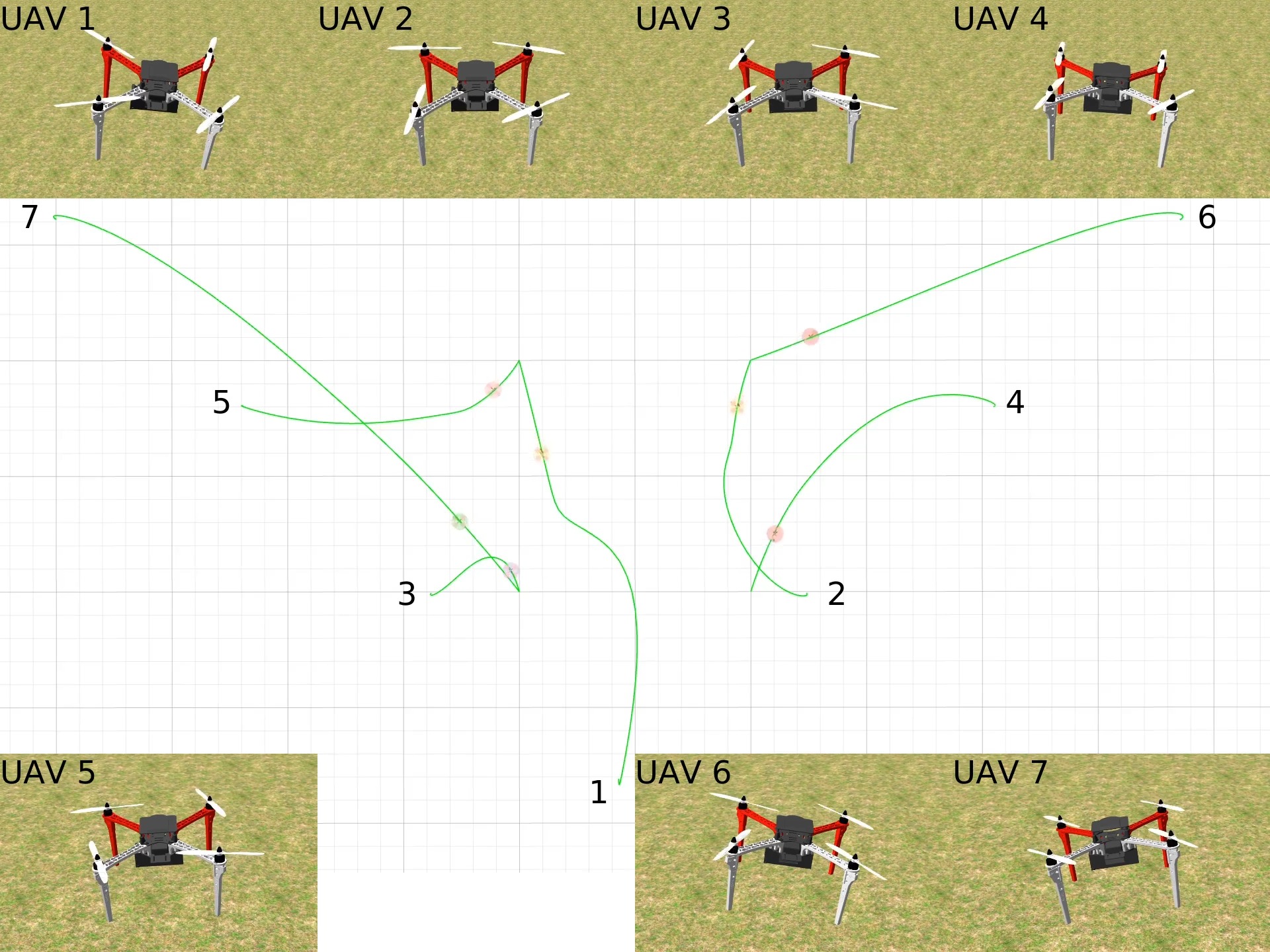}
		\caption{\(t=2\,\mathrm{s}\)}
	\end{subfigure}
    \hspace{0.025\linewidth}
    \begin{subfigure}{0.3\linewidth}
		\includegraphics[width=\linewidth]{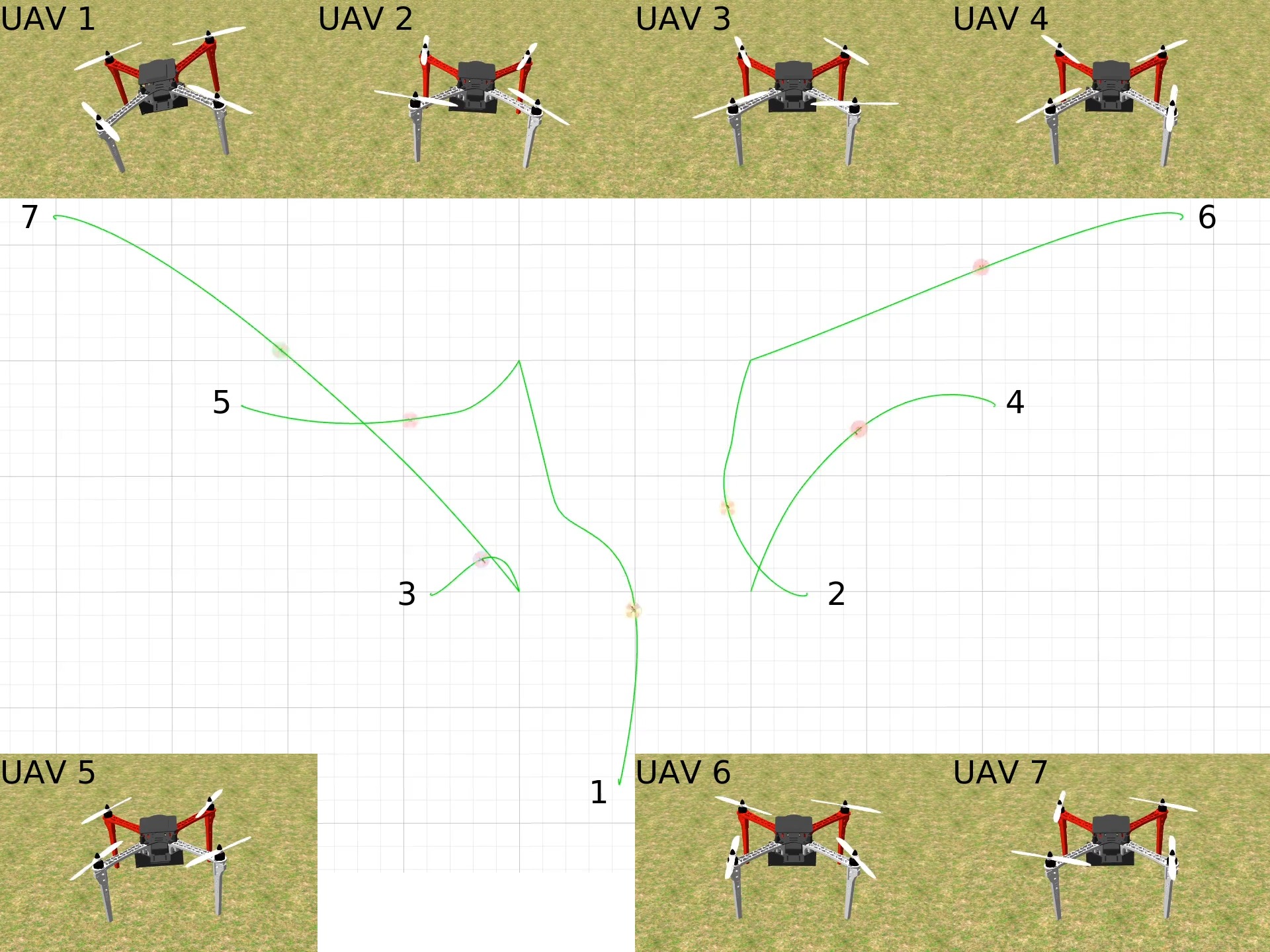}
		\caption{\(t=4\,\mathrm{s}\)}
	\end{subfigure}
    \begin{subfigure}{0.3\linewidth}
		\includegraphics[width=\linewidth]{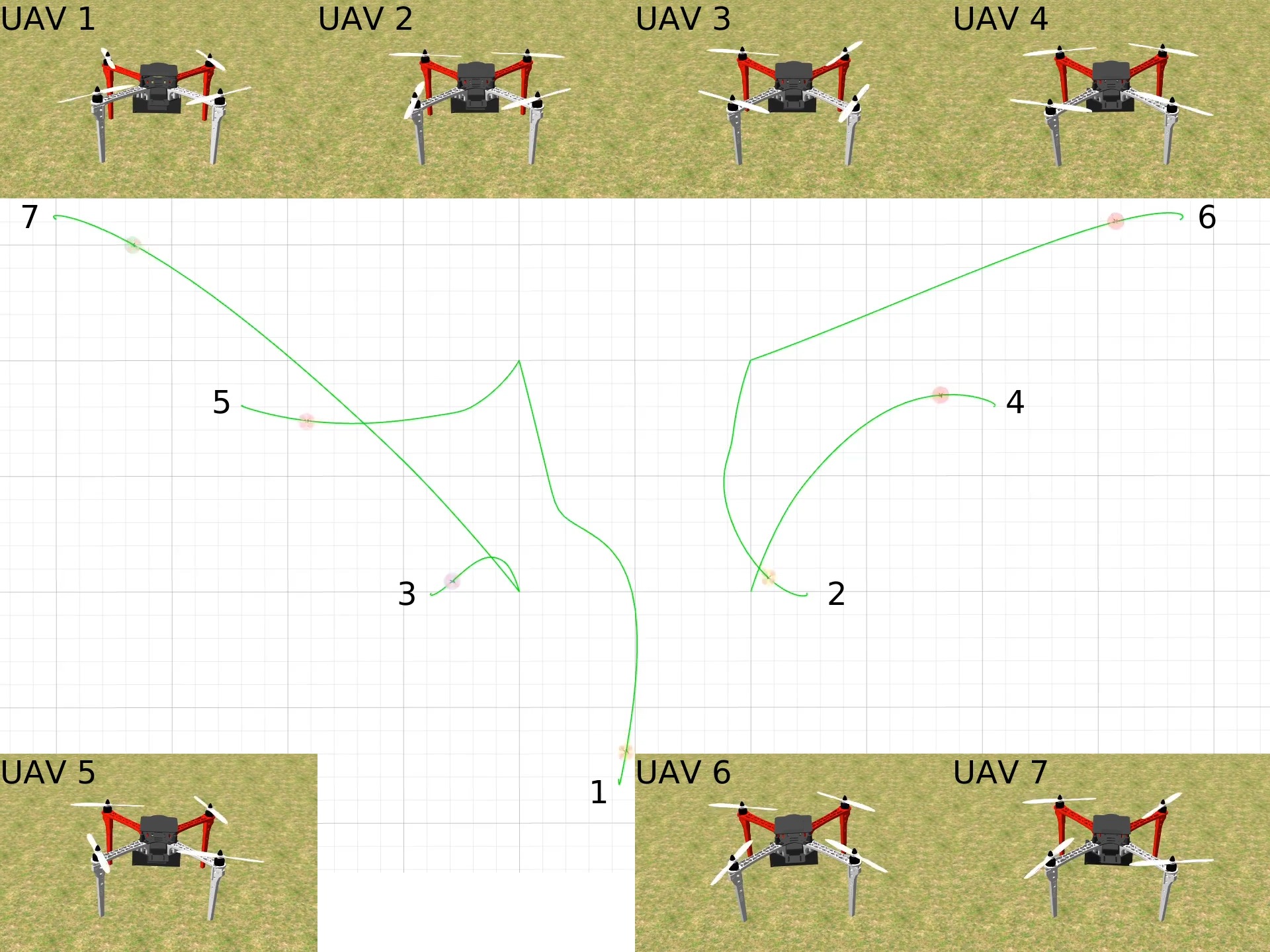}
		\caption{\(t=6\,\mathrm{s}\)}
	\end{subfigure}
     \hspace{0.025\linewidth}
	\begin{subfigure}{0.3\linewidth}
		\includegraphics[width=\linewidth]{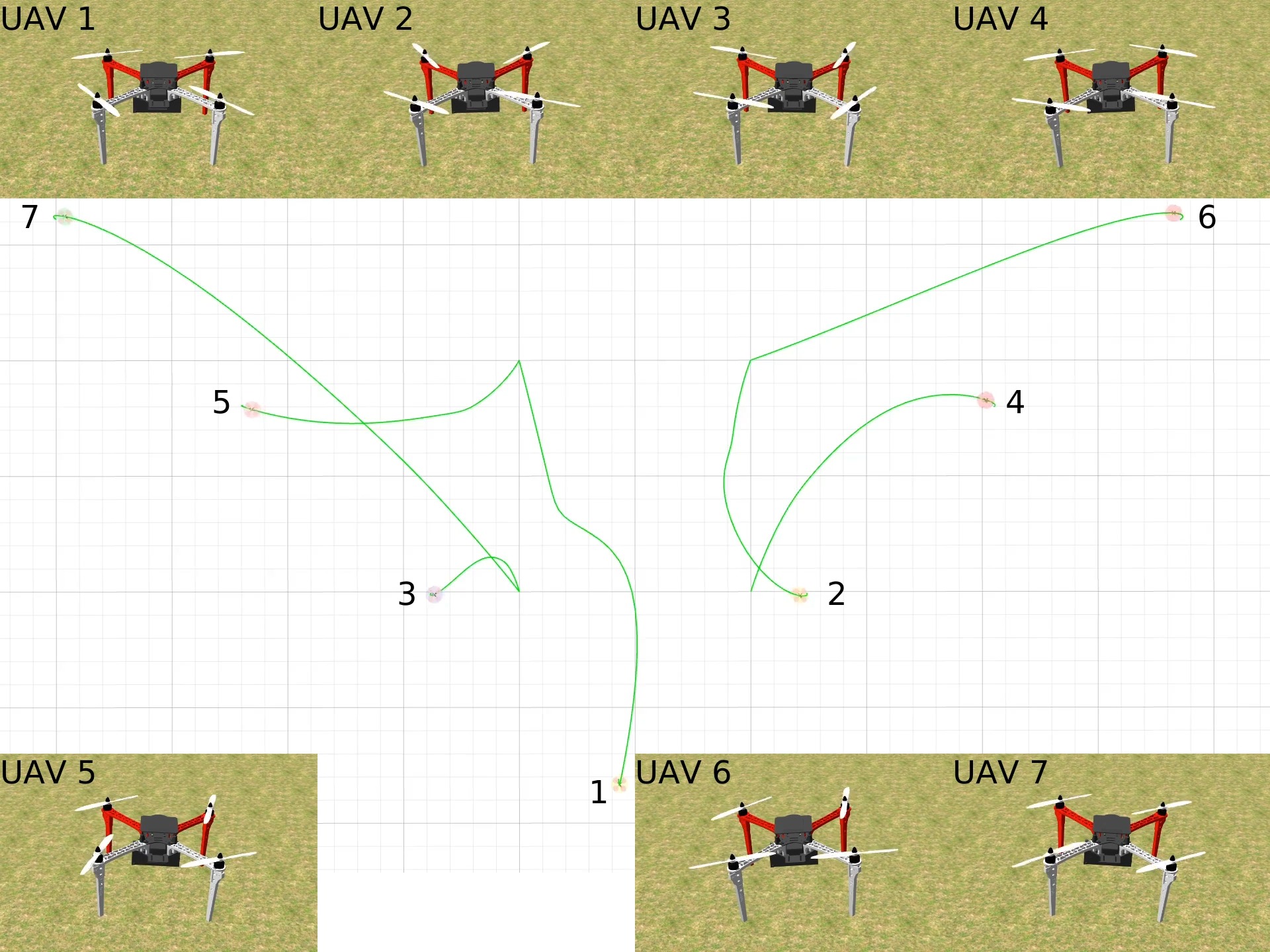}
		\caption{\(t=8\,\mathrm{s}\)}
	\end{subfigure}
     \hspace{0.025\linewidth}
    \begin{subfigure}{0.3\linewidth}
		\includegraphics[width=\linewidth]{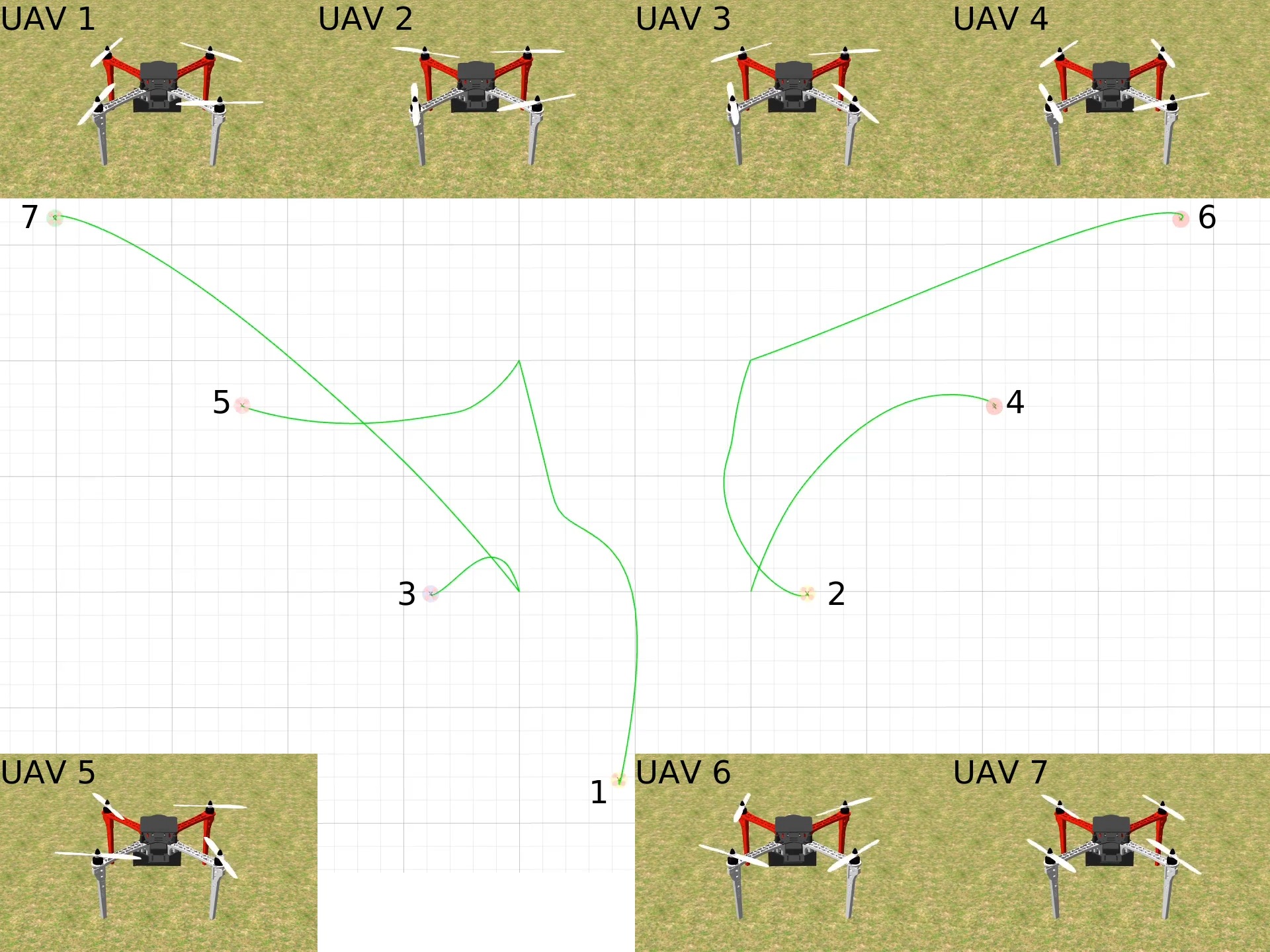}
		\caption{\(t=10\,\mathrm{s}\)}
	\end{subfigure}
\caption{Snapshots of a seven-UAV team tracking the formation trajectories using the directionally aware collision avoidance strategy with improved feasibility (green) in the high fidelity Gazebo simulator using MRS UAV System.}
\label{fig:aware7_slow}
\end{figure}

\begin{figure}[htbp]
	\centering
	\begin{subfigure}{0.24\linewidth}
		\includegraphics[width=\linewidth]{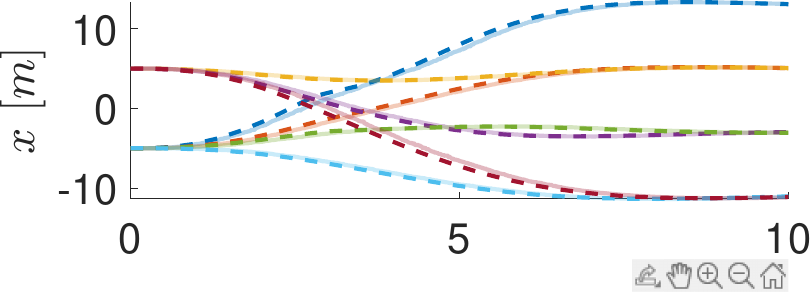}
        \includegraphics[width=\linewidth]{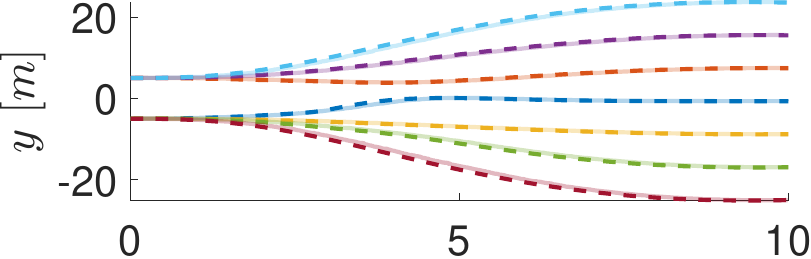}
        \includegraphics[width=\linewidth]{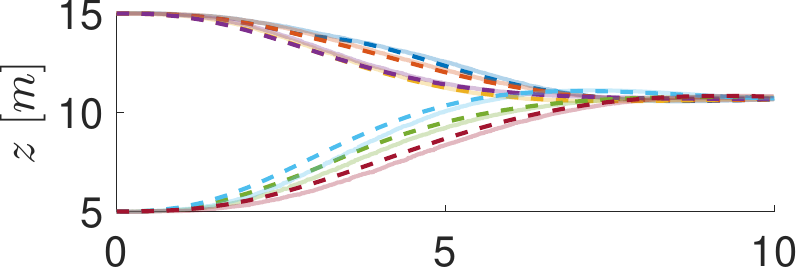}
        \includegraphics[width=\linewidth]{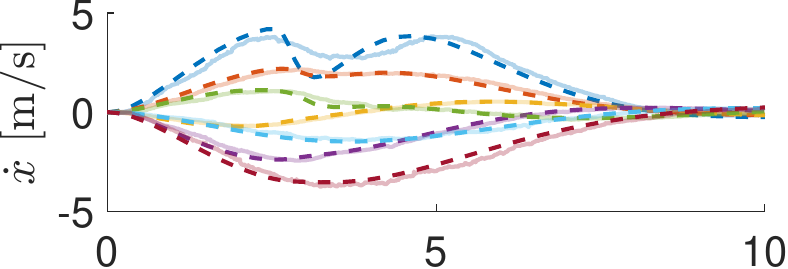}
        \includegraphics[width=\linewidth]{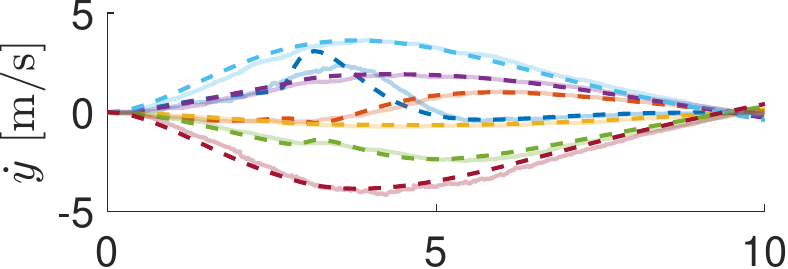}
        \includegraphics[width=\linewidth]{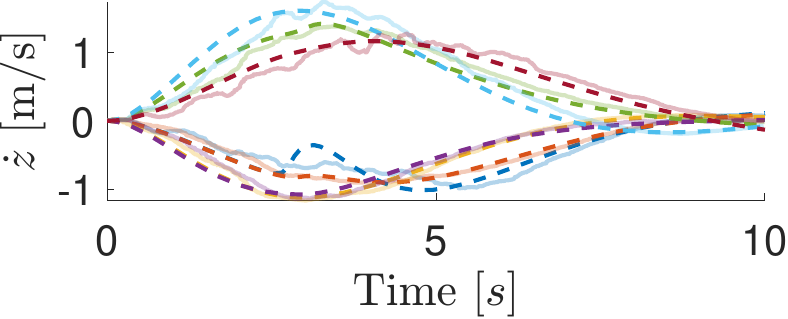}
		\caption{Positions and velocities}
	\end{subfigure}
    \begin{subfigure}{0.24\linewidth}
		\includegraphics[width=\linewidth]{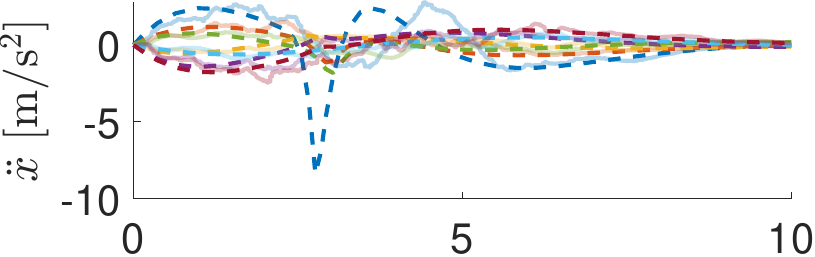}
        \includegraphics[width=\linewidth]{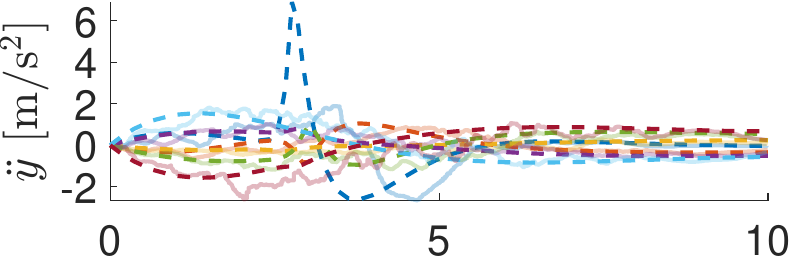}
        \includegraphics[width=\linewidth]{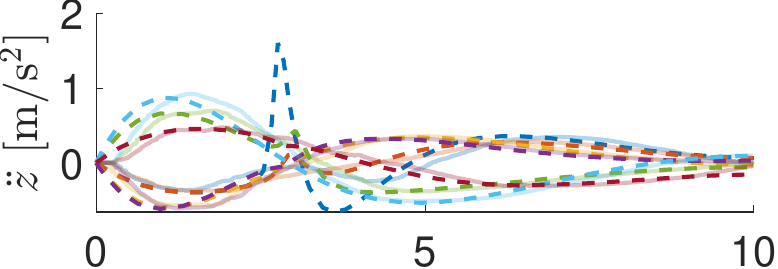}
        \includegraphics[width=\linewidth]{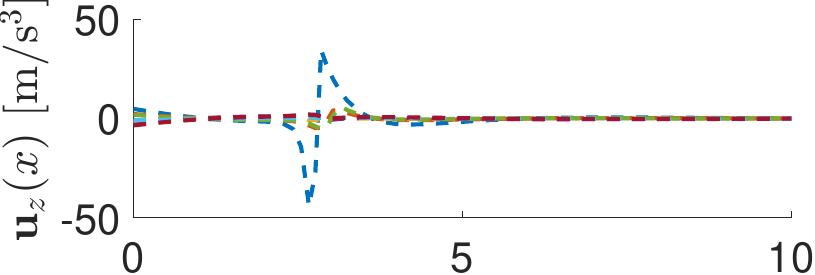}
        \includegraphics[width=\linewidth]{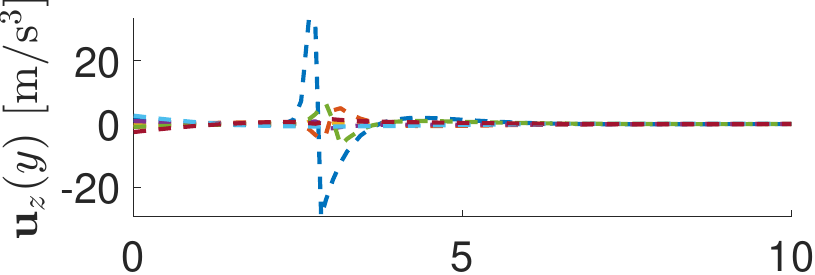}
        \includegraphics[width=\linewidth]{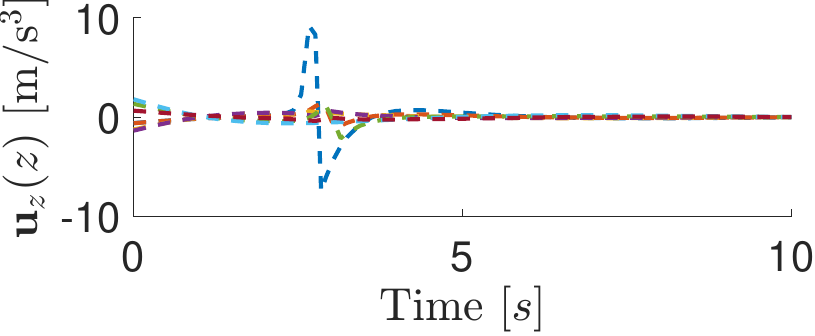}
		\caption{Accelerations and \(\mathbf{u}_{\mathbf{z}}\)}
	\end{subfigure}
    \begin{subfigure}{0.24\linewidth}
		\includegraphics[width=\linewidth]{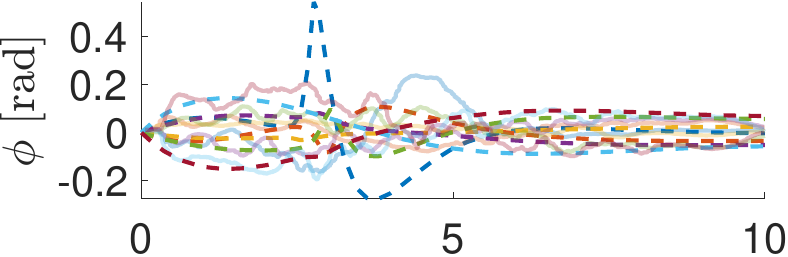}
        \includegraphics[width=\linewidth]{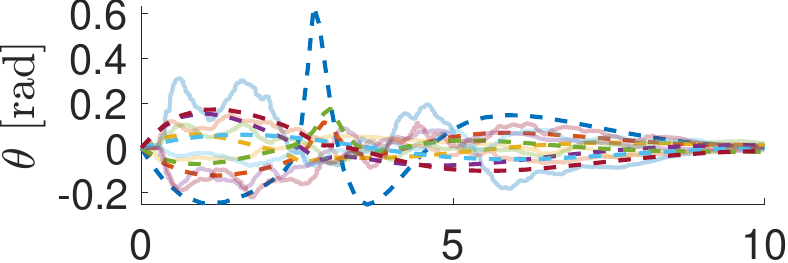}
        \includegraphics[width=\linewidth]{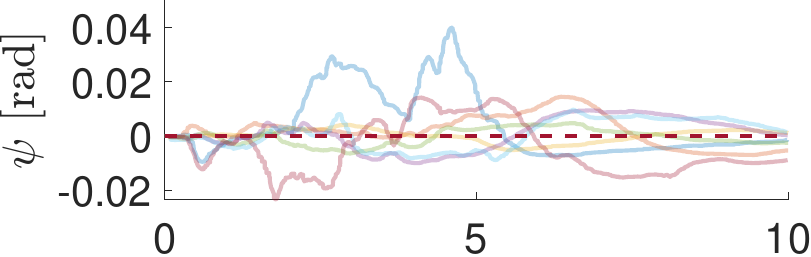}
        \includegraphics[width=\linewidth]{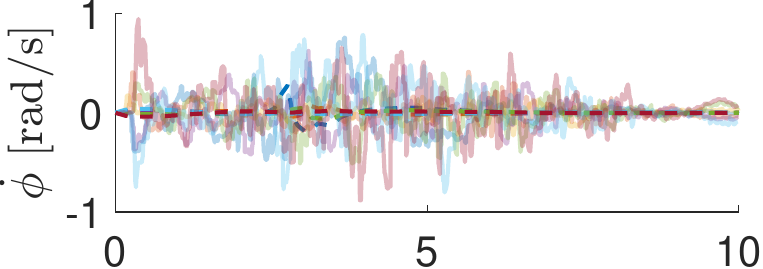}
        \includegraphics[width=\linewidth]{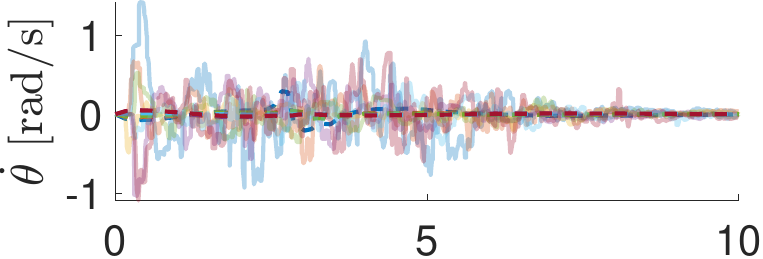}
        \includegraphics[width=\linewidth]{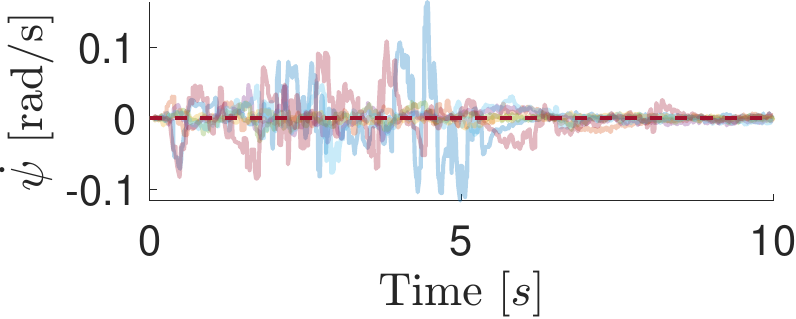}
		\caption{Attitudes with rates}
	\end{subfigure}
    \begin{subfigure}{0.45\linewidth}
	          \includegraphics[width=\linewidth]{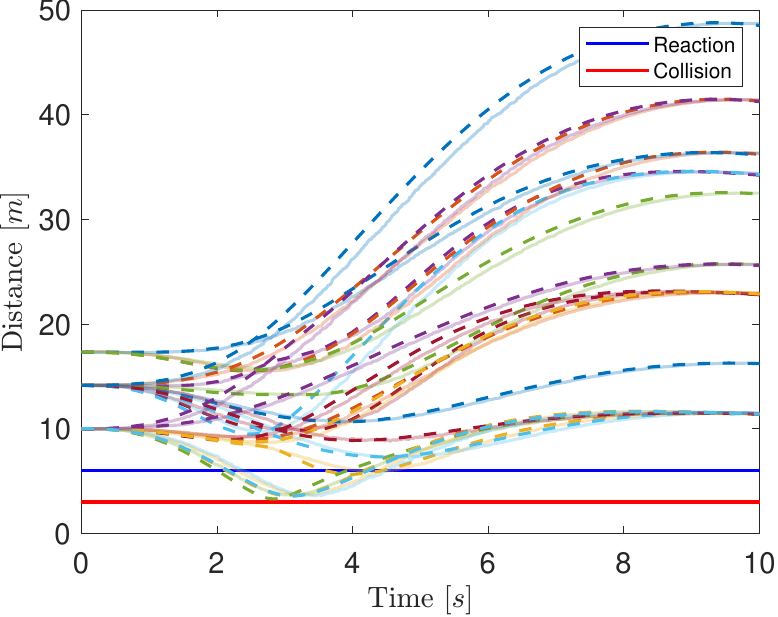}
		\caption{Euclidean distances}
    \end{subfigure}
    \caption{Time histories of positions, velocities, accelerations, jerks, attitudes with their rates, and Euclidean distances for the seven-UAV team formation trajectories utilizing the directionally aware collision avoidance strategy with improved feasibility. The faded lines correspond to the simulation results from the ROS-Gazebo-based simulator, each matching the color of its reference dashed line. }
\label{fig:aware-history7_slow}
\end{figure}

\section{Conclusion}\label{sec:con}
This paper presents a collision-free, finite-time formation control scheme for multiple UAVs based on the differential flatness property of UAV dynamics. Unlike traditional methods that rely on a leader, predefined tracking signals, or waypoints, our approach utilizes the initial states of the UAVs to plan optimal formation trajectories based on consensus. Additionally, we introduce a directionally aware collision avoidance strategy that prioritizes avoiding collisions both in the forward path and the relative approach for each UAV, promoting adaptive behavior and enhancing efficiency. By formulating and solving a finite-time optimal control problem, we derive trajectories that optimize a collective performance index, ensuring efficient formation and movement of the UAV team. To maintain collision-free tracking of the formation trajectory, we incorporate safety guarantees into the tracking controller when necessary, striking a balance between optimality and safety. High-fidelity simulations are performed using the ROS-Gazebo-based simulator. The results confirm the effectiveness of our control scheme, demonstrating successful formation trajectory planning and formation trajectory tracking while avoiding collisions. However, the assumption of a complete communication graph for global collision awareness may pose scalability challenges for larger UAV teams. Future work could address this limitation by developing distributed control strategies that enable real-time, directionally aware, collision-free trajectory planning and tracking, using onboard sensors such as UVDAR~\cite{8651535}. 

\section*{Acknowledgments} This work was partially funded by the Czech Science Foundation (GAČR) under research project no. \(\mathrm{23-07517S}\) and the European Union under the project Robotics and Advanced Industrial Production (reg. no. \(\mathrm{CZ.02.01.01/00/22\_008/0004590}\)), Czech Technical University in Prague (CTU) within the CTU Global Postdoc Fellowship program, and CTU grant no. \(\mathrm{SGS23/177/OHK3/3T/13}\).

 \appendix
 \section{proof of theorem~\ref{theorem:sol-open}} \label{app:proof}
\begin{proof}
Define the Hamiltonian 
\begin{align}
    H=\mathbf{r}^\top(t) \mathbf{Q} \mathbf{r}(t)&+\mathbf{u}_{\mathbf{r}}^\top(t) \mathbf{R} \mathbf{u}_{\mathbf{r}}(t)+\boldsymbol{\psi}^\top(t)(\mathbf{A}\mathbf{r}(t)+\mathbf{B}\mathbf{u}_{\mathbf{r}}(t)),
\end{align}
where $\boldsymbol{\psi}(t)$ is co-state. According to Pontryagin’s principle, the necessary conditions for optimality are $\frac{\partial H}{\partial \mathbf{u}_{\mathbf{r}}}=0$ and $\dot{\boldsymbol{\psi}}(t)=-\frac{\partial H}{\partial \mathbf{r}}$. Applying the necessary conditions yields
\begin{align}
     &\mathbf{u}_{\mathbf{r}}(t)=-\mathbf{R}^{-1}\mathbf{B}^\top\boldsymbol{\psi}(t), \label{eq:xi} \\
    &\dot{\boldsymbol{\psi}}(t)=-\mathbf{Q}\mathbf{r}(t)-\mathbf{A}^\top\boldsymbol{\psi}(t) ,\quad\boldsymbol{\psi}(t_f)=\mathbf{Q}_f \mathbf{r}(t_f). \label{eq:psi}
\end{align}

Note that (\ref{eq:PI}) is a strictly convex function of $\mathbf{u}_{\mathbf{r}}(t)$ for all admissible control functions, and the dynamics are linear. Therefore, the conditions obtained from Pontryagin’s principle are both necessary and sufficient~\cite{ENGWERDA1998729,ledzewicz2022pitfalls}. 

Substituting (\ref{eq:xi}) into the state dynamics (\ref{eq:dynamics}), we have
\begin{equation} \label{eq:dynamics-flat-aug2}
    \dot{\mathbf{r}}(t)=\mathbf{A}\mathbf{r}(t)-\mathbf{B}\mathbf{R}^{-1}\mathbf{B}^\top\boldsymbol{\psi}(t),\quad \mathbf{r}(0)=\mathbf{r}_0.
\end{equation}
Equations (\ref{eq:dynamics-flat-aug2}) and (\ref{eq:psi}) are unified into the following differential equation
\begin{equation}\label{eq:unified}
    \begin{bmatrix}\dot{\mathbf{r}}(t) \\ \dot{\boldsymbol{\psi}}(t)\end{bmatrix}=\mathbf{M}\begin{bmatrix}\mathbf{r}(t) \\ \boldsymbol{\psi}(t)\end{bmatrix}, \quad 
\end{equation}
where $\mathbf{M}$ is given by (\ref{eq:M}).
Moreover, the initial and terminal conditions are unified as
\begin{equation}\label{eq:condition}
      \begin{bmatrix}
     \mathbf{I} & \mathbf{0} \\ \mathbf{0} & \mathbf{0}
 \end{bmatrix}\begin{bmatrix}\mathbf{r}(0) \\ \boldsymbol{\psi}(0)\end{bmatrix}+\begin{bmatrix}
     \mathbf{0} & \mathbf{0} \\ -\mathbf{Q}_f & \mathbf{I}
 \end{bmatrix}\begin{bmatrix}\mathbf{r}(t_f) \\ \boldsymbol{\psi}(t_f)\end{bmatrix}=\begin{bmatrix}\mathbf{r}_0 \\ \mathbf{0}\end{bmatrix}.
\end{equation}

The solution of (\ref{eq:unified}) at $t_f$ is given by
\begin{align}\label{eq:soltf}
      \begin{bmatrix}\mathbf{r}(t_f) \\ \boldsymbol{\psi}(t_f)\end{bmatrix}&=\exp{(t_f\mathbf{M})}\begin{bmatrix}\mathbf{r}(0) \\ \boldsymbol{\psi}(0)\end{bmatrix}.
\end{align}

Substituting (\ref{eq:soltf}) into (\ref{eq:condition}), we obtain
\begin{align}\label{eq:condition2}
  \Bigg(\begin{bmatrix}
     \mathbf{I} & \mathbf{0} \\ \mathbf{0} & \mathbf{0}
 \end{bmatrix}+\begin{bmatrix}
     \mathbf{0} & \mathbf{0} \\ -\mathbf{Q}_f & \mathbf{I}
 \end{bmatrix}\exp{(t_f\mathbf{M})}\Bigg)\begin{bmatrix}\mathbf{r}(0) \\ \boldsymbol{\psi}(0)\end{bmatrix}=\begin{bmatrix}\mathbf{r}_0 \\ \mathbf{0}\end{bmatrix},
\end{align}
or equivalently, 
\begin{align}\label{eq:condition3}
  &\Bigg(\begin{bmatrix}
     \mathbf{I} & \mathbf{0} \\ \mathbf{0} & \mathbf{0}
 \end{bmatrix}\exp{(-t_f\mathbf{M})}+\begin{bmatrix}
     \mathbf{0} & \mathbf{0} \\ -\mathbf{Q}_f & \mathbf{I}
 \end{bmatrix}\Bigg)\exp{(t_f\mathbf{M})}\begin{bmatrix}\mathbf{r}(0) \\ \boldsymbol{\psi}(0)\end{bmatrix}=\begin{bmatrix}\mathbf{r}_0 \\ \mathbf{0}\end{bmatrix}.
\end{align} 

Let 
\begin{equation} \label{eq:expM}
    \exp{(-t_f\mathbf{M})}=\begin{bmatrix}
     \mathbf{C}_{11} & \mathbf{C}_{12} \\ \mathbf{C}_{21} & \mathbf{C}_{22}
 \end{bmatrix}.
\end{equation}
Multiplying both sides of (\ref{eq:condition3}) by $\begin{bmatrix}
        \mathbf{I} & -\mathbf{C}_{12} \\ \mathbf{0} & \mathbf{I} 
    \end{bmatrix}$, we get
\begin{align}\label{eq:unified5}
  &\begin{bmatrix}
     \mathbf{C}_{11}+\mathbf{C}_{12}\mathbf{Q}_f & \mathbf{0} \\ -\mathbf{Q}_f & \mathbf{I}
 \end{bmatrix}\exp{(t_f\mathbf{M})}\begin{bmatrix}\mathbf{r}(0) \\ \boldsymbol{\psi}(0)\end{bmatrix}=\begin{bmatrix}\mathbf{r}_0 \\ \mathbf{0}\end{bmatrix},
\end{align}
or equivalently, 
\begin{align}\label{eq:unifieds}
  &\begin{bmatrix}
     \mathbf{C}_{11}+\mathbf{C}_{12}\mathbf{Q}_f & \mathbf{0} \\ -\mathbf{Q}_f & \mathbf{I}
 \end{bmatrix}\begin{bmatrix}\mathbf{r}(t_f) \\ \boldsymbol{\psi}(t_f)\end{bmatrix}=\begin{bmatrix}\mathbf{r}_0 \\ \mathbf{0}\end{bmatrix}.
\end{align}

Substituting (\ref{eq:expM}) into (\ref{eq:Ht}) yields
\begin{align}\label{eq:Ht1}
    \mathbf{H}(t_f)=\mathbf{C}_{11}+\mathbf{C}_{12}\mathbf{Q}_f.
\end{align}
Thus, (\ref{eq:unified5}) is rewritten as 
\begin{align}\label{eq:unified6}
  &\begin{bmatrix}
     \mathbf{H}(t_f) & \mathbf{0} \\ -\mathbf{Q}_f & \mathbf{I}
 \end{bmatrix}\begin{bmatrix}\mathbf{r}(t_f) \\ \boldsymbol{\psi}(t_f)\end{bmatrix}=\begin{bmatrix}\mathbf{r}_0 \\ \mathbf{0}\end{bmatrix}.
\end{align}
From (\ref{eq:unified6}), we have
\begin{align}\label{eq:unifiedd}
  \begin{bmatrix}\mathbf{r}(t_f) \\ \boldsymbol{\psi}(t_f)\end{bmatrix}&=\begin{bmatrix}
     \mathbf{H}(t_f) & \mathbf{0} \\ -\mathbf{Q}_f & \mathbf{I}
 \end{bmatrix}^{-1}\begin{bmatrix}\mathbf{r}_0 \\ \mathbf{0}\end{bmatrix}=\begin{bmatrix}
     \mathbf{H}^{-1}(t_f) & \mathbf{0} \\ \mathbf{Q}_f\mathbf{H}^{-1}(t_f) & \mathbf{I}
 \end{bmatrix}\begin{bmatrix}\mathbf{r}_0 \\ \mathbf{0}\end{bmatrix}\nonumber\\&=\begin{bmatrix}
     \mathbf{H}^{-1}(t_f)\mathbf{r}_0  \\ \mathbf{Q}_f\mathbf{H}^{-1}(t_f)\mathbf{r}_0 
 \end{bmatrix}=\begin{bmatrix}
     \mathbf{I}  \\ \mathbf{Q}_f
 \end{bmatrix}\mathbf{H}^{-1}(t_f)\mathbf{r}_0,
\end{align}
Substituting (\ref{eq:unifiedd}) into (\ref{eq:soltf}) and rearranging it, we get 
\begin{align}\label{eq:unified7}
  &\begin{bmatrix}\mathbf{r}(0) \\ \boldsymbol{\psi}(0)\end{bmatrix}=\exp{(-t_f\mathbf{M})}\begin{bmatrix}
     \mathbf{I}  \\ \mathbf{Q}_f
 \end{bmatrix}\mathbf{H}^{-1}(t_f)\mathbf{r}_0.
\end{align}

Substituting (\ref{eq:unified7}) into the solution of (\ref{eq:unified}) at $t$ yields
\begin{align}\label{eq:unified8}
      \begin{bmatrix}\mathbf{r}(t) \\ \boldsymbol{\psi}(t)\end{bmatrix}&=\exp{(t\mathbf{M})}\begin{bmatrix}\mathbf{r}(0) \\ \boldsymbol{\psi}(0)\end{bmatrix}=\exp{((t-t_f)\mathbf{M})}\begin{bmatrix}
     \mathbf{I}  \\ \mathbf{Q}_f
 \end{bmatrix}\mathbf{H}^{-1}(t_f)\mathbf{r}_0.
\end{align}
From (\ref{eq:unified8}), we have
\begin{align}
     & \mathbf{r}(t) =\begin{bmatrix}
     \mathbf{I}  & \mathbf{0}
 \end{bmatrix}\exp{((t-t_f)\mathbf{M})}\begin{bmatrix}
     \mathbf{I}  \\ \mathbf{Q}_f
 \end{bmatrix}\mathbf{H}^{-1}(t_f)\mathbf{r}_0, \\
 & \boldsymbol{\psi}(t) =\begin{bmatrix}
     \mathbf{0}  & \mathbf{I}
 \end{bmatrix}\exp{((t-t_f)\mathbf{M})}\begin{bmatrix}
     \mathbf{I}  \\ \mathbf{Q}_f
 \end{bmatrix}\mathbf{H}^{-1}(t_f)\mathbf{r}_0.
\end{align}
Using the notation in (\ref{eq:Ht}) and (\ref{eq:Gt}), we get (\ref{eq:open-con}) and (\ref{eq:open-traj}). This concludes the proof. 
\end{proof}

\section{stability analysis of forward-path-aware collision avoidance control} \label{app:dir}

This appendix analyzes the stability of the collision avoidance control strategy under the forward-path-aware penalty function \( v_{ij}^F \), as introduced in Subsection~\ref{sec:forward}. Specifically, it demonstrates that the control inputs remain bounded without relying on Assumption~\ref{assumption:constancy}, ensuring the stability of the directionally aware collision avoidance framework. The analysis focuses on the gradient of the penalty function and its impact on the control law in~\eqref{eq:feedback}.

Recall the collision avoidance control strategy~\eqref{eq:feedback} under the forward-path-aware penalty function as follows,
\begin{align} \label{eq:con_alpha}
   \mathbf{u}_{\mathbf{z}}(t) =-\mathbf{R}_{\mathbf{z}}^{-1}\mathbf{B}^\top \mathbf{P}(t) \mathbf{z}(t)-\mathbf{R}_{\mathbf{z}}^{-1}\mathbf{B}^\top \frac{\partial V^F}{\partial \mathbf{z}}^\top,
\end{align}
where \( V_i^F = \sum_{j \in \mathcal{N}_i} v_{ij}^F = \sum_{j \in \mathcal{N}_i} \alpha_{ij} v_{ij} \) for UAV \( i \) and the total penalty is \( V^F = \sum_i V_i^F \).

Then, we have
\begin{equation}\label{eq:grad}
    \frac{\partial V_i^F}{\partial \mathbf{z}_i}=\sum_{j\in\mathcal{N}_i}\big(\alpha_{ij}\underbrace{\frac{\partial v_{ij}}{\partial \mathbf{z}_i}}_{\text{original gradient}}+v_{ij}\underbrace{\frac{\partial \alpha_{ij}}{\partial \mathbf{z}_i}}_{\text{directional adjustment}}\big).
\end{equation}
The gradient decomposition in~\eqref{eq:grad} shows how directional awareness modifies the collision response. The \textit{original gradient} term maintains baseline collision avoidance, while the \textit{directional adjustment} term modulates responses based on the forward-path awareness. 

To ensure stability and avoid unbounded control inputs, we must guarantee that the gradient of the total penalty \(\frac{\partial V^F}{\partial \mathbf{z}}\) does not grow indefinitely.

The total penalty $V$ is guaranteed to be non-increasing because of bounded components $\alpha_{ij}$ and $v_{ij}$ (where $\alpha_{ij} \in [0,1]$ and $v_{ij} \to 0$ as $\|\mathbf{p}_j-\mathbf{p}_i\| \to \infty$) and vanishing gradients
    \begin{equation*}
        \left\|\frac{\partial V_i^F}{\partial \mathbf{z}_i}\right\| \leq \sum_{j \in \mathcal{N}_i} \left(\alpha_{ij}\left\|\frac{\partial v_{ij}}{\partial \mathbf{z}_i}\right\| + v_{ij}\left\|\frac{\partial \alpha_{ij}}{\partial \mathbf{z}_i}\right\|\right),
    \end{equation*}
    where \(\frac{\partial v_{ij}}{\partial \mathbf{z}_i}\) is proven to be bounded in Lemma~\ref{lma:ICs}, and for \(\frac{\partial \alpha_{ij}}{\partial \mathbf{z}_i}\), its bounds are derived as follows.

For $\alpha_{ij} = \frac{(\mathbf{p}_j - \mathbf{p}_i)^\top \mathbf{v}_i}{\|\mathbf{p}_j - \mathbf{p}_i\| \|\mathbf{v}_i\|}$ (when $\|\mathbf{v}_i\| \neq 0$ and $\cos\eta_{ij} > 0$), the partial position and velocity derivatives are
\begin{align*}
    &\frac{\partial \alpha_{ij}}{\partial \mathbf{p}_i} = -\frac{\mathbf{v}_i}{\|\mathbf{p}_j - \mathbf{p}_i\| \|\mathbf{v}_i\|} + \frac{(\mathbf{p}_j - \mathbf{p}_i)^\top \mathbf{v}_i}{\|\mathbf{p}_j - \mathbf{p}_i\|^3 \|\mathbf{v}_i\|}(\mathbf{p}_j - \mathbf{p}_i),\\
    &\frac{\partial \alpha_{ij}}{\partial \mathbf{v}_i} = \frac{\mathbf{p}_j - \mathbf{p}_i}{\|\mathbf{p}_j - \mathbf{p}_i\| \|\mathbf{v}_i\|} - \frac{(\mathbf{p}_j - \mathbf{p}_i)^\top \mathbf{v}_i}{\|\mathbf{p}_j - \mathbf{p}_i\| \|\mathbf{v}_i\|^3}\mathbf{v}_i.
\end{align*}
Taking their Euclidean norm, we have
\begin{align*}
\left\|\frac{\partial \alpha_{ij}}{\partial \mathbf{p}_i}\right\| &\leq \left\|-\frac{\mathbf{v}_i}{\|\mathbf{p}_j - \mathbf{p}_i\| \|\mathbf{v}_i\|}\right\| + \left\|\frac{(\mathbf{p}_j - \mathbf{p}_i)^\top \mathbf{v}_i}{\|\mathbf{p}_j - \mathbf{p}_i\|^3 \|\mathbf{v}_i\|}(\mathbf{p}_j - \mathbf{p}_i)\right\| \\
&= \frac{1}{\|\mathbf{p}_j - \mathbf{p}_i\|} + \frac{|\cos\eta_{ij}|}{\|\mathbf{p}_j - \mathbf{p}_i\|} \leq \frac{2}{\|\mathbf{p}_j - \mathbf{p}_i\|},
\end{align*}
and
\begin{align*}
\left\|\frac{\partial \alpha_{ij}}{\partial \mathbf{v}_i}\right\| &\leq \left\|\frac{\mathbf{p}_j - \mathbf{p}_i}{\|\mathbf{p}_j - \mathbf{p}_i\| \|\mathbf{v}_i\|}\right\| + \left\|\frac{(\mathbf{p}_j - \mathbf{p}_i)^\top \mathbf{v}_i}{\|\mathbf{p}_j - \mathbf{p}_i\| \|\mathbf{v}_i\|^3}\mathbf{v}_i\right\| \\
&= \frac{1}{\|\mathbf{v}_i\|} + \frac{|\cos\eta_{ij}|}{\|\mathbf{v}_i\|} \leq \frac{2}{\|\mathbf{v}_i\|}.
\end{align*}
Thus, the total derivative norm is bounded by
\begin{equation*}
\left\|\frac{\partial \alpha_{ij}}{\partial \mathbf{z}_i}\right\| \leq \left\|\frac{\partial \alpha_{ij}}{\partial \mathbf{p}_i}\right\| + \left\|\frac{\partial \alpha_{ij}}{\partial \mathbf{v}_i}\right\| \leq \frac{2}{\|\mathbf{p}_j - \mathbf{p}_i\|} + \frac{2}{\|\mathbf{v}_i\|}.
\end{equation*}

Since \( v_{ij} \to 0 \) as \(\|\mathbf{p}_j - \mathbf{p}_i\| \to \infty\), the term \( v_{ij} \frac{\partial \alpha_{ij}}{\partial \mathbf{z}_i} \) in~\eqref{eq:grad} diminishes at large separations, further ensuring that the directional adjustment term does not contribute to unbounded growth in the total gradient.

The boundedness of \(\frac{\partial \alpha_{ij}}{\partial \mathbf{z}_i}\), combined with the boundedness of \(\frac{\partial v_{ij}}{\partial \mathbf{z}_i}\) (as proven in Lemma~\ref{lma:ICs}) and the decay of \( v_{ij} \) as \(\|\mathbf{p}_j - \mathbf{p}_i\| \to \infty\), ensures that the total gradient \(\frac{\partial V_i^F}{\partial \mathbf{z}_i}\) in~\eqref{eq:grad} remains bounded. Consequently, the control input \(\mathbf{u}_{\mathbf{z}}(t)\) in~\eqref{eq:con_alpha} avoids unbounded growth, preserving the stability of the HJB equation and the collision avoidance strategy without Assumption~\ref{assumption:constancy}.

Similar derivations can be applied to the approach-aware penalty function \( v_{ij}^A = \beta_{ij} v_{ij} \) and the unified penalty function \( v_{ij}^U = \xi_{ij} v_{ij} \), as their weighting factors (\(\beta_{ij}\), \(\xi_{ij}\)) are also bounded in \([0, 1]\) and their gradients can be shown to be bounded using analogous techniques, ensuring stability across the directionally aware framework.

\bibliographystyle{IEEEtran}
\bibliography{mybibliography}


\end{document}